\title{A Theoretical Analysis of the Test Error of Finite-Rank Kernel Ridge Regression}
\author{%
  Tin Sum Cheng  \quad Aurelien Lucchi \quad Ivan Dokmani\'{c} \\
  Department of Mathematics and Computer Science\\
  University of Basel\\
  \texttt{\{tinsum.cheng, aurelien.lucchi, ivan.dokmanic\}@unibas.ch} \\
  \And
 Anastasis Kratsios\\
 Department of Mathematics and Statistics, \\
 McMaster University and Vector Institute, \\ 
 \texttt{kratsioa@mcmaster.ca}
 \And
 David Belius\\
Faculty of Mathematics and Computer Science\\
UniDistance Suisse\\
\texttt{david.belius@cantab.ch}
}
\begin{document}

\maketitle

\begin{abstract}
  Existing statistical learning guarantees for general kernel regressors often yield loose bounds when used with finite-rank kernels.  Yet, finite-rank kernels naturally appear in several machine learning problems, e.g.\ when fine-tuning a pre-trained deep neural network's last layer to adapt it to a novel task when performing transfer learning.  We address this gap for finite-rank kernel ridge regression (KRR) by deriving sharp non-asymptotic upper and lower bounds for the KRR test error of any finite-rank KRR.  Our bounds are tighter than previously derived bounds on finite-rank KRR, and unlike comparable results, they also remain valid for any regularization parameters. 
\end{abstract}

\section{Introduction} \label{section:introduction}

Generalization is a central theme in statistical learning theory.  The recent renewed interest in kernel methods, especially in Kernel Ridge Regression (KRR), is largely due to the fact that deep neural network (DNN) training can be approximated using kernels under appropriate conditions~\cite{jacot2018neural,arora2019fine,bordelon2020spectrum}, in which the test error is more tractable analytically and thus enjoys stronger theoretical guarantees.  However, many prior results have been derived under conditions incompatible with practical settings.  For instance~\cite{liang2020just,liu2020kernel,mei2021generalization,misiakiewicz2022spectrum} give asymptotic bounds on the KRR test error, which requires the input dimension $d$ to tend to infinity. In reality, the input dimension of the data set and the target function is typically finite. A technical difficulty to provide a sharp non-asymptotic bound on the KRR test error comes from the infinite dimensionality of the kernel~\footnote{For a fixed input dimension $d$ and sample size $N$, there exists an (infinite-dimensional) subspace in the feature space in which we cannot control the feature vector.}.
While this curse of dimensionality may be unavoidable for infinite-rank kernels (at least without additional restrictions), one may likely derive tighter bounds in a finite-rank setting.  Therefore, other works~\cite{bach2021learning, amini2022target} focused on a setting where the kernel is of finite-rank, where non-asymptotic bounds and exact formula of test error can be derived.

Since different generalization behaviours are observed depending on whether the rank of the kernel rank $M$ is smaller than the sample size $N$, one typically differentiates between the under-parameterized ($M < N$) and over-parameterized regime ($M > N$).  We focus on the former due to its relevance to several applications in the field of machine learning, including random feature models \cite{rahimi2007random,rudi2017generalization,liu2021random,GononRiskRandomNNS,krach2021optimal} such as reservoir computers \cite{GononRiskRandomNNS,gonon2020risk,gonon2023infinite,compagnoni2022randomized}
where all the hidden layers of a deep learning model are randomly generated, and only the final layer is trainable, or when fine-tuning the final layers of pre-trained deep neural networks for transfer learning \cite{azizpour2015factors} or in few-shot learning~\cite{vinyals2016matching}.  The practice of only re-training pre-trained deep neural networks final layer \cite{yosinski2014transferable,ExplicitIndBiasTransferLearningConvNets_ICLR_2018} is justified by the fact that earlier layers encode general features which are common to similar tasks thus fine-tuning can support's a network's ability to generalize to new tasks~\cite{JuLiZhang_RobustFineTuning_ICML_2022}. In this case, the network's frozen hidden layers define a feature map into a finite-dimensional RKHS, which induces a finite-rank kernel; however, similar finite-rank kernels are often considered \cite{DonahueEtAl_2014_Freezetransfer} in which features are extracted directly from several hidden layers in a deep pre-trained neural network which is then fed into a trainable linear regressor.

\paragraph{Contribution}
Our main objective is to provide sharp non-asymptotic upper and lower bounds for the finite-rank KRR test error in the under-parameterized regime. We make the following contributions:
\begin{enumerate}
    \item[(i)] \textbf{Non-vacuous bound in ridgeless case:} In contrast to prior work, our bounds exhibit better accuracy when the ridge parameter $\lambda\to0$, matching the intuition that a smaller ridge yields a smaller test error; 
    \item[(ii)] \textbf{Sharp non-asymptotic lower bound:} We provide a sharp lower bound of test error, which matches our derived upper bound as the sample size $N$ increases. In this sense, our bounds are tight;
    \item[(iii)] \textbf{Empirical validation:} We experimentally validate our results and show our improvement in bounds over \cite{bach2021learning}.
\end{enumerate}

As detailed in Section \ref{appendix:numerical_validation}, Table~\ref{table:comparison} compares our results to the available risk-bounds for finite-rank KRR.
\newcommand{\cmark}{\textcolor{green}{\ding{51}}}
\newcommand{\xmark}{\textcolor{red}{\ding{55}}}

\begin{table}[ht]
  \caption{Comparison of risk-bounds for finite-rank kernel ridge regressors. 
  (*): The bound for the inconsistent case is implicit. See Section \ref{section:discussion} for details.
  (**): By decay, we mean the difference between the upper bound and its limit as $N\to\infty$.
  }
  \label{table:comparison}
  \centering
  \begin{tabular}{lllll}
    \toprule
    Assumptions / results                       & \cite{mohri2018foundations} & \cite{amini2022target} & \cite{bach2021learning}  & This paper          \\
    \midrule
    Include inconsistent case                    & \cmark                            & \xmark                       & (\cmark)*                      & \cmark              \\
    Bias-variance decomposition                  & \xmark                            & \cmark                       & \cmark                        & \cmark              \\
    Test error high probability upper bound      & \cmark                            & \xmark                       & \cmark                        & \cmark              \\
    Test error high probability lower bound      & \xmark                            & \xmark                       & \xmark                        & \cmark              \\
    Bounds improve with smaller ridge?          & \xmark                            & -                       & \xmark                        & \cmark              \\
    Upper bound decay rate**                       & ${\sqrt{\frac{\log N}{N}}}$       & -                            & ${\lambda+\frac{\log N}{\lambda N}}$  &${(\lambda+\frac{1}{N})\sqrt{\frac{\log N}{N}}}$ \\
    \bottomrule
  \end{tabular}
\end{table}

\paragraph{Organization of Paper}
Our paper is organized as follows:
Section \ref{section:related_works} motivates the importance of the under-parameterized finite-rank KRR.
Section \ref{section:preliminary} introduces the notation and necessary background material required in our results' formulation.
Section \ref{section:main_result} summarizes our main findings and illustrates their implications via a numerical study.
All notation is tabulated in Appendix \ref{appendix:glossary} for the reader's convenience. 
All proofs can be found in Appendices \ref{appendix:basics} and \ref{appendix:proofs}, and numerical validation of our theoretical findings in Appendix \ref{appendix:numerical_validation}.

\section{Applications} 
\label{section:related_works}
In this section, we motivate the importance of the under-parameterized finite-rank KRR in practical settings.  For readers more interested in our main results, please start from Section \ref{section:preliminary}.

\paragraph{Application: Fine-tuning Pre-Trained Deep Neural Networks For Transfer Learning}
Consider the transfer-learning problem of fine-tuning the final layer of a pre-trained deep neural network model $f:\mathbb{R}^d\rightarrow \mathbb{R}^D$ so that it can be adapted to a task that is similar to what it was initially trained for.  This procedure defines a finite-rank kernel regressor because, for any $x\in \mathbb{R}^d$, $f$ can be factored as 
\[
\begin{aligned}
    f(x) & = A\phi(x), \\
    \phi(x) & = \sigma\bullet W^{(L)}\circ \dots  \sigma \bullet W^{(1)}(x),
\end{aligned}
\]
where $A$ is a $D\times d_{L}$-matrix, $\sigma\bullet$ denotes element-wise application of a univariate non-linear function, for $l=1,\dots,L$, $L$ is a positive integer, and $W^{(l)}:\mathbb{R}^{d_{l-1}}\rightarrow \mathbb{R}^{d_l}$, $d=d_0$. In pre-training, all parameters defining the affine maps $W^{(l)}$ in the hidden layers $\sigma\bullet W^{(L)},\dots,\sigma\bullet W^{(1)}$ are simultaneously optimized, while in fine-tuning, only the parameters in the final $A$ are trained, and the others are frozen.  This reduces $f$ to a finite-rank kernel regressor with finite-rank kernel given for $x,\tilde{x}\in \mathbb{R}^d$ by
\[
        K(x,\tilde{x})
    \eqdef 
        \phi(x)^{\top}\phi(\tilde{x})
    .
\]
Stably optimizing $f$ to the new task requires strict convexity of the KRR problem
\begin{equation}
\label{eq:KRR_problem_intro}
    \min_{A}\,
        \frac{1}{N}\sum_{n=1}^N
        (A\phi(x_n)-y_n)^2 
        + 
        \lambda\|A\|_{F}^2
,
\end{equation}
where the hyperparameter $\lambda>0$ ensures strong convexity of the KRR's loss function and where $\|A\|_F$ denotes the Frobenius norm of A.  The unique solution $\hat{A}$ to~\eqref{eq:KRR_problem_intro}, determines the optimally trained KRR model $\hat{f}(x)\eqdef \hat{A}\phi(x)$ corresponding to the finite-rank kernel $K$.

\paragraph{Application: Random Feature Models}
Popularized by \cite{rahimi2007random} to enable more efficient kernel computations,  random feature model has recently seen a substantial spike in popularity and has been the topic of many theoretical works \cite{jacot2020implicit, mei2021generalization}. Note that random feature models are finite-rank kernel regressors once their features are randomly sampled \cite{rahimi2007random,rudi2017generalization,liu2021random,GononRiskRandomNNS,compagnoni2022randomized}.

\paragraph{Application: General Use}
We emphasize that, though fine-tuning and random feature models provide simple typical examples of when finite-rank KRR arise in machine learning, there are several other instances where our results apply, e.g.\ when deriving generalization bounds for infinite-rank kernels by truncation, thereby replacing them by finite-rank kernels; e.g.\ \cite{mei2021generalization}.

\section{Preliminary} \label{section:preliminary}
We now formally define all concepts and notation used throughout this paper. A complete glossary is found in Appendix \ref{appendix:glossary}.

\subsection{The Training and Testing Data}
We fix a (non-empty) input space $\mathcal{X} \subset \mathbb{R}^d$ and a \textit{target function} $\tilde{f}:\mathcal{X}\to\R$ which is to be learned by the KRR from a finite number of i.i.d.\ (possibly noisy) samples $\mathbf{Z} \eqdef (\mathbf{X},\mathbf{y}) = \big( (x_i)_{i=1}^N,(y_i)_{i=1}^N \big) \in \mathbb{R}^{d \times N} \times \mathbb{R}^N$.  The inputs $x_i$ are drawn from a \textit{sampling distribution} $\rho$ on $\mathcal{X}$ and outputs are modelled as $y_i=\tilde{f}(x_i)+\epsilon_i\in \mathbb{R}^{N\times 1}$ for some i.i.d.\ independent random variable $\epsilon_i$ which is also independent of the $\mathbf{X}$, satisfying $\mathbb{E}[\epsilon]=0$ and $\mathbb{E}[\epsilon^2]\eqdef\sigma^2\ge 0$.  
Our analysis is set in the space of all square-integrable ``function'' with respect to the sampling distribution $L_2({\rho})\eqdef\{ f:\mathcal{X}\to\R  :\, 
\mathbb{E}_{x\sim \rho}[f(x)^2]
<\infty \} $. 

We abbreviate $\mathbf{y}=\tilde{f}(\mathbf{X})+\bm{\epsilon}\in\R^{N\times1}$, where $\tilde{f}(\mathbf{X})=[\tilde{f}(x_i)]_{i=1}^N$ and $\bm{\epsilon}=[\epsilon_i]_{i=1}^N$.

\subsection{The Assumption: The Finite-Rank Kernel Ridge Regressor}
\label{s_Setting__ss_FiniteRankKernel}
As in \cite{amini2022target,bach2021learning}, we fix a rank $M\in \mathbb{N}_+$ for the kernel $K$.  

\begin{definition}[Finite Rank Kernel] 
Let $M$ be a positive integer. A (finite) rank-$M$ kernel $K$ is a map $K:\mathcal{X} \times \mathcal{X} \rightarrow \mathbb{R}$ defined for any $x,{x}'\in \mathcal{X}$ by
\begin{equation} \label{equation:kernel}
     K(x,x')=\sum_{k=1}^M\lambda_k\psi_k(x)\psi_k(x'),
\end{equation}
where the positive numbers $\{ \lambda_k \}_{k=1}^M$ are called the eigenvalues, and orthonormal
functions $\psi_k\in L^2_{\rho}$ are called the eigenfunctions.~\footnote{This means that $\int_\mathcal{X}\psi_k(x)\psi_l(x)d\rho(x)=\delta_{kl}$ for all $1\leq k\leq l\leq M$.}%
\end{definition}
\begin{remark}[Eigenvalues are Ordered]
\label{rem:orderedeigenvalued}
Without loss of generality, we will assume that the kernel $K$'s eigenvalues are ordered $\lambda_1\geq\lambda_2\geq \dots\geq\lambda_M>0$.
\end{remark}
We denote by $\mathcal{H}\eqdef\text{span}\{\lambda_1^{-1/2}\psi_1,\dots,\lambda_M^{-1/2}\psi_M\}$ the reproducing kernel Hilbert space (RKHS) associated to $K$. See Appendix \ref{appendix:basics} for additional details on kernels and RKHSs.

Together, the kernel $K$ and a ridge $\lambda>0$ define an optimal regressor for the training data $\mathbf{Z}$.  
\begin{definition}[Kernel Ridge Regressor (KRR)]
    Fix a ridge $\lambda>0$, the regressor $f_{\mathbf{Z},\lambda}$ of the finite-rank kernel $K$ is the (unique) minimizer of
        \begin{equation}
        \label{eq:KRR_Finite_Rank} 
        f_{\mathbf{Z},\lambda}
        \eqdef
        \min_{f\in\mathcal{H}}\frac{1}{N}\sum_{i=1}^N\left(f(x_i)-y_i\right)^2 + \lambda\|f\|_{\mathcal{H}}^2
        . 
\end{equation}
\end{definition}

\subsection{The Target Function}
The only regularity assumed of the target function $\tilde{f}$ is that it is square-integrable with respect to the sampling distribution $\rho$; i.e.\ $\tilde{f}\in L^2_{\rho}$.  We typically assume that $\tilde{f}$ contains strictly more features than can be expressed by the finite-rank kernel $K$.  Thus, $\tilde{f}$ can be arbitrarily complicated even if the kernel $K$ is not.  Since $\{\psi_k\}_{k=1}^{M}$ is an orthonormal set of $L_\rho^2$, we decompose the target function $\tilde{f}$ as
\begin{equation} \label{equation:target_function}
    \tilde{f}
    = \underbrace{\sum_{k=1}^M\Tilde{\gamma}_k\psi_k}_{\Tilde{f}_{\leq M}} + \Tilde{\gamma}_{>M}\psi_{>M},
\end{equation}
for some real numbers $\Tilde{\gamma}_k$'s and $\Tilde{\gamma}_{>M}$ and for some normal function $\psi_{>M}$ orthogonal to $\{\psi_k\}_{k=1}^M$. We call $\psi_{>M}$ the orthonormal complement and $\Tilde{\gamma}_{>M}$ the complementary coefficient. The component $\Tilde{f}_{\leq M}\eqdef\sum_{k=1}^M\tilde{\gamma}_k\psi_k$ of $\Tilde{f}$ is in $\mathcal{H}$. For the case $\Tilde{\gamma}_{>M}=0$, we call it a \textit{consistent} case, as the target function $\Tilde{f}$ lies in the hypothesis set $\mathcal{H}$, else we call it an \textit{inconsistent} case. 

Alternatively, the orthonormal complement $\psi_{>M}$ can be understood as some input-dependent noise.  Assume we have chosen a suitable finite rank kernel $K$ with corresponding RKHS $\mathcal{H}$ such that the target function lies in $\mathcal{H}$.  For this purpose, we can write the target function as $\tilde{f}_{\leq M}=\sum_{k=1}^M\tilde{\gamma}_k\psi_k$ for some real numbers $\tilde{\gamma}_k$'s. Suppose that we sample in a noisy environment; then for each sample input $x_i$, the sample output $y_i$ can be written as
\begin{equation} \label{equation:noisy_label}
    y_i = \underbrace{\Tilde{f}_{\leq M}(x_i)}_\text{true label} + \underbrace{\Tilde{\gamma}_{>M}\psi_{>M}(x_i)}_\text{input-dependent noise} + \underbrace{\epsilon_i
    }_\text{input-independent noise}.  
\end{equation}

\subsection{Test Error}
Next, we introduce the subject of interest of this paper in more detail.
Our statistical analysis quantifies the deviation of the learned function from the ground truth of the \textit{test error}.
\begin{definition}[KRR Test Error] \label{definition:test_error}
    Fix a sample $\mathbf{Z}=(\mathbf{X},\Tilde{f}(\mathbf{X})+\bm{\epsilon})$.  
    The finite-rank KRR~\eqref{eq:KRR_Finite_Rank}'s test error is
    \begin{align}
        \mathcal{R}_{\mathbf{Z},\lambda}
        &\eqdef \mathbb{E}_{x,\epsilon}\left[(f_{\mathbf{Z},\lambda}(x) - \Tilde{f}(x))^2\right]
        = \mathbb{E}_\epsilon\left[ \int_\mathcal{X} \left(f_{\mathbf{Z},\lambda}(x) - \Tilde{f}(x)\right)^2 d\rho(x) \right].
    \end{align}
    The analysis of this paper also follows the classical bias-variance decomposition, thus we write
    \begin{align*}
        \mathcal{R}_{\mathbf{Z},\lambda} &= \text{bias} + \text{variance},
    \end{align*}
    where bias measures the error when there is no noise in the label, that is, 
    \begin{equation} \label{equation:bias_definition}
        \text{bias} \eqdef \int_\mathcal{X}\left(f_{(\mathbf{X},\Tilde{f}(\mathbf{X})),\lambda}(x) - \Tilde{f}(x)\right)^2d\rho(x),    
    \end{equation}
    and variance is defined to be the difference between test error and bias: $\text{variance}\eqdef\mathcal{R}_{\mathbf{Z},\lambda}-\text{bias}$.
\end{definition}

\section{Main Result} \label{section:main_result}
We now present the informal version of our main result, which gives high-probability upper and lower bounds on the test error.  This result is obtained by bounding both the bias and variance, and the probability that the bounds hold is quantified with respect to the sampling distribution $\rho$. Here we emphasize the condition that $N>M$ and hence our statement is valid only in under-parametrized case.

We can assume the data-generating distribution $\rho$ and eigenfunctions $\psi_k$ are well-behaved in the sense that:
\begin{assumption}[Sub-Gaussian-ness] \label{assumption:sub_gaussian_ness}
We assume that the probability distribution of the random variable $\psi_k(x)$, where $x \in \rho$, has sub-Gaussian norm bounded by a positive constant $G>0$, for all $k\in\{1,...,M\}\cup\{>M\}$\footnote{it means the orthonormal complement $\psi_{>M}$ is also mentioned in the assumption.}. 
\end{assumption}
In particular, if the random variable $\psi_k(x)$ is bounded, the assumption \ref{assumption:sub_gaussian_ness} is fulfilled. 

\begin{theorem}[High Probability Bounds on Bias and Variance]\label{theorem:main:1}
    Suppose Assumption \ref{assumption:sub_gaussian_ness} holds, for $N>M$ sufficient large, there exists some constants $C_1, C_2$ independent to $N$ and $\lambda$ such that, with a probability of at least $1-2/N$ w.r.t. random sampling, we have the following results simultaneously:
    \begin{enumerate}[(i)]
        \item \textbf{Upper-Bound on Bias:} The bias is upper bounded by:
        \begin{equation} \label{equation:bias:main}
        \text{bias}
        \le 
            \tilde{\gamma}_{>M}^2
        +
            \lambda\|\tilde{f}_{\leq M}\|_\mathcal{H}^2 + \left(\quarter\|\Tilde{f}\|_{L_\rho^2}^2+2\lambda\|\tilde{f}_{\leq M}\|_\mathcal{H}^2\right)\sqrt{\frac{\log N}{N}}
        +
            C_1\frac{\log N}{N},
        \end{equation}
        where we denote $\tilde{f}_{\leq M}\eqdef\sum_{k=1}^M\Tilde{\gamma}_k\psi_k=\Tilde{f}-\tilde{\gamma}_{>M}\psi_{>M}$;
        \item \textbf{Lower-Bound on Bias:} The bias is lower bounded by an analogous result:
        \begin{equation}
            \text{bias}
        \ge 
            \tilde{\gamma}_{>M}^2
        +
            \frac{\lambda^2\lambda_M}{(\lambda_M+\lambda)^2}\|\tilde{f}_{\leq M}\|_\mathcal{H}^2 - \left(\quarter\|\Tilde{f}\|_{L_\rho^2}^2+\frac{2\lambda^2}{\lambda_1+\lambda}\|\tilde{f}_{\leq M}\|_\mathcal{H}^2\right)\sqrt{\frac{\log N}{N}} 
        -
             C_1\frac{\log N}{N}.
        \end{equation}
        \item \textbf{Upper-Bound on Variance:} 
        The variance is upper bounded by:
        \begin{align}
        \begin{split}\label{line:vairance:main_UB}
            \text{variance} &\leq \sigma^2\frac{M}{N}\left(1+\sqrt{\frac{\log N}{N}} + C_2\frac{\log N}{N}\right);\\
        \end{split}
        \end{align}
        \item\textbf{Lower-Bound on Variance:} 
        The variance is lower bounded by an analogous result:
        \begin{equation}
            \text{variance} \geq \frac{\lambda_M^2}{(\lambda_M+\lambda)^2} \sigma^2\frac{M}{N} \left(1-\sqrt{\frac{\log N}{N}}\right)- C_2\sigma^2\frac{M}{N} \frac{\log N}{N}.
        \end{equation}
    \end{enumerate}
    For $\lambda\to0$,  we have a simpler bound on the bias: with a probability of at least $1-2/N$, we have
    \begin{align}
        \begin{split} \label{line:bias:main:lambda0}
            \lim_{\lambda\to0}\text{bias} &\leq \tilde{\gamma}_{>M}^2\left(1+\frac{\log N}{N}\right) + 6\tilde{\gamma}_{>M}^2\left(\frac{\log N}{N}\right)^\frac{3}{2};\\
            \lim_{\lambda\to0}\text{bias} &\geq \tilde{\gamma}_{>M}^2\left(1-\frac{\log N}{N}\right) - 6\tilde{\gamma}_{>M}^2\left(\frac{\log N}{N}\right)^\frac{3}{2}.
        \end{split}
    \end{align}
    If $\Tilde{\gamma}_{>M}^2=0$, then we are in the consistent case, meaning that $\Tilde{f}$ belongs to $\mathcal{H}$.  In this case, we have a simpler bound on the bias: with a probability of at least $1-2/N$, we have
    \begin{equation} \label{line:bias:main:complement0}
        \text{bias} \leq 
            \lambda\|\tilde{f}\|_\mathcal{H}^2 \left(1+2\sqrt{\frac{\log N}{N}}\right)
        +
            C_1\frac{\log N}{N}.
    \end{equation}
\end{theorem}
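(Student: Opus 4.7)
The plan is to reduce everything to a finite-dimensional linear-algebraic identity and then apply sub-Gaussian concentration for the relevant sample operator. Let $\Phi(x) \eqdef (\psi_1(x), \dots, \psi_M(x))^\top \in \mathbb{R}^M$, let $\Phi \in \mathbb{R}^{N\times M}$ have rows $\Phi(x_i)^\top$, set $\Lambda \eqdef \mathrm{diag}(\lambda_1,\dots,\lambda_M)$, and let $\hat{\Sigma} \eqdef \Phi^\top \Phi/N$. Using the RKHS identification $f(x) = \Phi(x)^\top \alpha$ with $\|f\|_\mathcal{H}^2 = \alpha^\top \Lambda^{-1}\alpha$, the KRR minimizer in~\eqref{eq:KRR_Finite_Rank} has the closed form $\hat{\alpha} = (\hat{\Sigma}+\lambda\Lambda^{-1})^{-1}\Phi^\top \mathbf{y}/N$. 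The first step is to substitute the target decomposition~\eqref{equation:target_function} and use orthonormality of $\{\psi_k\}_{k=1}^M\cup\{\psi_{>M}\}$ in $L_\rho^2$ to show
\[
\text{bias} = \tilde{\gamma}_{>M}^2 + \bigl\|\hat{\alpha}_{\text{bias}} - \gamma\bigr\|^2, \qquad \text{variance} = \frac{\sigma^2}{N}\,\mathrm{tr}\!\left((\hat{\Sigma}+\lambda\Lambda^{-1})^{-2}\hat{\Sigma}\right),
\]
where $\gamma = (\tilde{\gamma}_k)_{k=1}^M$ and $\hat{\alpha}_{\text{bias}} - \gamma = (\hat{\Sigma}+\lambda\Lambda^{-1})^{-1}\bigl[\tilde{\gamma}_{>M}\Phi^\top\psi_{>M}(\mathbf{X})/N - \lambda\Lambda^{-1}\gamma\bigr]$. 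This gives an explicit ``signal''$+$``deterministic approximation error'' split that will drive every bound.

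Next, I would use Assumption~\ref{assumption:sub_gaussian_ness} to establish two concentration facts, each with failure probability at most $1/N$:
\begin{enumerate}[(a)]
\item an operator-norm bound $\|\hat{\Sigma} - I\|_{\mathrm{op}} \lesssim \sqrt{\log N/N}$ (up to constants depending only on $M$ and $G$), via a Bernstein/Tropp-style sub-Gaussian matrix concentration inequality applied to the rank-$M$ summands $\Phi(x_i)\Phi(x_i)^\top - I$;
\item a cross-term bound $\|\Phi^\top \psi_{>M}(\mathbf{X})/N\| \lesssim \sqrt{\log N/N}$ via a coordinate-wise Hoeffding/Bernstein argument on the centered products $\psi_k(x_i)\psi_{>M}(x_i)$ (using that they are mean zero by orthonormality) and a union bound over $k\le M$.
\end{enumerate}
Combining (a) and (b) via a resolvent perturbation $(\hat{\Sigma}+\lambda\Lambda^{-1})^{-1} = (I+\lambda\Lambda^{-1})^{-1} + O(\|\hat{\Sigma}-I\|_{\mathrm{op}})$, the ``population'' contribution to $\|\hat{\alpha}_{\text{bias}}-\gamma\|^2$ reduces to $\lambda^2 \sum_k \tilde{\gamma}_k^2/(\lambda_k+\lambda)^2$, which is bounded above by $\lambda\|\tilde{f}_{\le M}\|_\mathcal{H}^2$ using $\lambda/(\lambda_k+\lambda)^2 \le 1/\lambda_k$, and below by $\frac{\lambda^2 \lambda_M}{(\lambda_M+\lambda)^2}\|\tilde{f}_{\le M}\|_\mathcal{H}^2$; similarly $\mathrm{tr}((I+\lambda\Lambda^{-1})^{-2}) = \sum_k \lambda_k^2/(\lambda_k+\lambda)^2$, pinched between $M\lambda_M^2/(\lambda_M+\lambda)^2$ and $M$. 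Bundling $\|\gamma\|^2 \le \|\tilde{f}\|_{L_\rho^2}^2$ into the prefactors yields the explicit coefficients $\tfrac{1}{4}\|\tilde{f}\|_{L_\rho^2}^2$ and $2\lambda\|\tilde{f}_{\le M}\|_\mathcal{H}^2$ in~\eqref{equation:bias:main}, with all remaining $M$- and $G$-dependent contributions absorbed into $C_1,C_2$.

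Finally, the two specializations follow by direct simplification. In the consistent case $\tilde{\gamma}_{>M}=0$, cross-term (b) drops out and only the ridge bias $\lambda(\hat{\Sigma}+\lambda\Lambda^{-1})^{-1}\Lambda^{-1}\gamma$ survives, leading to~\eqref{line:bias:main:complement0}. For $\lambda\to 0$ the KRR reduces to ordinary least-squares, $\hat{\alpha}_{\text{bias}}-\gamma \to \tilde{\gamma}_{>M}\,(\Phi^\top\Phi)^{-1}\Phi^\top \psi_{>M}(\mathbf{X})$, and~\eqref{line:bias:main:lambda0} follows by combining (a) (to invert $\hat{\Sigma}$) with (b). I expect the main obstacle to be the second half of this last step: as $\lambda\to 0$ one loses the regularization that tames $(\hat{\Sigma}+\lambda\Lambda^{-1})^{-1}$, so one must instead rely on $\|\hat{\Sigma}^{-1}\|_{\mathrm{op}} \le 1/(1-\|\hat{\Sigma}-I\|_{\mathrm{op}})$ from (a) and carry the error through both linear and quadratic terms without letting the $M$-dependent constants swallow the leading $\tilde{\gamma}_{>M}^2$ factor; getting the explicit prefactor $6$ and the $(\log N/N)^{3/2}$ scaling in~\eqref{line:bias:main:lambda0} requires tracking the product of the two $\sqrt{\log N/N}$-scale fluctuations in (a) and (b) rather than using coarse triangle inequalities.
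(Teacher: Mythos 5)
Your algebraic skeleton is essentially the paper's own: your $(\hat{\Sigma}+\lambda\Lambda^{-1})^{-1}$ is the paper's matrix $\mathbf{B}=(\mathbf{I}_M+\bm{\Delta}+\lambda\bm{\Lambda}^{-1})^{-1}$, your bias and variance identities are exactly Propositions \ref{prop:test_bias} and \ref{prop:test_variance}, and your resolvent-perturbation step is the Neumann expansion of Lemmas \ref{lemma:B_expansion}--\ref{lemma:B_approximation}, with the same pinching of the population terms between $\frac{\lambda^2\lambda_M}{(\lambda_M+\lambda)^2}\|\tilde{f}_{\leq M}\|_\mathcal{H}^2$ and $\lambda\|\tilde{f}_{\leq M}\|_\mathcal{H}^2$ (and of $\mathrm{tr}[\bar{\mathbf{P}}^2]$ between $M\lambda_M^2/(\lambda_M+\lambda)^2$ and $M$). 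The $\lambda\to0$ and $\tilde{\gamma}_{>M}=0$ specializations are also handled the same way as in Theorem \ref{theorem:test_bias_approximation_refine}.

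The genuine gap is in your concentration step, and it is not cosmetic: the theorem's coefficients on the $\sqrt{\log N/N}$-order terms are explicit ($\frac{1}{4}\|\tilde{f}\|_{L_\rho^2}^2+2\lambda\|\tilde{f}_{\leq M}\|_\mathcal{H}^2$ for the bias, the unit coefficient inside the variance bound, the constants $1$ and $6$ in the ridgeless bounds), and they come from having $\delta\eqdef\|\bm{\Delta}\|_{\text{op}}\leq\sqrt{\log N/N}$ and $\|\bm{E}\|_2\leq\sqrt{\log N/N}$ with constant exactly $1$. Your plan produces $\delta\leq c(G)\sqrt{\log N/N}$ from a generic sub-exponential matrix-Bernstein bound, and, worse, your coordinate-wise Hoeffding/Bernstein plus union bound over $k\leq M$ only gives $\|\bm{E}\|_2\leq c(G)\sqrt{M\log(MN)/N}$; the extra factors $c(G)$ and $\sqrt{M}$ multiply the $\sqrt{\log N/N}$-order fluctuation terms and therefore \emph{cannot} be ``absorbed into $C_1,C_2$,'' since those constants sit on the strictly lower-order $\frac{\log N}{N}$ residues, nor can the $\sqrt{M}$ loss be removed by taking $N$ larger (the ratio $\sqrt{M\log(MN)}/\sqrt{\log N}$ tends to $\sqrt{M}$, not to $1$). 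The paper's device is to stack $\bm{\Psi}$ and $\psi_{>M}(\mathbf{X})^\top$ into a single $(M+1)\times N$ matrix and apply the covariance concentration of Lemma \ref{lemma:vershynin_subgaussian}, whose deviation has the form $\max(a,a^2)$ with $a=C(G)\sqrt{(M+1)/N}+t/\sqrt{N}$: the $G$- and $M$-dependence multiplies only the dimension term, so it can be pushed entirely into the hypothesis $N>\exp(4(12G^2)^2(M+1))$ (this is what ``$N$ sufficiently large'' really means), leaving $\max\{\delta,\|\bm{E}\|_2\}\leq\sqrt{\log N/N}$ on a single event of probability $1-2/N$ — which also delivers the claimed simultaneity of all four bounds for free. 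To repair your proposal you would need to replace (a) and (b) by such a joint operator-norm bound (or an $\varepsilon$-net argument over $\mathbb{S}^{M}$ for the stacked matrix) rather than a per-coordinate union bound, and state the largeness requirement on $N$ explicitly in terms of $M$ and $G$.
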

\begin{sketchproof}
    The main technical tools are 1) more careful algebraic manipulations when dealing with terms involving the regularizer $\lambda$ and 2) the use of a concentration result for a sub-Gaussian random covariance matrix in \cite{vershynin2010introduction} followed by the Neumann series expansion of a matrix inverse. Hence, unlike previous work, our result holds for any $\lambda$, which can be chosen independence to $N$. 
    The complete proof of this result, together with the explicit form of the lower bound, is presented in Theorems \ref{theorem:test_bias_approximation_refine} and \ref{theorem:test_variance_approximation_refine} in the Appendix \ref{appendix:proofs}.
\end{sketchproof}

\begin{remark}
Note that one can also easily derive bounds on the expected value of the test error to allow comparisons with prior works, such as \cite{bach2021learning, tsigler2023benign, ronen2019convergence}.
\end{remark}

\begin{remark}
    The main technical difference from prior works is that we consider the basis $\{\psi_k\}_{k=1}^M$ in the function space $L_\rho^2$ instead of $\{\lambda_k^{-1/2}\psi_k\}_{k=1}^M$ in the RKHS $\mathcal{H}$. This way, we exploit decouple the effect of spectrum from the sampling randomness to obtain a sharper bound. For further details, see Remark \ref{remark:technical_difference} in Appendix \ref{appendix:proofs}.
\end{remark}


Combining Theorem \ref{theorem:main:1} and the bias-variance decomposition in Definition \ref{definition:test_error}, we have both the upper and lower bounds of the test error on KRR. 
\begin{corollary}
    Under mild conditions on the kernel $K$, for $N$ sufficiently large, there exist some constants $C_1, C_2$ independent to $N$ and $\lambda$ such that, with a probability of at least $1-2/N$ w.r.t. random sampling, we have the bounds on the test error $\mathcal{R}_{\mathbf{Z},\lambda}$:
    \begin{align*}
        \mathcal{R}_{\mathbf{Z},\lambda} 
        &\leq \tilde{\gamma}_{>M}^2
        +
            \lambda\|\tilde{f}_{\leq M}\|_\mathcal{H}^2 + \left(\quarter\|\Tilde{f}\|_{L_\rho^2}^2+2\lambda\|\tilde{f}_{\leq M}\|_\mathcal{H}^2\right)\sqrt{\frac{\log N}{N}}
        +
            C_1\frac{\log N}{N}\\
        &\quad +
            \sigma^2\frac{M}{N}\left(1+\sqrt{\frac{\log N}{N}} + C_2\frac{\log N}{N}\right);   \\
        \mathcal{R}_{\mathbf{Z},\lambda} 
        &\geq \tilde{\gamma}_{>M}^2
        +
            \frac{\lambda^2\lambda_M}{(\lambda_k+\lambda)^2}\|\tilde{f}_{\leq M}\|_\mathcal{H}^2 - \left(\quarter\|\Tilde{f}\|_{L_\rho^2}^2+\frac{2\lambda^2}{\lambda_1+\lambda}\|\tilde{f}_{\leq M}\|_\mathcal{H}^2\right)\sqrt{\frac{\log N}{N}}
        -
            C_1\frac{\log N}{N}\\
        &\quad +
            \sigma^2\frac{M}{N}\left(1-\sqrt{\frac{\log N}{N}} - C_2\frac{\log N}{N}\right).    
    \end{align*}
    In particular, we have:
        \begin{align*}
        \lim_{\lambda\to0}\mathcal{R}_{\mathbf{Z},\lambda} 
        &\leq \left(1+\frac{\log N}{N}\right)\tilde{\gamma}_{>M}^2
        +
            6\tilde{\gamma}_{>M}^2\left(\frac{\log N}{N}\right)^\frac{3}{2}
        +
            \sigma^2\frac{M}{N}\left(1+\sqrt{\frac{\log N}{N}} + C_2\frac{\log N}{N}\right).   
    \end{align*}
    The corresponding lower bounds are given analogously:
    \begin{equation*}
        \lim_{\lambda\to0}\mathcal{R}_{\mathbf{Z},\lambda} 
        \geq \left(1-\frac{\log N}{N}\right)\tilde{\gamma}_{>M}^2
        -
            6\tilde{\gamma}_{>M}^2\left(\frac{\log N}{N}\right)^\frac{3}{2}
        +
            \sigma^2\frac{M}{N}\left(1-\sqrt{\frac{\log N}{N}} - C_2\frac{\log N}{N}\right). 
    \end{equation*}
\end{corollary}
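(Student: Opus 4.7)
The plan is to derive this corollary as a direct consequence of Theorem \ref{theorem:main:1} combined with the bias-variance decomposition recorded in Definition \ref{definition:test_error}. Since Theorem \ref{theorem:main:1} is crucially stated so that all four bounds (upper and lower on both bias and variance) hold \emph{simultaneously} on a single event of probability at least $1 - 2/N$, no additional union bound is needed, and the probability $1-2/N$ is inherited verbatim by the corollary.

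First, on the high-probability event from Theorem \ref{theorem:main:1}, I would write $\mathcal{R}_{\mathbf{Z},\lambda} = \text{bias} + \text{variance}$ and substitute the bound \eqref{equation:bias:main} together with \eqref{line:vairance:main_UB} to obtain the upper bound on $\mathcal{R}_{\mathbf{Z},\lambda}$; the lower bound is obtained analogously by adding the lower bound on bias from part (ii) to the lower bound on variance from part (iv). The constants $C_1$ and $C_2$ are carried through unchanged, since they arise from disjoint estimates of the bias and variance terms.

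For the $\lambda \to 0$ limit, I would invoke the dedicated limiting bound on the bias in \eqref{line:bias:main:lambda0} from Theorem \ref{theorem:main:1}, which already absorbs the $\lambda$-dependent terms in \eqref{equation:bias:main} into cleaner expressions involving only $\tilde{\gamma}_{>M}^2$. Adding this to the variance bound (whose upper and lower forms in parts (iii)--(iv) do not depend on $\lambda$) yields the last two displayed inequalities. One small check is that the event on which \eqref{line:bias:main:lambda0} is guaranteed is the same event on which the variance bounds hold, which is guaranteed by the simultaneous statement of Theorem \ref{theorem:main:1}; thus passing to the limit $\lambda \to 0$ poses no measure-theoretic issue because the sampling event is fixed and only the deterministic regularization parameter is varied.

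There is no real obstacle here: the corollary is formally a summation of bounds on a common event. The only thing worth stating carefully is that the ``mild conditions on the kernel $K$'' referred to in the corollary are precisely Assumption \ref{assumption:sub_gaussian_ness} together with $N > M$ sufficiently large, exactly as in the hypotheses of Theorem \ref{theorem:main:1}; this should be made explicit either in the corollary's statement or at the start of its proof, so the reader knows that no further assumption has been smuggled in.
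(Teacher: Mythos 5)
Your proposal matches the paper's own derivation: the corollary is obtained precisely by adding the simultaneous bias and variance bounds of Theorem \ref{theorem:main:1} (proved in Theorems \ref{theorem:test_bias_approximation_refine} and \ref{theorem:test_variance_approximation_refine}) on the common probability-$(1-2/N)$ event, with the $\lambda\to0$ case handled via the dedicated limiting bias bound \eqref{line:bias:main:lambda0}. The only minor slip is your remark that the variance bounds in parts (iii)--(iv) are $\lambda$-independent --- the lower bound (iv) carries the factor $\lambda_M^2/(\lambda_M+\lambda)^2$, but since this factor tends to $1$ as $\lambda\to0$ the stated limiting lower bound is unaffected.
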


See Section \ref{section:discussion} for more details and validations of Theorem \ref{theorem:main:1}.

\section{Discussion} \label{section:discussion}
In this section, we first elaborate on the result from Theorem \ref{theorem:main:1}, which we then compare in detail to prior works, showcasing the improvements this paper makes. Finally, we discuss several future research directions.

\subsection{Elaboration on Main Result}

\paragraph{Bias} From Eq.~\eqref{equation:bias:main}, we can draw the following observations:
1) The term $\Tilde{\gamma}_{>M}^2$ in the upper bound is the finite rank error due to the inconsistency between RKHS and the orthogonal complement of the target function, which cannot be improved no matter what sample size we have. We can also view this term as the sample-dependent noise variance (see Eq.~\eqref{equation:noisy_label}). Hence, unlike the sample-independent noise variance $\sigma$ in Eq.~\eqref{line:vairance:main_UB}, the sample-dependent noise variance does not vanish when $N \to \infty$. 
2) Note that the third term $\quarter\|\Tilde{f}\|_{L_\rho^2}^2\sqrt{\frac{\log N}{N}}$ is a residue term proportional to $\|\Tilde{f}\|_{L_\rho^2}^2$ and vanishes when $N\to\infty$, which means we have better control of the sample-dependent noise around its expected value for large $N$. Also, note that the factor $\|\Tilde{f}\|_{L_\rho^2}^2=\sum_{k=1}^M\Tilde{\gamma}_k^2+\tilde{\gamma}_{>M}^2$ depends solely on the target function $\Tilde{f}$ but not on the kernel $K$ or ridge parameter $\lambda$. 
3) The second plus the fourth terms $\left(1+2\sqrt{\frac{\log N}{N}}\right)\lambda\|\tilde{f}_{\leq M}\|_\mathcal{H}^2$ depends strongly on the kernel training: the sum is proportional to the ridge $\lambda$, and the RKHS norm square $\|\tilde{f}_{\leq M}\|_\mathcal{H}^2=\sum_{k=1}^M\frac{\Tilde{\gamma}_k^2}{\lambda_k}$ measures how well-aligned the target functions with the chosen kernel $K$ is. Again, a larger $N$ favors the control as  the residue $\sqrt{\frac{\log N}{N}}\to0$ as $N\to\infty$. 
4) The fifth term $C_1\frac{\log N}{N}$ is a small residue term with fast decay rate, which the other terms will overshadow as $N\to\infty$. 

\paragraph{Ridge parameter}
The bounds in Eq.~\eqref{line:bias:main:lambda0} demonstrate that in the ridgeless case, the bias can be controlled solely by the finite rank error $\Tilde{\gamma}_k^2$ with a confidence level depending on $N$; also the upper and lower bounds coincide as $N\to\infty$.

\paragraph{Variance}
For the variance bounds in Eq.~\eqref{line:vairance:main_UB}, we have similar results: the variance can be controlled solely by the (sample-independent) noise variance $\sigma^2$ with a confidence level depending on $N$, also the upper and lower bounds coincides as $N\to\infty$.

\subsection{Comparison with Prior Works}

We discuss how our results add to, and improve on, what is known about finite-rank KRRs following the presentation in Table \ref{table:comparison}.

\paragraph{Classical tools to study generalization}
The test error measures the average difference between the trained model and the target function. It is one of the most common measures to study the generalization performance of a machine learning model. Nevertheless, generally applicable statistical learning-theoretic tools from VC-theory \citep{bartlett2002rademacher}, (local) Rademacher complexities \citep{kakade2008complexity,BartlettBousquetMendelson_AnnStat2005,mohri2018foundations}, PAC-Bayes methods \citep{AlquierRidgwayChopin_JMLR_2016,mhammedi2019pac}, or smooth optimal transport theory \citep{krause2022instance} all yield pessimistic bounds on the KRR test error. For example, Mohri et al.\ \cite{mohri2018foundations} bound the generalization error using Rademacher Complexity: there exists some constant $C>0$ independent to $N$ and $\lambda$ such that, with a probability of at least $1-\tau$:
\begin{equation}
    \text{test error} - \text{train error} \leq \frac{C}{\sqrt{N}} \left( 1+\half\sqrt{\frac{\log \frac{1}{\tau}}{2}} \right). 
\end{equation}
If we set $\tau$ to $2/N$, the decay in the generalization gap is $\bigo{\sqrt{\frac{\log N}{N}}}$, which is too slow compared to other kernel-specific analyses.

\paragraph{Truncated KRR} Amini et al.~\cite{amini2022target} suggests an interesting type of finite-rank kernels: for any (infinite-rank) kernel $K^{(\infty)}$, fix a sample $\mathbf{Z}=(\mathbf{X},\mathbf{y})$ of size $N$ and define a rank-$N$ kernel $K$ by the eigendecomposition of the kernel matrix $\mathbf{K}^{(\infty)}$. Note that different random sample results in a completely different rank-$N$ kernel $K$. Assume that the target function $\tilde{f}$ lies in the $N$-dimensional RKHS corresponding to $K$, then one can obtain an exact formula for the expected value of the test error of the kernel $K$ (but not only the original $K^{(\infty)}$). However, the formula obtained in~\cite{amini2022target} only takes into account the expectation over the noise $\epsilon$, but not over the samples $x$.  Hence, our paper yields a more general result for the test error.

\paragraph{Upper bound Comparison} To the best of our knowledge, the following result is the closest and most related to ours:
\begin{theorem}{(Proposition 7.4 in \cite{bach2021learning})} \label{theorem:bach}
With notation as before, assume, in addition, that: 
1) $\Tilde{\gamma}_{>M}=0$, that is, $\Tilde{f}\in\mathcal{H}$;
2) for all $k=1,...,M$, we have $\sqrt{\sum_{k=1}^M\lambda_k}\leq R$ ;
3) and $N\geq \left(\frac{4}{3}+\frac{R^2}{8\lambda}\log\frac{14R^2}{\lambda\tau}\right)$.
Then we have, with a probability of at least $1-\tau$, 
\begin{align*}
    \text{bias} &\leq 4\lambda \|\Tilde{f}\|_\mathcal{H}^2; \\
    \text{variance} &\leq \frac{8\sigma^2R^2}{\lambda N} \left(1+2\log\frac{2}{\tau}\right).
\end{align*}
\end{theorem}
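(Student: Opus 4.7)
The plan is to adopt the operator-theoretic framework standard for KRR analyses. Introduce the feature map $\phi(x) = K(x,\cdot)$ into $\mathcal{H}$, the empirical covariance operator $\hat{\Sigma} \eqdef \tfrac{1}{N}\sum_{i=1}^N \phi(x_i)\otimes \phi(x_i)$, and its population analogue $\Sigma \eqdef \mathbb{E}_{x\sim\rho}[\phi(x)\otimes\phi(x)]$, whose eigenvalues are exactly $\{\lambda_k\}_{k=1}^M$. By the representer theorem, the KRR minimizer can be written as $f_{\mathbf{Z},\lambda} = (\hat{\Sigma}+\lambda I)^{-1}\cdot \tfrac{1}{N}\sum_i y_i\,\phi(x_i)$. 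Since assumption (1) places $\tilde{f}$ in $\mathcal{H}$, plugging $y_i = \tilde{f}(x_i)+\epsilon_i$ yields the key identity
\[
f_{\mathbf{Z},\lambda}-\tilde{f} = -\lambda(\hat{\Sigma}+\lambda I)^{-1}\tilde{f} \;+\; (\hat{\Sigma}+\lambda I)^{-1}\,\tfrac{1}{N}\sum_{i=1}^N\epsilon_i\,\phi(x_i),
\]
whose squared $L_\rho^2$-norm, once the expectation over $\bm{\epsilon}$ is taken, splits cleanly into the bias (first summand) and the variance (second summand).

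For the bias, I would write $\text{bias} = \lambda^2\,\bigl\langle \tilde{f},\,(\hat{\Sigma}+\lambda I)^{-1}\Sigma(\hat{\Sigma}+\lambda I)^{-1}\tilde{f}\bigr\rangle_{\mathcal{H}}$, then exploit the scalar inequality $\lambda^2\mu/(\mu+\lambda)^2\le\lambda$ via a concentration-based operator comparison $\Sigma \preceq c\,(\hat{\Sigma}+\lambda I)$ to collapse the right-hand side to at most $4\lambda\|\tilde{f}\|_{\mathcal{H}}^2$. For the variance, conditioning on $\mathbf{X}$ and using $\mathbb{E}[\epsilon_i\epsilon_j] = \sigma^2\delta_{ij}$ gives
\[
\mathbb{E}_{\bm{\epsilon}}[\text{variance}] \;=\; \tfrac{\sigma^2}{N}\,\text{tr}\!\bigl(\Sigma(\hat{\Sigma}+\lambda I)^{-1}\hat{\Sigma}(\hat{\Sigma}+\lambda I)^{-1}\bigr),
\]
which I would bound by $\tfrac{\sigma^2}{\lambda N}\text{tr}(\Sigma)\le \tfrac{\sigma^2 R^2}{\lambda N}$ by combining the operator bound $(\hat{\Sigma}+\lambda I)^{-1}\hat{\Sigma}(\hat{\Sigma}+\lambda I)^{-1}\preceq \lambda^{-1}I$ with assumption (2). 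A sub-Gaussian tail argument applied to $\tfrac{1}{N}\sum_i\epsilon_i\phi(x_i)$ then upgrades the expectation over $\bm{\epsilon}$ to a high-probability statement, producing the $1+2\log(2/\tau)$ factor and the constant $8$.

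The principal obstacle, and the reason assumption (3) on $N$ enters, is the concentration of $\hat{\Sigma}$ around $\Sigma$ in the regularized operator norm: specifically, one needs
\[
\bigl\|(\Sigma+\lambda I)^{-1/2}(\hat{\Sigma}-\Sigma)(\Sigma+\lambda I)^{-1/2}\bigr\| \le \tfrac{1}{2}
\]
with probability at least $1-\tau$. Via a matrix Bernstein inequality applied to the i.i.d.\ rank-one summands $\phi(x_i)\otimes\phi(x_i)$, whose operator norms are controlled by $R^2$ through the trace assumption, this event holds precisely when $N \ge \tfrac{4}{3}+\tfrac{R^2}{8\lambda}\log\tfrac{14R^2}{\lambda\tau}$, matching the stated lower bound on $N$. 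Once this event is secured, both deterministic inequalities above fire and yield the stated constants $4$ and $8$.
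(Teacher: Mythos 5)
This statement is not proved in the paper at all: Theorem \ref{theorem:bach} is quoted verbatim (as Proposition 7.4 of \cite{bach2021learning}) purely as a point of comparison, and the paper's own machinery (the $L_\rho^2$ eigenbasis, the fluctuation matrix $\bm{\Delta}$, the Neumann expansion of $\mathbf{B}$) is never used to establish it. So there is no in-paper proof to compare against; what you have written is a reconstruction of Bach's own argument. As such, your outline is the standard and essentially correct route: the closed form $f_{\mathbf{Z},\lambda}=(\hat{\Sigma}+\lambda I)^{-1}\tfrac1N\sum_i y_i\phi(x_i)$, the identity $f_{\mathbf{Z},\lambda}-\tilde{f}=-\lambda(\hat{\Sigma}+\lambda I)^{-1}\tilde{f}+(\hat{\Sigma}+\lambda I)^{-1}\tfrac1N\sum_i\epsilon_i\phi(x_i)$ (valid because $\tilde{\gamma}_{>M}=0$), the $L_\rho^2$-norm-via-$\Sigma$ trick, and the event $\|(\Sigma+\lambda I)^{-1/2}(\hat{\Sigma}-\Sigma)(\Sigma+\lambda I)^{-1/2}\|\le\tfrac12$, which gives $(\hat{\Sigma}+\lambda I)^{-1}\preceq 2(\Sigma+\lambda I)^{-1}$ in the sandwiched sense and hence $\text{bias}\le 4\lambda^2\|(\Sigma+\lambda I)^{-1/2}\tilde f\|_{\mathcal H}^2\le 4\lambda\|\tilde f\|_{\mathcal H}^2$. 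This is exactly the RKHS-basis strategy that the paper contrasts with its own $L_\rho^2$-basis analysis (see Remark \ref{remark:technical_difference}), and it is the reason the sample-size requirement scales like $R^2/\lambda$ rather than being $\lambda$-independent.

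Two caveats. First, your claim that a matrix Bernstein bound yields the threshold $N\ge\tfrac43+\tfrac{R^2}{8\lambda}\log\tfrac{14R^2}{\lambda\tau}$ ``precisely,'' and the constants $4$ and $8$, is asserted rather than derived; as a sketch this is acceptable, but those constants are the output of a specific computation in \cite{bach2021learning}, not something your outline verifies. Second, under the definitions used in this paper the variance already contains the expectation over $\bm{\epsilon}$ (it equals $\tfrac{\sigma^2}{N}\Tr[\mathbf{R}\mathbf{M}\mathbf{R}]$, deterministic given $\mathbf{X}$), so within this framework there is nothing to ``upgrade'' by a sub-Gaussian tail bound on $\tfrac1N\sum_i\epsilon_i\phi(x_i)$; the factor $1+2\log(2/\tau)$ in Bach's statement comes from a formulation in which the noise is not averaged out (or from the covariance-concentration event itself), so that step of your sketch does not line up with the quantity being bounded here and would need to be reconciled with the definition of variance in Definition \ref{definition:test_error}.
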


In comparison to \cite{bach2021learning}, Theorem \ref{theorem:main:1} makes the following significant improvements:
\begin{enumerate}
    \item[(i)] In Theorem~\ref{theorem:bach}, as the ridge parameter $\lambda\to0$, the minimum requirement of $N$ and the upper bound of the variance explode; while in our case, both the minimum requirement of $N$ and the bounds on the variance are independent of $\lambda$.
    \item[(ii)] While~\cite{bach2021learning} also mentioned the case where $\Tilde{\gamma}_{>M}\neq0$, however their bound for the inconsistent case is implicit;
    ~\footnote{For inconsistent cases (or misspecified model), \cite{bach2021learning} only derives an upper bound on the expected value of test error. Also, the bound involves a function defined as an infimum of some set.}
    while our work clearly states the impact of $\Tilde{\gamma}_{>M}$ on the test error. 
    \item[(iii)] Our upper bounds are  sharper than~\cite{bach2021learning}: 
    \begin{enumerate}
        \item For the bias, assume $\Tilde{\gamma}_{>M}=0$ for comparison purposes. Then by line (\ref{line:bias:main:complement0}) our upper bound would become:
        \begin{equation*}
            \text{bias} \leq \lambda\|\Tilde{f}\|_\mathcal{H}^2\left(1+2\sqrt{\frac{\log N}{N}}\right) + C_1\frac{\log N}{N}, 
        \end{equation*}
        which improves \cite{bach2021learning}'s upper bound $ \text{bias} \leq 4\lambda\|\Tilde{f}\|_\mathcal{H}^2$ significantly. First, we observe a minor improvement in the constant (by a factor of $\quarter$) when $N\to\infty$. Second and more importantly, for finite $N$, we observe a non-asymptotic decay on the upper bound, while ~\cite{bach2021learning}'s upper bound is insensitive to $N$. 
        \item For the variance, if we replace $\tau$ by $2/N$ in~\cite{bach2021learning}'s upper bound: with probability of at least $1-2/N$, 
        \begin{equation}
            \text{variance}\leq \frac{8\sigma^2R^2}{\lambda N} (1+2\log N),
        \end{equation}
        which has a decay rate of $\frac{\log N}{N}$, our work shows a much faster decay of $\frac{1}{N}$. Moreover, we show that the "unscaled" variance, that is $N\cdot(\text{variance})$, is actually bounded by $\bigo{1+\sqrt{\frac{\log N}{N}}}$, while~\cite{bach2021learning}'s upper bound on it would explode.
    \end{enumerate}
    \item[(iv)] We also provide a lower bound for the test error that matches with the upper bound when $N\to\infty$, while ~\cite{bach2021learning} does not derive any lower bound.
\end{enumerate}

While our work offers overall better bounds, there are some technical differences between the bounds of~\cite{bach2021learning} and ours:
\begin{enumerate}
    \item[(i)] We require some explicit mild condition on input distribution $\rho$ and on kernel $K$, while~\cite{bach2021learning} rely on the assumption $\sqrt{\sum_{k=1}^M\lambda_k}\leq R$ which gives an alternative implicit requirement;
    \item[(ii)] Our work eventually assumes under-parametrization $M<N$, while~\cite{bach2021learning} does not requires this. Essentially, we exploit the assumption on the under-parameterized regime to obtain a sharper bound than \cite{bach2021learning}. In short, we select the $L_\rho^2$ basis $\psi_k$ for analysis while \cite{bach2021learning} selects the basis in RKHS, which eventually affects the assumptions and conclusions. For more details, see Appendix \ref{appendix:proofs}. 
\end{enumerate}

\paragraph{Nystr\"om Subsampling and Random Feature Model} Nystr\"om Subsampling and Random feature models are two setting closely related to our work. \cite{rudi2015less, rudi2017generalization} bound the test error from above where the regularizer $\lambda=\lambda(N)$ decays as $N\to\infty$. In comparison, one main contribution of our work is that we provide both tighter upper bound and tighter lower bound than these prior works (they derive the same convergence rate on the upper bound up to constants but they do not derive a lower bound). Another major difference is that our bounds work for any regularization independent to $N$. 

\paragraph{Effective Dimension} In \cite{rudi2015less,rudi2017generalization} and other related works on kernel, the quantity $\mathcal{N}(\lambda)=\sum_{k=1}^M\frac{\lambda_k}{\lambda_k+\lambda}=\tr{\Bar{\mathbf{P}}}$ is called the effective dimension and it appeared as a factor in the upper bound. In our work, we can define a related quantity $\mathcal{N}^2(\lambda)=\sum_{k=1}^M\frac{\lambda_k^2}{(\lambda_k+\lambda)^2}=\tr{\Bar{\mathbf{P}}^2}$, which appears in our bound (See Proposition \ref{proposition:test_variance_approximation} for details.). Note that $\mathcal{N}^2(\lambda)\leq\mathcal{N}(\lambda)\leq M$. Indeed, we can sharpen the factor $\frac{M}{N}$ in line (\ref{line:vairance:main_UB}) to $\frac{\mathcal{N}^2(\lambda)}{N}$. 

\subsection{Experiments}

We examine the test error and showcase the validity of the bounds we derived for two different finite-rank kernels: the truncated neural tangent kernel (tNTK) and the Legendre kernel (LK) (see details of their definitions in Appendix \ref{appendix:numerical_validation}). Figure~\ref{figure:training} shows the true test error as a function of the sample size $N$. Figure~\ref{figure:bound} plots our upper bound compared to~\cite{bach2021learning}, clearly demonstrating significant improvements. We for instance note that the bound is tighter for any value of $\lambda$. Finally, Figure~\ref{figure:upper_lower_bound} shows both the upper and lower bound closely enclose the true test error. For more details on the experiment setup, including evaluating the lower bound, please see Appendix \ref{appendix:numerical_validation}.

\begin{figure}[ht!]
    \centering
    \includegraphics[width=0.80\linewidth]{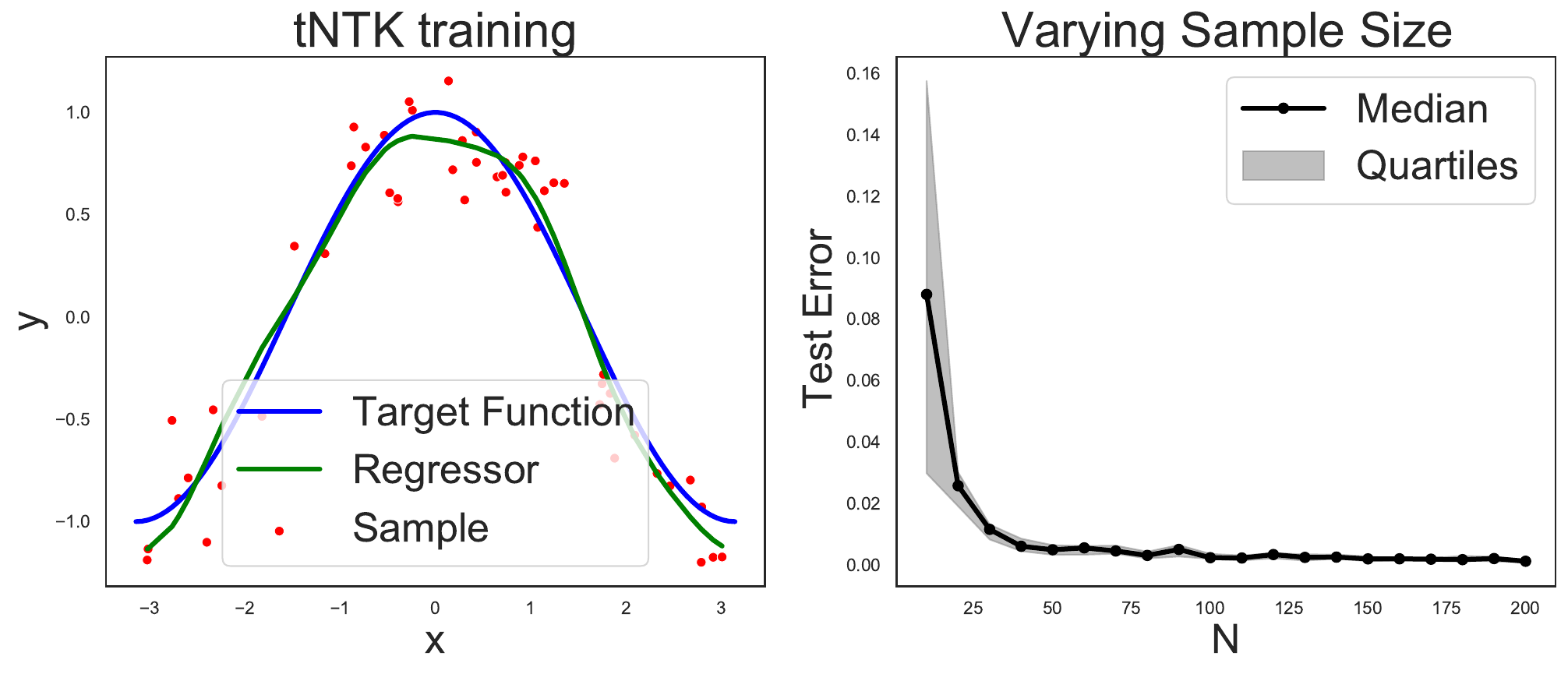}
    \includegraphics[width=0.80\linewidth]{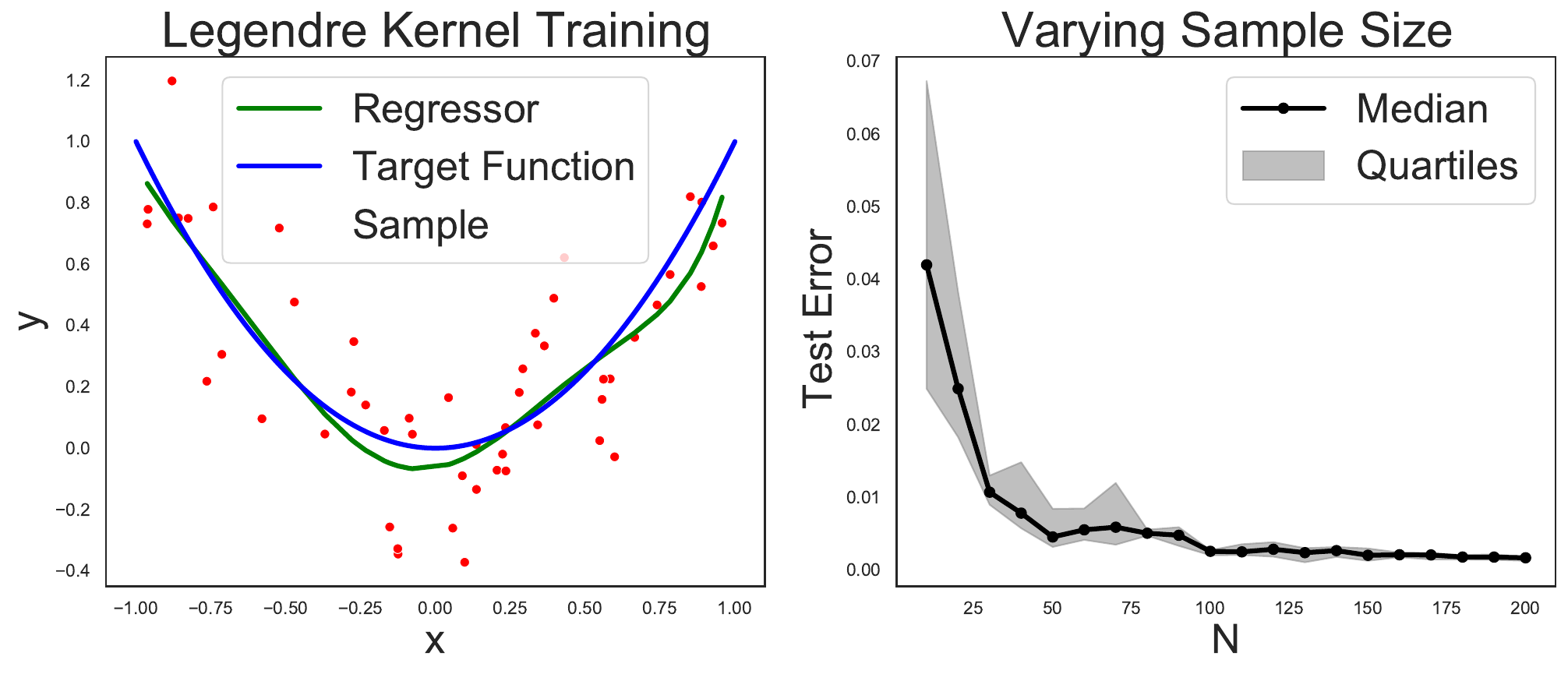}
    \caption{KRR on a finite-rank kernel. Left: KRR training; right: test error for varying $N$, over 10 iterations. The upper and lower quartiles are used as error bars.
    }
    \label{figure:training}
\end{figure}

\subsection{Future Research Direction}
An interesting extension would be to combine our results with \cite{tsigler2023benign,bordelon2020spectrum} in the over-parameterized regime to give an accurate transition and observe double descent. However, since we are interested in the sharpest possible bounds for the under-parameterized regime, we treat these cases separately. We will do the same for the over-parameterized regime in future work.
A special case is where the parameterization is at the threshold $M=N$ and $\lambda=0$. The blow-up of variance as $M\to N$ is well-known, where \cite{bach2023high,belkin2019reconciling} gives empirical and theoretical reports on the blow-up for kernels. We expect that we can formally prove this result for any finite-rank kernels using our exact formula in Proposition \ref{prop:test_variance} and some anti-concentration inequalities on random matrices.

\begin{figure}[ht!]
    \centering
    \includegraphics[width=.95\linewidth]{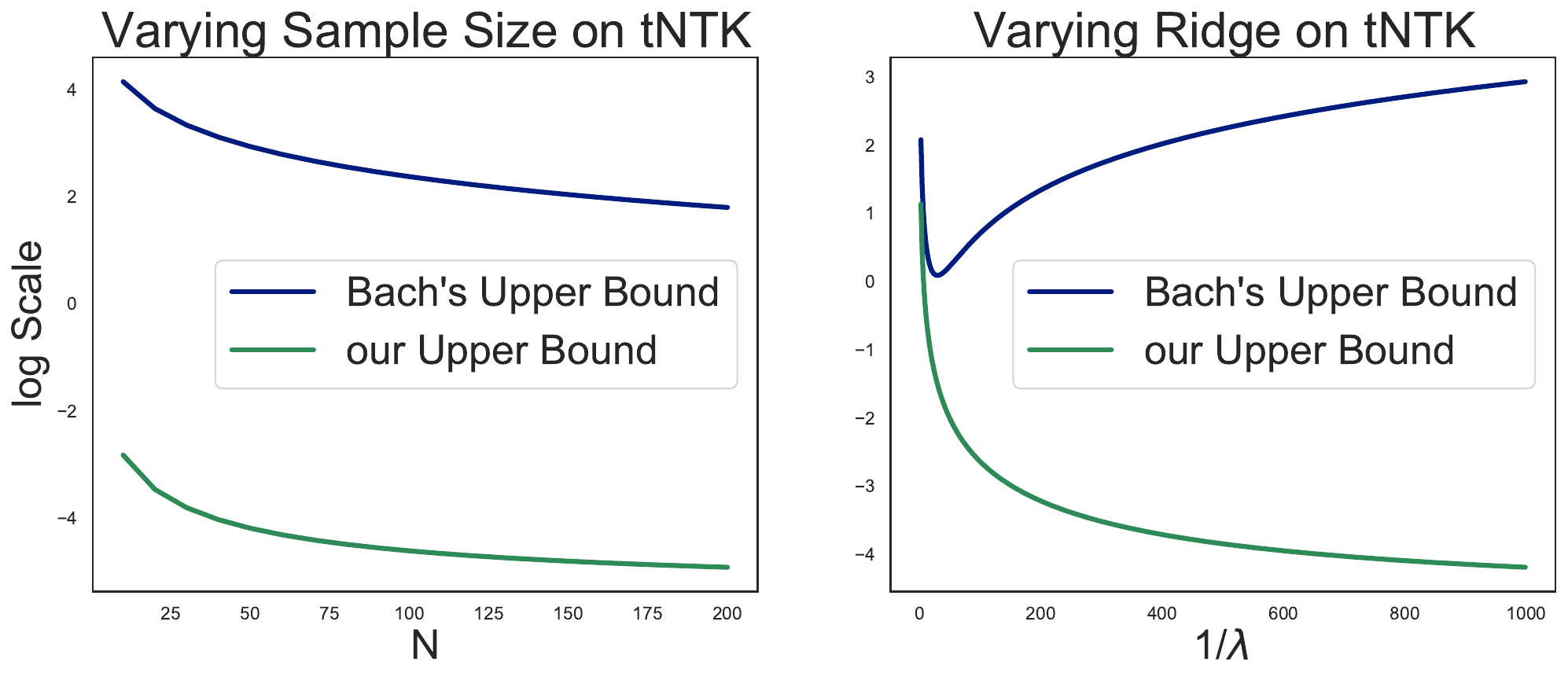}
    \includegraphics[width=.95\linewidth]{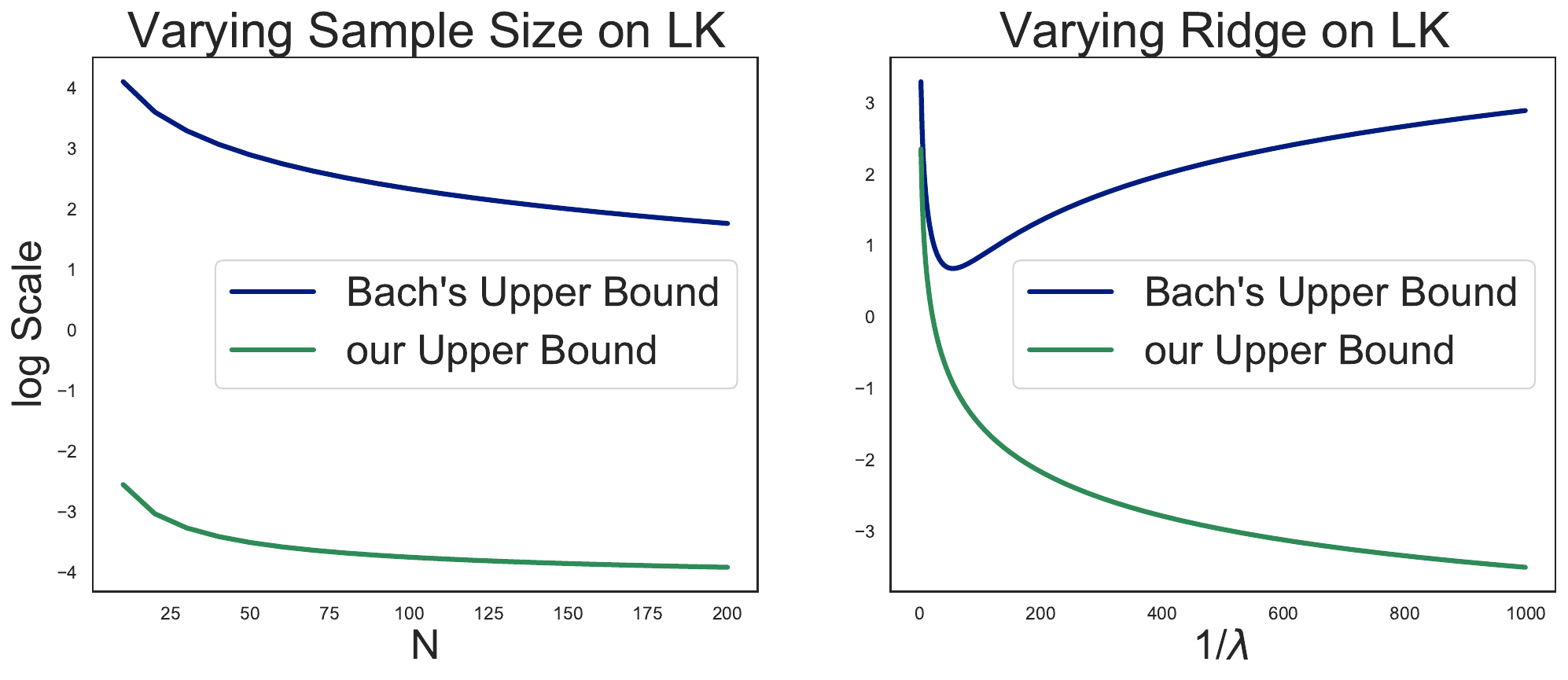}
    \caption{Comparison of test error bounds. Left: upper bounds for varying $N$; right: upper bounds for varying $\lambda$. In natural log scale.}
    \label{figure:bound}
\end{figure}

\begin{figure}[ht!]
    \centering
    \includegraphics[width=.95\linewidth]{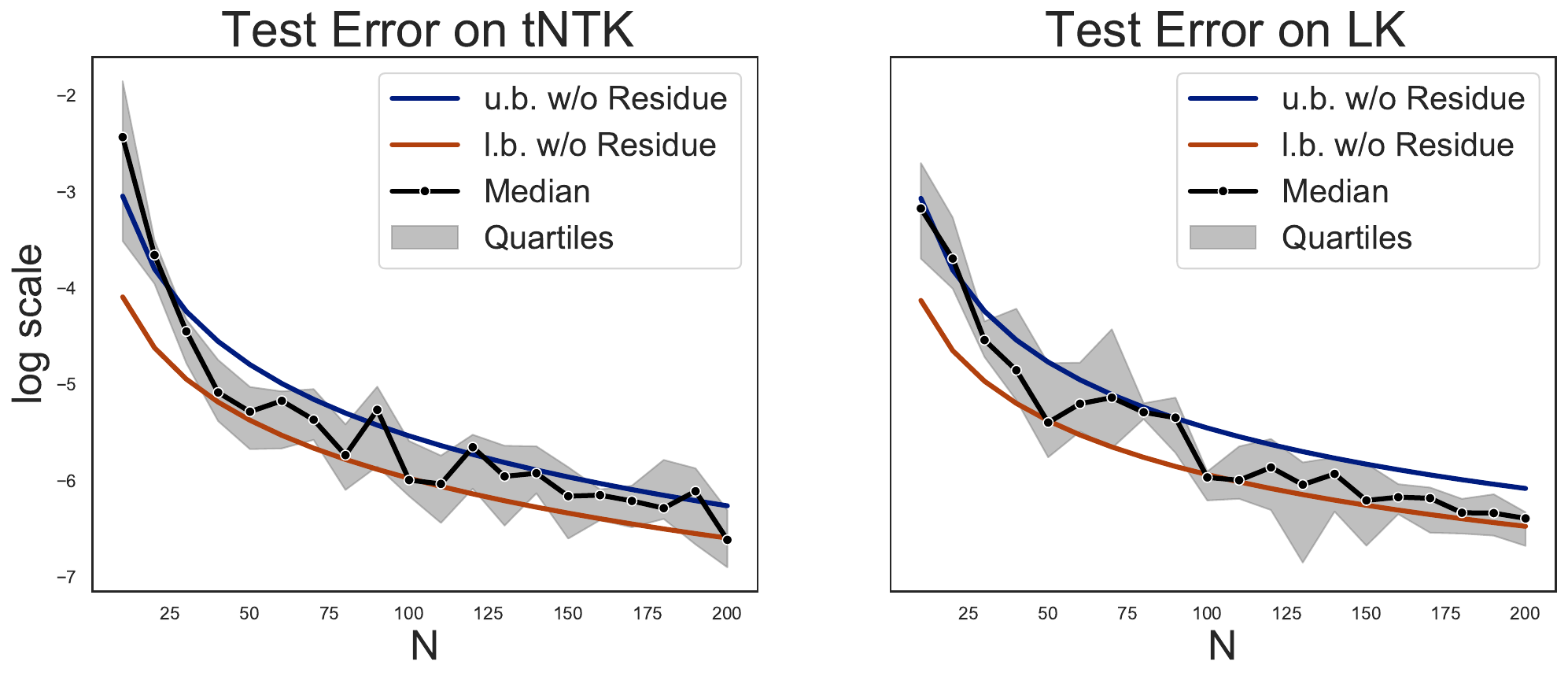}
    \caption{ Our bounds (u.b. = upper bound, l.w. = lower bound) comparing to the test error with varying $N$, over 10 iterations, in natural log scale.
    The residues with coefficients $C_1$ and $C_2$ are dropped by simplicity. As a result, the bounds are not `bounding' the averaged test error for small $N$. But for large $N$, the residues become negligible.}
    \label{figure:upper_lower_bound0}
\end{figure}

\bibliographystyle{unsrtnat} 
\bibliography{reference}


\newpage
\appendix
{\Large \textbf{Appendix}}

\section{Glossary} \label{appendix:glossary}
\begin{table}[ht]
  \caption{Glossary}
  \label{table:notation}
  \centering
  \begin{tabular}{llll}
    \toprule
    Name                    & Notation               & Expression                                           & Dimension    \\
    \midrule
    sampling distribution   & $\rho$                 & -                                                    & $\mathcal{X}\to\R^+$    \\
    sampling size           & $N$                    & -                                                    & integer \\
    input matrix            & $\mathbf{X}$           & $(x_i)_{i=1}^N \underset{iid}{\sim}\rho_\mathcal{X}$ & $ N \times d$  \\
    output vector           & $\mathbf{y}$           & $(y_i)_{i=1}^N$                                      & $N \times 1$  \\
    sample                  & $\mathbf{Z}$           & $ (\mathbf{X},\mathbf{y})$                           & $N\times(d+1) $ \\
    noise                   & $\varepsilon$          & -                                                    & random scalar  \\
    noise variance          & $\sigma^2$& $\mathbb{E}[\varepsilon^2]$                          & scalar  \\ 
    ridge                   & $\lambda$              & -                                                    & scalar \\
    finite-rank kernel      & $K$                    & $\sum_{k=1}^M\lambda_k\psi_k(\cdot)\psi_k(\cdot)$    & $\mathcal{X}\times\mathcal{X}\to\R$ \\
    kernel rank             & $M$                    & -                                                    & integer    \\
    $k$th eigenfunction     & $\psi_k$               & -                                                    & $\mathcal{X}\to\R$  \\
    $k$th value             & $\lambda_k$            & -                                                    & scalar   \\
    -                       & $\bm{\psi}(x)$         & $[\psi_k(x)]_{k=1}^M$                                & $M\times 1$ \\    
    -                       & $\bm{\Psi}$            & $[\psi_k(x_i)]_{k,i}$                                & $M\times N$ \\
    -                       & $\bm{\Lambda}$         & $\diag\big[ \lambda_k \big]$                         & $M \times M$ \\ 
    kernel matrix           & $\mathbf{K}$           & $ [K(x_i,x_j)]_{i,j}=\bm{\Psi}^\top\bm{\Lambda}\bm{\Psi}$ & $N \times N$ \\
    resolvent               & $\mathbf{R}$           & $(\mathbf{K}+\lambda N \mathbf{I}_N)^{-1}$           & $N \times N$ \\
    target function         & $\Tilde{f}$            & $\sum_{k=1}^M \Tilde{\gamma}_k\psi_k + \Tilde{\gamma}_{>M} \psi_{>M}$ & $\mathcal{X}\to\R$ \\
     -                      & $\Tilde{f}_{\leq M}$    & $\sum_{k=1}^M \Tilde{\gamma}_k\psi_k$                & $\mathcal{X}\to\R$ \\
    $k$th target coefficient& $\tilde{\gamma}_k$     & $\int_\mathcal{X}\Tilde{f}(x)\psi_k(x)d\rho_\mathcal{X}(x)$ & scalar  \\ 
    -                       & $\bm{\gamma}$          & $[\gamma_k]$                                         & $ M \times 1$\\
    orthonormal complement  & $\psi_{>M}$            & -                                                    &  $\mathcal{X}\to\R$ \\
    complementary coefficient& $\Tilde{\gamma}_{>M}$ & -                                                    & scalar \\
    -                       & $\bm{\Psi}_{>M}$       & $[\psi_{>M}(x_i)]$                                   & $1\times N$ \\
    test error              & $\mathcal{R}_{\mathbf{Z},\lambda}$& $\mathbb{E}_{x,\epsilon}\left[(f_{\mathbf{Z},\lambda}(x) - \Tilde{f}(x))^2\right]$ & scalar\\
    bias                    & -                      & $\int_\mathcal{X}\left(f_{(\mathbf{X},\Tilde{f}(\mathbf{X})),\lambda}(x) - \Tilde{f}(x)\right)^2d\rho(x)$ & scalar \\
    variance                & -                      & $\mathcal{R}_{\mathbf{Z},\lambda}-\text{bias}=\mathbb{E}_{x,\varepsilon} \left( \mathbf{K}_x^\top \mathbf{R} \bm{\varepsilon} \right)^2$ & scalar \\
    fluctuation matrix      & $\bm{\Delta}$          & $\frac{1}{N} \bm{\Psi}\bm{\Psi}^\top - \mathbf{I}_M$ & $M \times M$ \\ 
    fluctuation             & $\delta$               & $\|\bm{\Delta}\|_\text{op}$                          & scalar \\
    error vector            & $\bm{E}$               & $[\eta_k]$                                           & $M\times 1$ \\
    -                       & $\eta_{k}$             & $\frac{1}{N}\sum_{i=1}^N\psi_k(x_i)\psi_{>M}(x_i)$   & scalar \\ 
    -                       & $\mathbf{B}$           & $(\mathbf{I}_M+\bm{\Delta}+\lambda\bm{\Lambda}^{-1})^{-1}$ & $M \times M$ \\ 
    -                       & $\bar{\mathbf{P}}$     & $\diag\left[\frac{\lambda_k}{\lambda_k+\lambda}\right]$ & $M \times M$ \\
    \bottomrule
  \end{tabular}
\end{table}

\newpage

\section{Classical KRR Theory} \label{appendix:basics}
In an effort to keep our manuscript as self-contained as possible, we recall the Mercer decomposition, representer theorem for kernel ridge regression as well as the form of the bias-variance tradeoff in the KRR context. 

\subsection{Mercer Decomposition}
We begin with a general kernel $K^{(\infty)}:\mathcal{X}\times\mathcal{X}\to\R$. 
\begin{proposition}
{\cite{devito2006discretization}}
    Fix a sample distribution $\rho$. Let $K^{(\infty)}:\mathcal{X}\times\mathcal{X}\to\R$ be a reproducing kernel with corresponding RKHS $\mathcal{H}^{(\infty)}$. There exists a decreasing sequence of real numbers $\lambda_1\geq\lambda_2\geq...$, called the eigenvalues of the kernel $K^{(\infty)}$; and a sequence of pairwise-orthonormal
    functions $\{\psi_k\}_{k=1}^\infty\subset L_{\rho}^2$, called the eigenfunctions of $K^{(\infty)}$, such that for all $x,x'\in\mathcal{X}$, we have
    \begin{align}
        K^{(\infty)}(x,x') &= \sum_{k=1}^\infty \lambda_k\psi_k(x)\psi_k(x')
    \label{line:kernel_decomposition}
    \end{align} 
\end{proposition}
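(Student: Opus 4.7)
The plan is to reduce the statement to the spectral theorem for compact self-adjoint positive operators on the Hilbert space $L^2_\rho$, applied to the integral operator associated with $K^{(\infty)}$. Concretely, I would define
\[
T_K: L^2_\rho \to L^2_\rho, \qquad (T_K f)(x) = \int_{\mathcal{X}} K^{(\infty)}(x,y)\, f(y)\, d\rho(y),
\]
and first verify that $T_K$ is well-defined and bounded; since the paper (via the cited \cite{devito2006discretization} setup) works with reproducing kernels for which the associated integral operator is Hilbert--Schmidt (so in particular compact), I would invoke that framework to avoid re-proving Hilbert--Schmidt-ness from scratch. The symmetry $K^{(\infty)}(x,y)=K^{(\infty)}(y,x)$ gives self-adjointness of $T_K$, and positive semi-definiteness of the kernel (which is built into the definition of a reproducing kernel) transfers to $\langle T_K f, f\rangle_{L^2_\rho} \ge 0$ for every $f \in L^2_\rho$.

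Second, I would apply the spectral theorem for compact self-adjoint operators: there is an at-most-countable family of non-negative eigenvalues, accumulating only at $0$, together with an orthonormal system of eigenfunctions $\{\psi_k\}\subset L^2_\rho$ diagonalizing $T_K$ on $(\ker T_K)^\perp$. Since the eigenvalues form a sequence tending to $0$, I can reorder them in decreasing order $\lambda_1 \ge \lambda_2 \ge \dots > 0$ and index the corresponding $\psi_k$'s compatibly; kernel elements of $T_K$ contribute eigenvalue $0$ and are simply omitted from the expansion.

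Third, I would derive the kernel expansion. For fixed $x$, the function $K^{(\infty)}(x,\cdot)$ lies in the RKHS $\mathcal{H}^{(\infty)}$ and, under the hypotheses used by \cite{devito2006discretization}, also in $L^2_\rho$. Expanding it in the orthonormal basis $\{\psi_k\}$ (extended by a basis of $\ker T_K$ if necessary) and computing the coefficients via the eigenrelation $T_K \psi_k = \lambda_k \psi_k$ yields
\[
K^{(\infty)}(x,\cdot) = \sum_{k} \lambda_k \psi_k(x)\, \psi_k(\cdot) \quad \text{in } L^2_\rho,
\]
which, applied symmetrically in $x$, gives the claimed double expansion. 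The ordering of the $\lambda_k$ is then just a relabeling.

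The main obstacle is the precise mode of convergence of the displayed series. The spectral theorem readily yields $L^2_\rho$-type convergence of the partial sums, but the statement is asserted pointwise for all $x,x'\in\mathcal{X}$. Handling this subtlety is exactly why the proposition cites \cite{devito2006discretization}: in that framework, the kernel is assumed measurable and integrable in a way that forces the series representation to hold $\rho\otimes\rho$-almost everywhere, and a pointwise everywhere statement can be recovered either by redefining eigenfunctions on a null set or by appealing to the continuity of RKHS functions under the standard continuity/boundedness assumptions on $K^{(\infty)}$. I would therefore handle convergence by directly invoking the discretization/Mercer-type result of \cite{devito2006discretization} rather than reproving it, and only check that its hypotheses are subsumed by being a reproducing kernel on $\mathcal{X}\subset\mathbb{R}^d$ with $\rho$ a probability measure.
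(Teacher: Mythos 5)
The paper does not prove this proposition at all --- it is imported verbatim from the cited reference \cite{devito2006discretization} --- and your sketch follows exactly the standard route underlying that result: the spectral theorem for the compact, positive, self-adjoint integral operator $T_K$ on $L^2_\rho$, with the genuinely delicate points (well-definedness/compactness of $T_K$ and the pointwise, rather than merely $L^2_{\rho\otimes\rho}$, convergence of the expansion) correctly identified and delegated to the cited framework. So the proposal is correct and takes essentially the same (citational, Mercer-type) approach as the paper.
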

In particular, we assume $\lambda_{k}=0,\ \forall k>M$. In this case, we say the kernel $K(x,x')=\sum_{k=1}^M \lambda_k\psi_k(x)\psi_k(x')$ is of finite rank $M$ with corresponding (finite-dimensional) RKHS $\mathcal{H}$, recovering equation (\ref{equation:kernel}).

The first of these results, allows us to explicitly express the finite-rank kernel ridge regressor $f_{\mathbf{Z},\lambda}$.
\begin{proposition}[Representer Theorem - {\cite[Chapter 12]{wainwright2019high}}]
\label{prop:RepresenterTheorem}
Let $\mathbf{R}  \eqdef  (\mathbf{K} + \lambda N \mathbf{I}_N)^{-1} \in\R^{N\times N}
$ be the resolvent matrix and recall the kernel ridge regressor $f_{\mathbf{Z},\lambda}$ given by equation (\ref{eq:KRR_Finite_Rank}):
\begin{equation*}
    f_{\mathbf{Z},\lambda} \eqdef  \argmin_{f \in \mathcal{H}} \frac{1}{N} \sum_{i=1}^N\left(f(x_i)-y_i\right)^2 + \lambda \|f\|_{\mathcal{H}}^2
\end{equation*}
Then, for every $x\in\mathcal{X}$, we have the expression
\begin{equation} \label{equation:regressor}
    f_{\mathbf{Z},\lambda}(x) = \mathbf{y}^\top\mathbf{R}\mathbf{K}_x,\ \forall x\in\mathcal{X}, 
\end{equation}
where $\mathbf{K}_x\eqdef[K(x_i,x)]_{i=1}^N\in\R^{N\times1}$.
\end{proposition}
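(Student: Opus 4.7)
\textbf{Proof proposal for the Representer Theorem (Proposition \ref{prop:RepresenterTheorem}).} The plan is to establish the formula $f_{\mathbf{Z},\lambda}(x) = \mathbf{y}^\top \mathbf{R}\mathbf{K}_x$ through the classical two-step strategy: first reduce the optimization over the (finite-dimensional) RKHS $\mathcal{H}$ to an optimization over the $N$-dimensional subspace spanned by $\{K(\cdot,x_i)\}_{i=1}^N$, and then solve the resulting finite-dimensional quadratic program explicitly.

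First I would observe that $\mathcal{V}\eqdef\operatorname{span}\{K(\cdot,x_1),\dots,K(\cdot,x_N)\}\subseteq\mathcal{H}$ is finite-dimensional and hence closed, so every $f\in\mathcal{H}$ admits a unique orthogonal (in $\mathcal{H}$) decomposition $f=f_\parallel+f_\perp$ with $f_\parallel\in\mathcal{V}$ and $\langle f_\perp,K(\cdot,x_i)\rangle_{\mathcal{H}}=0$ for every $i$. By the reproducing property, $f_\perp(x_i)=\langle f_\perp,K(\cdot,x_i)\rangle_{\mathcal{H}}=0$, so the empirical data-fit term depends only on $f_\parallel$, while Pythagoras gives $\|f\|_{\mathcal{H}}^2=\|f_\parallel\|_{\mathcal{H}}^2+\|f_\perp\|_{\mathcal{H}}^2\ge\|f_\parallel\|_{\mathcal{H}}^2$. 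The objective in~\eqref{eq:KRR_Finite_Rank} is therefore minimized only when $f_\perp=0$, so any minimizer has the form $f=\sum_{i=1}^N\alpha_i K(\cdot,x_i)$ for some $\bm{\alpha}\in\mathbb{R}^N$. Strong convexity of the objective (guaranteed by $\lambda>0$) ensures uniqueness of the minimizer.

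Next I would substitute this parametrization into the objective. Using $f(x_i)=\sum_j \alpha_j K(x_i,x_j)=(\mathbf{K}\bm{\alpha})_i$ and $\|f\|_{\mathcal{H}}^2=\bm{\alpha}^\top\mathbf{K}\bm{\alpha}$, the optimization problem reduces to
\begin{equation*}
    \min_{\bm{\alpha}\in\mathbb{R}^N}\;\frac{1}{N}\|\mathbf{K}\bm{\alpha}-\mathbf{y}\|_2^2+\lambda\,\bm{\alpha}^\top\mathbf{K}\bm{\alpha}.
\end{equation*}
Setting the gradient to zero yields the normal equation $\mathbf{K}\bigl(\mathbf{K}\bm{\alpha}-\mathbf{y}\bigr)+\lambda N\mathbf{K}\bm{\alpha}=\mathbf{0}$, i.e.\ $\mathbf{K}(\mathbf{K}+\lambda N\mathbf{I}_N)\bm{\alpha}=\mathbf{K}\mathbf{y}$. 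Since $\mathbf{K}+\lambda N\mathbf{I}_N\succ 0$ is invertible, the choice $\bm{\alpha}=\mathbf{R}\mathbf{y}$ with $\mathbf{R}=(\mathbf{K}+\lambda N\mathbf{I}_N)^{-1}$ solves this system, and by uniqueness of the minimizer it is the solution (modulo the kernel of $\mathbf{K}$, which produces the same function in $\mathcal{H}$).

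Finally, for any $x\in\mathcal{X}$,
\begin{equation*}
    f_{\mathbf{Z},\lambda}(x)=\sum_{i=1}^N\alpha_i K(x_i,x)=\bm{\alpha}^\top\mathbf{K}_x=(\mathbf{R}\mathbf{y})^\top\mathbf{K}_x=\mathbf{y}^\top\mathbf{R}^\top\mathbf{K}_x=\mathbf{y}^\top\mathbf{R}\mathbf{K}_x,
\end{equation*}
where the last equality uses the symmetry of $\mathbf{R}$ (inherited from the symmetry of $\mathbf{K}$). The only mildly delicate step is the first one — ensuring the orthogonal decomposition is valid and that setting $f_\perp=0$ strictly decreases the objective unless $f_\perp=0$ already — but since $\mathcal{H}$ is finite-dimensional here and $\mathcal{V}$ is a closed subspace, this is immediate; the remainder is routine linear algebra.
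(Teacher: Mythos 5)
Your proof is correct: it is the standard representer-theorem argument (orthogonal projection onto $\mathrm{span}\{K(\cdot,x_i)\}_{i=1}^N$ followed by solving the resulting convex quadratic in $\bm{\alpha}$), which is exactly the classical proof the paper relies on by citing Wainwright, Chapter 12, rather than proving the statement itself. The handling of the possibly singular $\mathbf{K}$ (stationarity plus convexity, with $\ker\mathbf{K}$ contributing only the zero function) is also sound, so nothing is missing.
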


\subsection{Compact Matrix Expression}
First, let $\bm{\Psi}\eqdef (\psi_{k}(x_i))_{k=1,i=1}^{M,N}$ be the random $M\times N$ matrix defined by evaluating the $M$ eigenfunctions on all input training instances $\boldsymbol{X}\eqdef (x_i)_{i=1}^N$, $\bm{\Lambda}\eqdef\diag[\lambda_k]\in\R^{M\times M}$, and $\bm{\psi}(x)\eqdef[\psi_k(x)]_{k=1}^{M}\in\R^{M\times1}$. 
The advantage of this notation is that we can rewrite the equations in a more compact form.
For equation (\ref{equation:regressor}):
\begin{equation} \label{equation:regressor_compact}
    f_{\mathbf{Z},\lambda}(x) = \mathbf{y}^\top
    \underbrace{(\bm{\Psi}^\top\bm{\Lambda}\bm{\Psi}+\lambda N\mathbf{I}_M)^{-1}}_{\mathbf{R}}
    \bm{\Psi}^\top\bm{\Lambda}\bm{\psi}(x);
\end{equation}
for equation (\ref{equation:target_function}):
\begin{equation} \label{equation:target_function_compact}
    \Tilde{f}(x) = \Tilde{\bm{\gamma}}^\top\bm{\psi}(x) + \Tilde{\gamma}_{>M}\psi_{>M}(x).
\end{equation}

Last but not least, we define some important quantities for later analysis.
\begin{definition}[Fluctuation matrix] 
\label{definition:fluctuation_matrix}
The fluctuation matrix is the random $M\times M$-matrix given by
$
    \bm{\Delta} \eqdef  \frac{1}{N} \bm{\Psi}\bm{\Psi}^\top - \mathbf{I}_M. 
$
Our analysis will often involve the operator norm of $\bm{\Delta}$, which we denote by $\delta  \eqdef \|\bm{\Delta}\|_\text{op}$. 
\end{definition}
The fluctuation matrix $\bm{\Delta}$ measures the first source of randomness in the KRR's test error.  Namely it encodes the degree of non-orthonormality between the vectors obtained by evaluating of the $M$  eigenfunctions $\psi_1,\dots,\psi_M$ on the input $\mathbf{X}$.

\noindent The second source of randomness in the KRR's test error comes from the empirical evaluation of the dot product of the eigenfunction $\psi_k$'s and the orthogonal complement $\psi_{>M}$:
\begin{definition}[Error Vector] 
\label{definition:error_vector}
$\bm{E}\eqdef\frac{1}{N}\bm{\Psi}\psi_{>M}(\mathbf{X})$ is called the error vector.  
\end{definition}
The random matrix $\bm{\Delta}$ and the random vector $\bm{E}$ are centered; i.e.\ $\mathbb{E}_\mathbf{X}[\bm{\Delta}]=0$ and $\mathbb{E}_\mathbf{X}[\bm{E}]=0$.

\subsection{Bias-Variance Decomposition}
The derivation of several contemporary KRR generalization bounds \citep{bach2021learning,liu2020kernel,mei2021generalization} involves the classical Bias-Variance Trade-off: 
\begin{proposition}[Bias-Variance Trade-off]
\label{proposition:test_error_decomposition}
Fix a sample $\mathbf{Z}$. Recall the definition \ref{definition:test_error} of test error $\mathcal{R}_{\mathbf{Z},\lambda}$, bias, and variance:
\begin{align*}
        \mathcal{R}_{\mathbf{Z},\lambda} &\eqdef \mathbb{E}_{x,\epsilon}\left[(f_{\mathbf{Z},\lambda}(x) - \Tilde{f}(x))^2\right]
        = \mathbb{E}_\epsilon\left[ \int_\mathcal{X} \left(f_{\mathbf{Z},\lambda}(x) - \Tilde{f}(x)\right)^2 d\rho(x) \right];\\
        \text{bias} &\eqdef \int_\mathcal{X}\left(f_{(\mathbf{X},\Tilde{f}(\mathbf{X})),\lambda}(x) - \Tilde{f}(x)\right)^2d\rho(x)   ;\\
        \text{variance} &\eqdef \mathcal{R}_{\mathbf{Z},\lambda} - \text{bias}.
\end{align*}
Then, we can write $\text{variance}_\text{test}=\mathbb{E}_{x,\varepsilon} \left( \mathbf{K}_x^\top \mathbf{R} \bm{\varepsilon} \right)^2$ and hence the test error $\mathcal{R}_{\mathbf{Z},\lambda}$ admits a decomposition:
\begin{equation*}
    R_{\mathbf{Z},\lambda} = \text{bias} + \mathbb{E}_{x,\varepsilon} \left( \mathbf{K}_x^\top \mathbf{R} \bm{\varepsilon} \right)^2.
\end{equation*}
\end{proposition}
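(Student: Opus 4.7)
The plan is to exploit the linearity of the KRR map with respect to the label vector $\mathbf{y}$, which is made explicit by the representer theorem (Proposition \ref{prop:RepresenterTheorem}). Since $\mathbf{y} = \tilde{f}(\mathbf{X}) + \bm{\varepsilon}$ and the kernel ridge regressor is given by $f_{\mathbf{Z},\lambda}(x) = \mathbf{y}^\top \mathbf{R}\mathbf{K}_x$, we may split
\[
f_{\mathbf{Z},\lambda}(x) \;=\; \tilde{f}(\mathbf{X})^\top \mathbf{R}\mathbf{K}_x \;+\; \bm{\varepsilon}^\top \mathbf{R}\mathbf{K}_x \;=\; f_{(\mathbf{X},\tilde{f}(\mathbf{X})),\lambda}(x) \;+\; \mathbf{K}_x^\top \mathbf{R}\bm{\varepsilon},
\]
using the symmetry of $\mathbf{R}$. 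Subtracting $\tilde{f}(x)$ from both sides yields a clean additive decomposition of the pointwise error into a ``noiseless'' deterministic-in-$\bm{\varepsilon}$ part plus a ``pure-noise'' part.

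Next I would square this decomposition and compute the expectation over $\bm{\varepsilon}$ first (for fixed $\mathbf{X}$ and $x$). The cross term is
\[
2\bigl(f_{(\mathbf{X},\tilde{f}(\mathbf{X})),\lambda}(x) - \tilde{f}(x)\bigr)\,\mathbf{K}_x^\top \mathbf{R}\,\mathbb{E}[\bm{\varepsilon}],
\]
which vanishes because $\mathbb{E}[\varepsilon_i]=0$ and $\bm{\varepsilon}$ is independent of $\mathbf{X}$. What remains is the sum of the squared noiseless error and the squared noise contribution $(\mathbf{K}_x^\top \mathbf{R}\bm{\varepsilon})^2$. Finally I would integrate over $x\sim\rho$ and identify the first integral with the bias as defined in \eqref{equation:bias_definition}, and the remainder with the claimed variance expression, giving
\[
\mathcal{R}_{\mathbf{Z},\lambda} \;=\; \text{bias} \;+\; \mathbb{E}_{x,\varepsilon}\!\left[\bigl(\mathbf{K}_x^\top \mathbf{R}\bm{\varepsilon}\bigr)^2\right].
\]
The variance formula then follows by definition, since $\text{variance} \eqdef \mathcal{R}_{\mathbf{Z},\lambda} - \text{bias}$.

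There is no serious obstacle here: the argument is a textbook bias–variance decomposition, and the only content specific to our setting is the linearity $f_{\mathbf{Z},\lambda} = \mathbf{y}^\top \mathbf{R}\mathbf{K}_{(\cdot)}$ from the representer theorem, together with the mean-zero independence of $\bm{\varepsilon}$ from $\mathbf{X}$ stated in the data model. The only point that warrants care is ensuring the use of Fubini to interchange $\mathbb{E}_\varepsilon$ and the integral over $\rho$, which is justified since all terms are non-negative (Tonelli) or square-integrable by the assumption $\tilde{f}\in L^2_\rho$ and $\mathbb{E}[\varepsilon^2]=\sigma^2<\infty$, together with boundedness of $\|\mathbf{R}\mathbf{K}_x\|$ for fixed $\mathbf{X}$.
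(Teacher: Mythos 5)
Your proposal is correct and mirrors the paper's own argument: both split $f_{\mathbf{Z},\lambda}(x)=\mathbf{K}_x^\top\mathbf{R}\mathbf{y}$ via the representer theorem into a noiseless part $\mathbf{K}_x^\top\mathbf{R}\tilde{f}(\mathbf{X})=f_{(\mathbf{X},\tilde{f}(\mathbf{X})),\lambda}(x)$ and a noise part $\mathbf{K}_x^\top\mathbf{R}\bm{\varepsilon}$, expand the square, and use $\mathbb{E}[\bm{\varepsilon}]=0$ with independence from $\mathbf{X}$ to kill the cross term. Your remark on Fubini/Tonelli is a small extra justification the paper leaves implicit, but the route is essentially identical.
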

\begin{proof}
    See the proof of Theorem \ref{theorem:test_error}.
\end{proof}

\section{Proofs} \label{appendix:proofs}

In this section, we will derive the essential lemmata and propositions for proving the main theorems.

\subsection{Formula Derivation}
We begin with writing the test error in convenient forms. 

\subsubsection{Bias} \label{test_bias} 
We first derive, from the definition of the bias, a convenient expression to proceed:

\begin{proposition}[Bias Expression] \label{proposition:test_bias_expression}
  Let $\bm{\Psi}_{>M}\eqdef[\psi_{>M}(x_i)]_{i=1}^N$ as an $1\times N$- row vector, $\binom{\bm{\Psi}}{\bm{\Psi}_{>M}}$ as an $(M+1)\times N$ matrix. 
   Denote $\mathbf{P}  \eqdef  
        \begin{pmatrix}
        \mathbf{P}_{\leq M} & \mathbf{P}_{>M}
        \end{pmatrix}
    = \bm{\Lambda}\bm{\Psi}\mathbf{R}
    \begin{pmatrix}
        \bm{\Psi}^\top & \bm{\Psi}_{>M}^\top
    \end{pmatrix}
    \in \mathbb{R}^{M \times (M+1)}$ , $\mathbf{P}_{\leq M} \in \mathbb{R}^{M \times M}$ and $\mathbf{P}_{>M} \in \mathbb{R}^{M\times 1}$. 
    Then the bias admits the following expression:
    \begin{align*}
        \text{bias}
        &= 
       \underbrace{\Tilde{\gamma}_{>M}^2}_{\mbox{Finite Rank Error}}
        + 
        \underbrace{\|\tilde{\bm{\gamma}} - \mathbf{P}_{\leq M} \tilde{\bm{\gamma}} - \tilde{{\gamma}}_{>M} \mathbf{P}_{>M}\|_2^2}_{\mbox{Fitting Error}}
        .
    \end{align*}
\end{proposition}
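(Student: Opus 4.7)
The plan is a direct algebraic computation: plug the noiseless labels $\mathbf{y}=\tilde f(\mathbf{X})$ into the representer-theorem formula for the KRR, expand the result in the $L^2_\rho$-orthonormal basis $\{\psi_k\}_{k=1}^M\cup\{\psi_{>M}\}$, and read off the squared $L^2_\rho$-norm term by term.

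First, using Proposition~\ref{prop:RepresenterTheorem} in the compact form~\eqref{equation:regressor_compact}, I would write
\[
    f_{(\mathbf{X},\tilde f(\mathbf{X})),\lambda}(x)
    \;=\;
    \tilde f(\mathbf{X})^\top\,\mathbf{R}\,\bm{\Psi}^\top\bm{\Lambda}\bm{\psi}(x),
\]
and substitute $\tilde f(\mathbf{X})=\bm{\Psi}^\top\tilde{\bm{\gamma}}+\tilde\gamma_{>M}\bm{\Psi}_{>M}^\top$ from~\eqref{equation:target_function_compact}. Taking transposes and using that $\mathbf{R}$ and $\bm{\Lambda}$ are symmetric, the two pieces produce exactly $\mathbf{P}_{\leq M}\tilde{\bm{\gamma}}$ and $\tilde\gamma_{>M}\mathbf{P}_{>M}$ with the definitions of $\mathbf{P}_{\leq M}=\bm{\Lambda}\bm{\Psi}\mathbf{R}\bm{\Psi}^\top$ and $\mathbf{P}_{>M}=\bm{\Lambda}\bm{\Psi}\mathbf{R}\bm{\Psi}_{>M}^\top$; i.e.,
\[
    f_{(\mathbf{X},\tilde f(\mathbf{X})),\lambda}(x)
    \;=\;
    \bigl(\mathbf{P}_{\leq M}\tilde{\bm{\gamma}}+\tilde\gamma_{>M}\mathbf{P}_{>M}\bigr)^{\!\top}\bm{\psi}(x).
\]

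Next I would subtract $\tilde f(x)=\tilde{\bm{\gamma}}^\top\bm{\psi}(x)+\tilde\gamma_{>M}\psi_{>M}(x)$ to obtain the pointwise error
\[
    f_{(\mathbf{X},\tilde f(\mathbf{X})),\lambda}(x)-\tilde f(x)
    \;=\;
    \bigl(\mathbf{P}_{\leq M}\tilde{\bm{\gamma}}+\tilde\gamma_{>M}\mathbf{P}_{>M}-\tilde{\bm{\gamma}}\bigr)^{\!\top}\bm{\psi}(x)\;-\;\tilde\gamma_{>M}\psi_{>M}(x).
\]
This is an element of $L^2_\rho$ whose $\{\psi_k\}_{k=1}^M$-coordinates are the entries of $\mathbf{P}_{\leq M}\tilde{\bm{\gamma}}+\tilde\gamma_{>M}\mathbf{P}_{>M}-\tilde{\bm{\gamma}}$ and whose $\psi_{>M}$-coordinate is $-\tilde\gamma_{>M}$.

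Finally I would square and integrate against $\rho$, invoking the orthonormality relations $\int\psi_k\psi_\ell\,d\rho=\delta_{k\ell}$, $\int\psi_k\psi_{>M}\,d\rho=0$, and $\int\psi_{>M}^2\,d\rho=1$. The cross term vanishes and Parseval applied to the two orthogonal components yields exactly
\[
    \mathrm{bias}
    \;=\;
    \tilde\gamma_{>M}^2+\bigl\|\tilde{\bm{\gamma}}-\mathbf{P}_{\leq M}\tilde{\bm{\gamma}}-\tilde\gamma_{>M}\mathbf{P}_{>M}\bigr\|_2^2.
\]
There is no real obstacle here; the only thing to be careful about is bookkeeping of transposes so that the $\bm{\Lambda}$ and $\mathbf{R}$ on the right of $\bm{\Psi}$ in the representer formula land as $\mathbf{P}_{\leq M}$ and $\mathbf{P}_{>M}$ acting \emph{on the left} of the coefficient vectors, which is purely a consequence of the symmetry of $\bm{\Lambda}$ and $\mathbf{R}$.
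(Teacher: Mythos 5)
Your proposal is correct and follows essentially the same route as the paper: substitute the noiseless labels into the representer-theorem formula in its compact form, identify the coefficients of the learned function in the basis $\{\psi_k\}_{k=1}^M\cup\{\psi_{>M}\}$ as $\mathbf{P}_{\leq M}\tilde{\bm{\gamma}}+\tilde\gamma_{>M}\mathbf{P}_{>M}$, and conclude by Parseval/orthonormality, with the finite-rank error $\tilde\gamma_{>M}^2$ coming from the untouched $\psi_{>M}$-coordinate. The transpose bookkeeping you flag is handled exactly as you describe (symmetry of $\mathbf{R}$ and $\bm{\Lambda}$), so there is no gap.
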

\begin{proof}
    Recall that, by equations (\ref{equation:regressor_compact}) and (\ref{equation:target_function_compact}), we can write
    \begin{align*}
        \tilde{f}(x) &= \tilde{\bm{\gamma}}^\top \bm{\psi}(x) + \Tilde{\gamma}_{>M}\psi_{>M}(x),\\
        f_{(\mathbf{X},\Tilde{f}(\mathbf{X})),\lambda}(x) 
        &=( \tilde{\bm{\gamma}}^\top\bm{\Psi} + \Tilde{\gamma}_{>M}\bm{\Psi}_{>M}^\top ) \mathbf{R}\bm{\Psi}^\top \bm{\Lambda} \bm{\Psi}(x).
    \end{align*}
    Hence
    \begin{align}
        \text{bias}  
        &= \mathbb{E}_x \left[ \big( \tilde{\bm{\gamma}}^\top \bm{\psi}(x) + \Tilde{\gamma}_{>M}\psi_{>M}(x) - ( \tilde{\bm{\gamma}}^\top\bm{\Psi} + \Tilde{\gamma}_{>M}\bm{\Psi}_{>M}^\top ) \mathbf{R}\bm{\Psi}^\top \bm{\Lambda} \bm{\Psi}(x) \big)^2 \right] \nonumber\\
        &= \Big\| 
        \binom{\tilde{\bm{\gamma}}}{\Tilde{\gamma}_{>M}} - 
            \begin{pmatrix}
            \mathbf{P} \binom{\tilde{\bm{\gamma}}}{\Tilde{\gamma}_{>M}} \\ 0
            \end{pmatrix} 
        \Big\|_2^2  
        \label{bias_test_error_computation_0}\\
        &=  
        \underbrace{\tilde{\gamma}_{>M}^2}_{\mbox{Finite Rank Error}}
        +
        \underbrace{\|\tilde{\bm{\gamma}} - \mathbf{P}_{\leq M} \tilde{\bm{\gamma}} - \tilde{{\gamma}}_{>M}\mathbf{P}_{>M} \|_2^2}_{\mbox{Fitting Error}}
        \nonumber ,
\end{align}
    in line (\ref{bias_test_error_computation_0}), we use Parseval's identity.
\end{proof}

We proceed by reformulating the projection matrix $\mathbf{P}$, first with the left matrix $\mathbf{P}_{\leq M}$:

\begin{lemma} \label{lemma:prog_matrix_1}
    Recall the following notations
\begin{align*}
    \mathbf{K} 
    \eqdef \bm{\Psi}^\top\bm{\Lambda}\bm{\Psi},
    \quad
    \mathbf{R} 
    \eqdef (\mathbf{K}+\lambda N\mathbf{I}_M)^{-1},
    \quad
    \mathbf{P}_{\leq M} 
    \eqdef \bm{\Lambda}\bm{\Psi}\mathbf{R} \bm{\Psi}^\top.  
\end{align*}
Define the symmetric random matrix $\mathbf{B}\eqdef(\mathbf{I}_M+\bm{\Delta}+\lambda\bm{\Lambda}^{-1})^{-1}$. 
It holds that
\begin{align*}
    \mathbf{P}_{\leq M} &= \mathbf{I}_M-\lambda\mathbf{B}\bm{\Lambda}^{-1}.
\end{align*}
\end{lemma}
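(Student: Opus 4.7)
The plan is to reduce the $N\times N$ resolvent $\mathbf{R}$ to an equivalent $M\times M$ expression via a standard push-through identity, and then algebraically match the result against the definition of $\mathbf{B}$. Since this is a pure finite-dimensional linear algebra identity with no probabilistic content, the main task is careful bookkeeping and checking that every matrix I invert is actually invertible (which will be automatic thanks to $\lambda>0$ and $\lambda_k>0$ from Remark \ref{rem:orderedeigenvalued}).

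First, I would start from the trivial pre-inversion identity
\[
\bm{\Psi}\bigl(\bm{\Psi}^\top\bm{\Lambda}\bm{\Psi} + \lambda N\mathbf{I}_N\bigr) \;=\; \bigl(\bm{\Psi}\bm{\Psi}^\top\bm{\Lambda} + \lambda N\mathbf{I}_M\bigr)\bm{\Psi},
\]
obtained by distributing on both sides. Both bracketed matrices are strictly positive definite (because $\lambda>0$ and $\bm{\Lambda}$ has strictly positive diagonal), so inverting yields the push-through $\bm{\Psi}\mathbf{R} = (\bm{\Psi}\bm{\Psi}^\top\bm{\Lambda} + \lambda N\mathbf{I}_M)^{-1}\bm{\Psi}$. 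Substituting into $\mathbf{P}_{\leq M}=\bm{\Lambda}\bm{\Psi}\mathbf{R}\bm{\Psi}^\top$ then converts the $N\times N$ inverse into an $M\times M$ inverse.

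Second, I would use Definition \ref{definition:fluctuation_matrix} to replace $\tfrac{1}{N}\bm{\Psi}\bm{\Psi}^\top$ by $\mathbf{I}_M+\bm{\Delta}$, and factor $\bm{\Lambda}$ out of the inner inverse via
\[
\bigl((\mathbf{I}_M+\bm{\Delta})\bm{\Lambda} + \lambda\mathbf{I}_M\bigr)^{-1} \;=\; \bm{\Lambda}^{-1}\bigl(\mathbf{I}_M+\bm{\Delta}+\lambda\bm{\Lambda}^{-1}\bigr)^{-1} \;=\; \bm{\Lambda}^{-1}\mathbf{B}.
\]
The outer $\bm{\Lambda}$ then cancels with this $\bm{\Lambda}^{-1}$, leaving $\mathbf{P}_{\leq M}=\mathbf{B}(\mathbf{I}_M+\bm{\Delta})$.

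Finally, I would rewrite $\mathbf{I}_M+\bm{\Delta} = \mathbf{B}^{-1}-\lambda\bm{\Lambda}^{-1}$ straight from the definition of $\mathbf{B}$ and distribute $\mathbf{B}$ to conclude $\mathbf{P}_{\leq M} = \mathbf{I}_M - \lambda\mathbf{B}\bm{\Lambda}^{-1}$. There is no real obstacle in this argument; the only step that requires any thought is the push-through, and it is entirely standard. The identity is useful downstream precisely because it isolates the dependence on $\lambda$ into a single additive correction $\lambda\mathbf{B}\bm{\Lambda}^{-1}$, which is what lets the sharp-$\lambda$ analysis alluded to in the sketch proof of Theorem \ref{theorem:main:1} go through.
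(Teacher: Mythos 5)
Your proof is correct and takes a genuinely different route from the paper's. The paper's argument left-multiplies $\mathbf{P}_{\leq M}$ by $\bm{\Psi}\bm{\Psi}^\top$, uses the resolvent identity $\mathbf{K}(\mathbf{K}+\lambda N\mathbf{I})^{-1}=\mathbf{I}-\lambda N(\mathbf{K}+\lambda N\mathbf{I})^{-1}$ to produce the implicit linear equation $\bm{\Lambda}+\bm{\Lambda}\bm{\Delta}=(\bm{\Lambda}+\bm{\Lambda}\bm{\Delta}+\lambda\mathbf{I}_M)\mathbf{P}_{\leq M}$, and then solves for $\mathbf{P}_{\leq M}$. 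You instead invoke the push-through identity up front, converting the $N\times N$ resolvent $\mathbf{R}$ to an $M\times M$ inverse in one step, substitute $\bm{\Psi}\bm{\Psi}^\top=N(\mathbf{I}_M+\bm{\Delta})$, and factor $\bm{\Lambda}$ out to recognize $\mathbf{B}$. Your route passes through the clean intermediate identity $\mathbf{P}_{\leq M}=\mathbf{B}(\mathbf{I}_M+\bm{\Delta})$, which the paper's derivation never writes down explicitly, and then finishes with $\mathbf{I}_M+\bm{\Delta}=\mathbf{B}^{-1}-\lambda\bm{\Lambda}^{-1}$. Both are valid finite-dimensional linear algebra; yours is arguably a bit tidier because the push-through does in one stroke what the paper accomplishes with a chain of rearrangements. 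One minor imprecision: $\bm{\Psi}\bm{\Psi}^\top\bm{\Lambda}+\lambda N\mathbf{I}_M$ is not symmetric, so ``strictly positive definite'' is not the right label; what you actually need is invertibility, which holds because $\bm{\Psi}\bm{\Psi}^\top\bm{\Lambda}$ is similar (via conjugation by $\bm{\Lambda}^{1/2}$) to the PSD matrix $\bm{\Lambda}^{1/2}\bm{\Psi}\bm{\Psi}^\top\bm{\Lambda}^{1/2}$, hence has non-negative real spectrum, and the shift by $\lambda N>0$ pushes every eigenvalue strictly off zero.
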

\begin{proof}
We first observe that
    \begin{align}
        \bm{\Psi}\bm{\Psi}^\top\mathbf{P}_{\leq M} &= 
        \bm{\Psi}\bm{\Psi}^\top\bm{\Lambda}\bm{\Psi}\mathbf{R}\bm{\Psi}^\top
        \label{line:lemma:prog_matrix_1:1}
        \\
        &= \bm{\Psi}\mathbf{K}(\mathbf{K}+\lambda N\mathbf{I}_M)^{-1}\bm{\Psi}^\top
        \nonumber\\
        &= \bm{\Psi}\left(\mathbf{I}_M-\lambda N(\mathbf{K}+\lambda N\mathbf{I}_M)^{-1} \right)\bm{\Psi}^\top
        \nonumber\\
        &= \bm{\Psi}\bm{\Psi}^\top-\lambda N\bm{\Psi}(\mathbf{K}+\lambda N\mathbf{I}_M)^{-1} \bm{\Psi}^\top.
        \label{line:lemma:prog_matrix_1:4}
    \end{align}
From lines ~\eqref{line:lemma:prog_matrix_1:1}-~\eqref{line:lemma:prog_matrix_1:4} and the definition of the fluctuation matrix $\bm{\Delta}$ we deduce
    \begin{align}
        \frac{1}{N}\bm{\Psi}\bm{\Psi}^\top(\mathbf{I}_M-\mathbf{P}_{\leq M}) &=
        \lambda\bm{\Psi}(\mathbf{K}+\lambda N\mathbf{I}_M)^{-1}\bm{\Psi}^\top
    \nonumber\\
        (\mathbf{I}_M+\bm{\Delta})(\mathbf{I}_M-\mathbf{P}_{\leq M}) &=
        \lambda\bm{\Psi}\mathbf{R}\bm{\Psi}^\top
    \nonumber\\
        (\mathbf{I}_M+\bm{\Delta})(\mathbf{I}_M-\mathbf{P}_{\leq M}) &=
        \lambda\bm{\Lambda}^{-1}\mathbf{P}_{\leq M}
    \nonumber\\
        (\bm{\Lambda}+\bm{\Lambda}\bm{\Delta})(\mathbf{I}_M-\mathbf{P}_{\leq M}) &=
    \lambda\mathbf{P}_{\leq M}
    \nonumber\\
        \bm{\Lambda}+\bm{\Lambda}\bm{\Delta} &=
        (\bm{\Lambda}+\bm{\Lambda}\bm{\Delta}+\lambda\mathbf{I}_M)\mathbf{P}_{\leq M}.
    \label{line:lemma:prog_matrix_1:9}
    \end{align}
Rearranging~\eqref{line:lemma:prog_matrix_1:9} and applying the definition of $\boldsymbol{B}$ we find that
    \begin{align}
        \mathbf{P}_{\leq M} &= (\bm{\Lambda}+\bm{\Lambda}\bm{\Delta}+\lambda\mathbf{I}_M)^{-1}(\bm{\Lambda}+\bm{\Lambda}\bm{\Delta})
    \label{line:lemma:prog_matrix_1:10}
    \\
        &= \mathbf{I}_M-\lambda(\bm{\Lambda}+\bm{\Lambda}\bm{\Delta}+\lambda\mathbf{I}_M)^{-1}
    \nonumber\\
        &= \mathbf{I}_M-\lambda(\bm{\Lambda}+\bm{\Lambda}\bm{\Delta}+\lambda\mathbf{I}_M)^{-1}\bm{\Lambda}\bm{\Lambda}^{-1}
    \nonumber\\
       &= \mathbf{I}_M-\lambda\mathbf{B}\bm{\Lambda}^{-1}.
    \label{line:lemma:prog_matrix_1:13}
    \end{align}
\end{proof}
Arguing analogously for the right matrix $\mathbf{P}_{>M}$, we draw the subsequent similar conclusion.  
\begin{lemma} \label{lemma:prog_matrix_2}
    Recall the following notations
\begin{align*}
    \mathbf{K} 
     &
    \eqdef \bm{\Psi}^\top\bm{\Lambda}\bm{\Psi},
    \quad
    \mathbf{R} 
    \eqdef (\mathbf{K}+\lambda N\mathbf{I}_M)^{-1},
    \quad 
    \mathbf{P}_{> M} 
    \eqdef \bm{\Lambda}\bm{\Psi}\mathbf{R} \bm{\Psi}_{> M}^\top, \\
    \bm{E} 
    &
    \eqdef \frac{1}{N}\bm{\Psi}\bm{\Psi}_{>M}^\top, \quad
    \mathbf{B}
   \eqdef(\mathbf{I}_M+\bm{\Delta}+\lambda\bm{\Lambda}^{-1})^{-1}. 
\end{align*}
We have that $
    \mathbf{P}_{> M} 
    = \mathbf{B}\bm{E}.
    $
\end{lemma}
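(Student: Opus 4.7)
The plan is to mimic exactly the strategy used in Lemma \ref{lemma:prog_matrix_1}, with the only change being that the right factor is now $\bm{\Psi}_{>M}^\top$ instead of $\bm{\Psi}^\top$. The structural reason this works is that the eigendecomposition identity $\mathbf{K}\mathbf{R} = \mathbf{I} - \lambda N \mathbf{R}$ is independent of what sits to the right of $\mathbf{R}$, so the same resolvent manipulation carries through verbatim.

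Concretely, I would start by left-multiplying $\mathbf{P}_{>M} = \bm{\Lambda}\bm{\Psi}\mathbf{R}\bm{\Psi}_{>M}^\top$ by $\bm{\Psi}\bm{\Psi}^\top$, using $\bm{\Psi}\bm{\Psi}^\top\bm{\Lambda}\bm{\Psi} = \bm{\Psi}\mathbf{K}$ to get $\bm{\Psi}\bm{\Psi}^\top\mathbf{P}_{>M} = \bm{\Psi}\mathbf{K}\mathbf{R}\bm{\Psi}_{>M}^\top$. Then I apply the resolvent identity $\mathbf{K}\mathbf{R} = \mathbf{I}_N - \lambda N \mathbf{R}$ to turn this into $\bm{\Psi}\bm{\Psi}_{>M}^\top - \lambda N \bm{\Psi}\mathbf{R}\bm{\Psi}_{>M}^\top$. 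The second term can be re-expressed as $\lambda N \bm{\Lambda}^{-1}\mathbf{P}_{>M}$ using the definition of $\mathbf{P}_{>M}$.

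Dividing by $N$ and using the definitions $\frac{1}{N}\bm{\Psi}\bm{\Psi}^\top = \mathbf{I}_M + \bm{\Delta}$ (from Definition \ref{definition:fluctuation_matrix}) and $\bm{E} = \frac{1}{N}\bm{\Psi}\bm{\Psi}_{>M}^\top$ (from Definition \ref{definition:error_vector}) gives
\begin{equation*}
(\mathbf{I}_M + \bm{\Delta})\mathbf{P}_{>M} = \bm{E} - \lambda\bm{\Lambda}^{-1}\mathbf{P}_{>M},
\end{equation*}
which rearranges to $(\mathbf{I}_M + \bm{\Delta} + \lambda\bm{\Lambda}^{-1})\mathbf{P}_{>M} = \bm{E}$, i.e.\ $\mathbf{B}^{-1}\mathbf{P}_{>M} = \bm{E}$. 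Inverting yields the claim $\mathbf{P}_{>M} = \mathbf{B}\bm{E}$.

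There is no real obstacle here; this is essentially a verification. The only subtle point worth double-checking is that $\mathbf{B}$ is well-defined, i.e.\ that $\mathbf{I}_M + \bm{\Delta} + \lambda\bm{\Lambda}^{-1}$ is invertible. This is automatic whenever $\lambda > 0$ or $\bm{\Psi}$ has rank $M$ (since $\bm{\Lambda} \succ 0$ and the $\lambda \bm{\Lambda}^{-1}$ term is positive definite), so the same conditions that make Lemma \ref{lemma:prog_matrix_1} meaningful also make Lemma \ref{lemma:prog_matrix_2} meaningful, and no additional assumption on the complementary eigenfunction $\psi_{>M}$ is needed.
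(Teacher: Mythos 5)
Your proposal is correct and follows essentially the same route as the paper's proof: left-multiply by $\bm{\Psi}\bm{\Psi}^\top$, apply the resolvent identity $\mathbf{K}\mathbf{R}=\mathbf{I}_N-\lambda N\mathbf{R}$, rewrite $\lambda N\bm{\Psi}\mathbf{R}\bm{\Psi}_{>M}^\top$ as $\lambda N\bm{\Lambda}^{-1}\mathbf{P}_{>M}$, divide by $N$, and invert $\mathbf{I}_M+\bm{\Delta}+\lambda\bm{\Lambda}^{-1}$ (the paper merely clears $\bm{\Lambda}^{-1}$ by multiplying through by $\bm{\Lambda}$ before inverting, which is the same computation). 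Your remark on the invertibility of $\mathbf{B}$ is a sensible additional check and requires nothing beyond what the paper already assumes.
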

\begin{proof}
Similarly to~\eqref{line:lemma:prog_matrix_1:1}-~\eqref{line:lemma:prog_matrix_1:4} we note that
    \begin{align*}
        \bm{\Psi}\bm{\Psi}^\top\mathbf{P}_{> M} &= 
        \bm{\Psi}\bm{\Psi}^\top\bm{\Lambda}\bm{\Psi}\mathbf{R}\bm{\Psi}_{> M}^\top
    \\
        &= \bm{\Psi}\mathbf{K}(\mathbf{K}+\lambda N\mathbf{I}_M)^{-1}\bm{\Psi}_{> M}^\top
    \\
        &= \bm{\Psi}\left(\mathbf{I}_M-\lambda N(\mathbf{K}+\lambda N\mathbf{I}_M)^{-1} \right)\bm{\Psi}_{> M}^\top
    \\
        &= \bm{\Psi}\bm{\Psi}_{> M}^\top-\lambda N\bm{\Psi}(\mathbf{K}+\lambda N\mathbf{I}_M)^{-1} \bm{\Psi}_{>M}^\top.
    \end{align*}
Analogously to the computations in~\eqref{line:lemma:prog_matrix_1:10}-\eqref{line:lemma:prog_matrix_1:13}
    \begin{align*}
        (\mathbf{I}_M+\bm{\Delta})\mathbf{P}_{> M} &=
        \bm{E} - \lambda\bm{\Psi}(\mathbf{K}+\lambda N\mathbf{I}_M)^{-1} \bm{\Psi}_{>M}^\top
    \\
        (\mathbf{I}_M+\bm{\Delta})\mathbf{P}_{> M} &=
        \bm{E} - \lambda\bm{\Lambda}^{-1}\bm{\Lambda}\bm{\Psi}(\mathbf{K}+\lambda N\mathbf{I}_M)^{-1} \bm{\Psi}_{>M}^\top
    \\
        (\mathbf{I}_M+\bm{\Delta})\mathbf{P}_{> M} &=
        \bm{E} - \lambda\bm{\Lambda}^{-1}\mathbf{P}_{> M} 
    \\
        (\bm{\Lambda}+\bm{\Lambda}\bm{\Delta})\mathbf{P}_{> M} &=
        \bm{\Lambda}\bm{E} - \lambda\mathbf{P}_{> M} 
    \\
        (\bm{\Lambda}+\bm{\Lambda}\bm{\Delta}+\lambda\mathbf{I}_M)\mathbf{P}_{> M} &=
        \bm{\Lambda}\bm{E}
    \\
        \mathbf{P}_{> M} &=
        (\bm{\Lambda}+\bm{\Lambda}\bm{\Delta}+\lambda\mathbf{I}_M)^{-1}\bm{\Lambda}\bm{E}
    \\
        \mathbf{P}_{> M} &=
        \mathbf{B}\bm{E}.
    \end{align*}
\end{proof}

\begin{lemma}[Fitting Error] \label{lemma:fitting_error}
Recall the notation
    \begin{align*}
        \text{fitting error} &= \|\tilde{\bm{\gamma}} - \mathbf{P}_{\leq M} \tilde{\bm{\gamma}} - \tilde{\gamma}_{>M}\mathbf{P}_{>M} \|_2^2,\\
        \mathbf{B}&\eqdef(\mathbf{I}_M+\bm{\Delta}+\lambda\bm{\Lambda}^{-1})^{-1}.
    \end{align*}
We have 
    $
        \text{fitting error} 
        =\left\|\mathbf{B}\left(\lambda \bm{\Lambda}^{-1}\tilde{\bm{\gamma}} - \bm{E}\tilde{\bm{\gamma}}_{>M}\right)\right\|_2^2
    .
    $
\end{lemma}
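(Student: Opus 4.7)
The plan is to carry out a direct algebraic substitution, plugging the closed forms of $\mathbf{P}_{\leq M}$ and $\mathbf{P}_{>M}$ already derived in Lemmas \ref{lemma:prog_matrix_1} and \ref{lemma:prog_matrix_2} into the vector whose squared Euclidean norm defines the fitting error. There is no probabilistic or analytic content here beyond what those two lemmas already provided; everything reduces to manipulating identities between $M \times M$ matrices and $M \times 1$ vectors.

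Concretely, the first step is to recall from Lemma \ref{lemma:prog_matrix_1} that $\mathbf{P}_{\leq M} = \mathbf{I}_M - \lambda \mathbf{B}\bm{\Lambda}^{-1}$, so that the ``in-RKHS'' residual simplifies as
\[
\tilde{\bm{\gamma}} - \mathbf{P}_{\leq M}\tilde{\bm{\gamma}}
= \tilde{\bm{\gamma}} - (\mathbf{I}_M - \lambda\mathbf{B}\bm{\Lambda}^{-1})\tilde{\bm{\gamma}}
= \lambda \mathbf{B}\bm{\Lambda}^{-1}\tilde{\bm{\gamma}},
\]
where the identity term cancels. The second step is to invoke Lemma \ref{lemma:prog_matrix_2}, which identifies the ``out-of-RKHS'' contribution as $\tilde{\gamma}_{>M}\mathbf{P}_{>M} = \tilde{\gamma}_{>M}\mathbf{B}\bm{E}$ (with $\tilde{\gamma}_{>M}$ a scalar, so the order of multiplication is immaterial).

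Combining these two reductions and then factoring the common left factor $\mathbf{B}$ out of the resulting linear combination gives
\[
\tilde{\bm{\gamma}} - \mathbf{P}_{\leq M}\tilde{\bm{\gamma}} - \tilde{\gamma}_{>M}\mathbf{P}_{>M}
= \mathbf{B}\!\left(\lambda\bm{\Lambda}^{-1}\tilde{\bm{\gamma}} - \bm{E}\tilde{\gamma}_{>M}\right).
\]
Taking the squared $\ell_2$-norm of both sides yields the claim. Since each step is an immediate substitution, there is no real obstacle to anticipate; the only thing to be careful about is bookkeeping with the scalar $\tilde{\gamma}_{>M}$ and ensuring that $\mathbf{B}$, which is symmetric by definition, really can be pulled to the left of both summands (it can, since both pieces are obtained by premultiplying column vectors by $\mathbf{B}$).
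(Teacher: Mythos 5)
Your proposal is correct and follows exactly the paper's own argument: substitute $\mathbf{P}_{\leq M}=\mathbf{I}_M-\lambda\mathbf{B}\bm{\Lambda}^{-1}$ and $\mathbf{P}_{>M}=\mathbf{B}\bm{E}$ from Lemmas \ref{lemma:prog_matrix_1} and \ref{lemma:prog_matrix_2}, cancel the identity, factor out the common left factor $\mathbf{B}$, and take the squared norm. (The symmetry of $\mathbf{B}$ plays no role here; only the fact that both terms carry $\mathbf{B}$ as a left factor matters, as you note.)
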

\begin{proof}
By lemmata \ref{lemma:prog_matrix_1} and \ref{lemma:prog_matrix_2}, 
    \begin{align*}
        \|\tilde{\bm{\gamma}} - \mathbf{P}_{\leq M} \tilde{\bm{\gamma}} - \tilde{\gamma}_{>M}\mathbf{P}_{>M} \|_2^2 &=
        \left\|\tilde{\bm{\gamma}} - (\mathbf{I}_M-\lambda\mathbf{B}\bm{\Lambda}^{-1})\tilde{\bm{\gamma}} \tilde{\bm{\gamma}} - \mathbf{B}\bm{E}\tilde{\bm{\gamma}}_{>M} \right\|_2^2
    \\
        &=\left\|\mathbf{B}\left(\lambda \bm{\Lambda}^{-1}\tilde{\bm{\gamma}} - \bm{E}\tilde{\bm{\gamma}}_{>M}\right) \right\|_2^2.
    \end{align*}
\end{proof}
Hence we come up with a new expression of the bias:
\begin{proposition}[Bias]\label{prop:test_bias}
    Recall that $\mathbf{B}\eqdef(\mathbf{I}_M+\bm{\Delta}+\lambda\bm{\Lambda}^{-1})^{-1}$. The bias $\mathbb{E}_x \big( f^\lambda_\mathbf{X}(x) - \tilde{f}(x) \big)^2$ has the following expression:
    \begin{align*}
        \text{bias} &= 
        \tilde{\gamma}_{>M}^2
        +
        \left\|\mathbf{B}\left(\lambda \bm{\Lambda}^{-1}\tilde{\bm{\gamma}} - \tilde{\gamma}_{>M}\bm{E}\right) \right\|_2^2. 
    \end{align*}
\end{proposition}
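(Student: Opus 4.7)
The plan is to simply combine the three preceding results — Proposition \ref{proposition:test_bias_expression}, and Lemmata \ref{lemma:prog_matrix_1}, \ref{lemma:prog_matrix_2} — which already do all the heavy lifting. Proposition \ref{proposition:test_bias_expression} establishes the decomposition
\[
\text{bias} = \tilde{\gamma}_{>M}^2 + \|\tilde{\bm{\gamma}} - \mathbf{P}_{\leq M}\tilde{\bm{\gamma}} - \tilde{\gamma}_{>M}\mathbf{P}_{>M}\|_2^2,
\]
so the finite rank term $\tilde{\gamma}_{>M}^2$ is already in the desired form. It remains to rewrite the fitting error.

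First I would substitute $\mathbf{P}_{\leq M} = \mathbf{I}_M - \lambda \mathbf{B}\bm{\Lambda}^{-1}$ from Lemma \ref{lemma:prog_matrix_1} into the fitting error, which causes the identity term to cancel with $\tilde{\bm{\gamma}}$ and leaves $\lambda \mathbf{B}\bm{\Lambda}^{-1}\tilde{\bm{\gamma}}$. Next I would substitute $\mathbf{P}_{>M} = \mathbf{B}\bm{E}$ from Lemma \ref{lemma:prog_matrix_2}, so that the $\tilde{\gamma}_{>M}$ contribution becomes $\tilde{\gamma}_{>M}\mathbf{B}\bm{E}$. Factoring $\mathbf{B}$ out on the left then yields exactly $\mathbf{B}(\lambda\bm{\Lambda}^{-1}\tilde{\bm{\gamma}} - \tilde{\gamma}_{>M}\bm{E})$ inside the norm, i.e. the statement claimed. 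This is precisely the content of Lemma \ref{lemma:fitting_error}, so one may alternatively invoke that lemma directly after appealing to Proposition \ref{proposition:test_bias_expression}.

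There is no genuine obstacle here: the proposition is a bookkeeping corollary that packages the fitting-error identity from Lemma \ref{lemma:fitting_error} together with the bias decomposition of Proposition \ref{proposition:test_bias_expression}. The only thing one needs to be slightly careful about is the sign and placement of the scalar $\tilde{\gamma}_{>M}$, which commutes freely with matrices and vectors and can be absorbed inside the norm without issue. Since $\mathbf{B}$ is well-defined (the matrix $\mathbf{I}_M + \bm{\Delta} + \lambda\bm{\Lambda}^{-1}$ is invertible under the assumption that the sample is non-degenerate and $\lambda\bm{\Lambda}^{-1}$ is positive diagonal), no further regularity checks are needed, and the proof concludes in a single display of algebraic substitution.
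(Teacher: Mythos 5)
Your proposal is correct and follows essentially the same route as the paper, whose proof of this proposition consists precisely of invoking Proposition \ref{proposition:test_bias_expression} together with Lemma \ref{lemma:fitting_error} (itself the substitution of Lemmata \ref{lemma:prog_matrix_1} and \ref{lemma:prog_matrix_2} that you spell out). No gaps; the cancellation of the identity term and the factoring of $\mathbf{B}$ are exactly as in the paper's Lemma \ref{lemma:fitting_error}.
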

\begin{proof}
    We apply Proposition \ref{proposition:test_bias_expression} and Lemma \ref{lemma:fitting_error} to obtain the result. 
\end{proof}

\subsubsection{Variance}
If we consider noise in the label, we have to compute the variance part of the test error.
\begin{proposition}[Variance Expression]
\label{proposition:test_variance_expression}
Define  
\begin{align*}
\mathbf{M}&\eqdef\mathbb{E}_x [\mathbf{K}_x \mathbf{K}_x^\top] \\
&= \mathbb{E}_x [ \bm{\Psi}^\top \bm{\Lambda} \bm{\psi}(x) \bm{\psi}(x)^\top \bm{\Lambda} \bm{\Psi}  ] \\
&= \bm{\Psi}^\top \bm{\Lambda}\mathbb{E}_x[\bm{\psi}(x) \bm{\psi}(x)^\top]\bm{\Lambda} \bm{\Psi}\\
&= \bm{\Psi}^\top \bm{\Lambda}\mathbf{I}_M\bm{\Lambda} \bm{\Psi}\\
&= \bm{\Psi}^\top \bm{\Lambda}^2 \bm{\Psi}.
\end{align*}
We can further simplify the variance part:
\begin{align*}
    \text{variance}  &\eqdef  \mathbb{E}_{x,\varepsilon} \left[ \left(\mathbf{K}_x^\top \mathbf{R} \bm{\varepsilon}\right)^2 \right] \\
    &= \mathbb{E}_{x,\varepsilon} \left[ \bm{\varepsilon}^\top \mathbf{R} \mathbf{K}_x \mathbf{K}_x^\top \mathbf{R} \bm{\varepsilon} \right] \\
    &= \mathbb{E}_\varepsilon \left[  \bm{\varepsilon}^\top \mathbf{R} \mathbf{M} \mathbf{R} \bm{\varepsilon} \right] \\
    &= \sigma^2 \cdot \Tr[\mathbf{R} \mathbf{M} \mathbf{R}].
\end{align*}
\end{proposition}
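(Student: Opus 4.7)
\textbf{Proof proposal for Proposition \ref{proposition:test_variance_expression}.}
The claim has two pieces: (a) the closed form $\mathbf{M}=\bm{\Psi}^\top\bm{\Lambda}^2\bm{\Psi}$ for the second-moment matrix of $\mathbf{K}_x$ under the test-point distribution $\rho$, and (b) the reduction of the variance to $\sigma^2\,\Tr[\mathbf{R}\mathbf{M}\mathbf{R}]$. The plan is to dispatch both by direct calculation, leaning on the Mercer expansion of $K$ and the orthonormality of the eigenfunctions $\{\psi_k\}_{k=1}^M$ in $L^2_\rho$ for (a), and on independence of the label noise $\bm{\varepsilon}$ from the inputs $\mathbf{X}$ together with elementary trace manipulations for (b). Because the starting point --- the identity $\text{variance}=\mathbb{E}_{x,\varepsilon}[(\mathbf{K}_x^\top\mathbf{R}\bm{\varepsilon})^2]$ --- is already supplied by Proposition~\ref{proposition:test_error_decomposition}, no new structural ingredient is required.

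For part (a), the first step is to rewrite the test-point kernel vector using Mercer's decomposition: for each training index $i$,
\[
K(x_i,x)=\sum_{k=1}^M\lambda_k\psi_k(x_i)\psi_k(x),
\]
which assembled into the $N\times 1$ vector $\mathbf{K}_x$ yields the compact form $\mathbf{K}_x=\bm{\Psi}^\top\bm{\Lambda}\bm{\psi}(x)$. Substituting this into $\mathbf{M}=\mathbb{E}_x[\mathbf{K}_x\mathbf{K}_x^\top]$ and pulling the deterministic factors outside the expectation gives $\mathbf{M}=\bm{\Psi}^\top\bm{\Lambda}\,\mathbb{E}_x[\bm{\psi}(x)\bm{\psi}(x)^\top]\,\bm{\Lambda}\bm{\Psi}$. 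The $(k,l)$ entry of the inner matrix is $\int_{\mathcal{X}}\psi_k(x)\psi_l(x)\,d\rho(x)=\delta_{kl}$ by the orthonormality assumption on the eigenfunctions, so $\mathbb{E}_x[\bm{\psi}(x)\bm{\psi}(x)^\top]=\mathbf{I}_M$ and the desired closed form $\mathbf{M}=\bm{\Psi}^\top\bm{\Lambda}^2\bm{\Psi}$ follows.

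For part (b), I would expand the scalar square as a quadratic form, $(\mathbf{K}_x^\top\mathbf{R}\bm{\varepsilon})^2=\bm{\varepsilon}^\top\mathbf{R}\mathbf{K}_x\mathbf{K}_x^\top\mathbf{R}\bm{\varepsilon}$, use independence of $\bm{\varepsilon}$ from the test point $x$ to exchange the two expectations, and apply part (a) to collapse $\mathbb{E}_x[\mathbf{K}_x\mathbf{K}_x^\top]$ into $\mathbf{M}$. This leaves $\mathbb{E}_\varepsilon[\bm{\varepsilon}^\top\mathbf{R}\mathbf{M}\mathbf{R}\bm{\varepsilon}]$, which via the identity $\mathbb{E}_\varepsilon[\bm{\varepsilon}^\top\mathbf{A}\bm{\varepsilon}]=\Tr[\mathbf{A}\,\mathbb{E}[\bm{\varepsilon}\bm{\varepsilon}^\top]]$ and the assumption $\mathbb{E}[\bm{\varepsilon}\bm{\varepsilon}^\top]=\sigma^2\mathbf{I}_N$ reduces to $\sigma^2\,\Tr[\mathbf{R}\mathbf{M}\mathbf{R}]$. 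The only subtlety worth flagging --- and it is minor --- is that the inner expectation is conditional on $\mathbf{X}$, so one should make explicit that $\mathbf{R}$ and $\mathbf{M}$ are $\mathbf{X}$-measurable and hence constant under the $\bm{\varepsilon}$-expectation. There is no real obstacle here; the statement is essentially a bookkeeping exercise once the Mercer identity for $\mathbf{K}_x$ is in hand, and the main value of the proposition is to give a trace-form expression for the variance that will feed subsequent concentration arguments.
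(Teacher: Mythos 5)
Your proposal is correct and follows essentially the same route as the paper, which carries out exactly these computations (Mercer form $\mathbf{K}_x=\bm{\Psi}^\top\bm{\Lambda}\bm{\psi}(x)$, orthonormality giving $\mathbb{E}_x[\bm{\psi}(x)\bm{\psi}(x)^\top]=\mathbf{I}_M$, then the quadratic-form and trace identities with $\mathbb{E}[\bm{\varepsilon}\bm{\varepsilon}^\top]=\sigma^2\mathbf{I}_N$) directly inside the statement of the proposition. Your remark that $\mathbf{R}$ and $\mathbf{M}$ are $\mathbf{X}$-measurable and thus fixed under the $\bm{\varepsilon}$-expectation is a sensible clarification, but nothing beyond the paper's own argument.
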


\begin{theorem}[Variance] \label{prop:test_variance}
Recall that $\mathbf{B}\eqdef(\mathbf{I}_M+\bm{\Delta}+\lambda\bm{\Lambda}^{-1})^{-1}$. The variance part, $\text{variance}$, can be expressed as:
\begin{align*}
    \text{variance} &= \frac{\sigma^2}{N} \Tr\left[\mathbf{B}^2(\mathbf{I}_M+\bm{\Delta})\right].
\end{align*}
\end{theorem}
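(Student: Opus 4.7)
The starting point is Proposition \ref{proposition:test_variance_expression}, which gives
$\text{variance} = \sigma^2\Tr[\mathbf{R}\mathbf{M}\mathbf{R}]$ with $\mathbf{M} = \bm{\Psi}^\top\bm{\Lambda}^2\bm{\Psi}$. The goal is to convert this $N\times N$ expression involving the resolvent $\mathbf{R}$ into the desired $M\times M$ expression involving $\mathbf{B}$. The main technical lever is exactly the push-through identity that was used implicitly in Lemmas \ref{lemma:prog_matrix_1} and \ref{lemma:prog_matrix_2}.

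\textbf{Step 1: Push-through identity for $\bm{\Lambda}\bm{\Psi}\mathbf{R}$.} Using $A(BA+cI)^{-1} = (AB+cI)^{-1}A$ with $A = \bm{\Lambda}\bm{\Psi}$, $B = \bm{\Psi}^\top$, and $c = \lambda N$, I obtain
\[
\bm{\Lambda}\bm{\Psi}\mathbf{R} \;=\; \bm{\Lambda}\bm{\Psi}\bigl(\bm{\Psi}^\top\bm{\Lambda}\bm{\Psi} + \lambda N\mathbf{I}_N\bigr)^{-1} \;=\; \bigl(\bm{\Lambda}\bm{\Psi}\bm{\Psi}^\top + \lambda N\mathbf{I}_M\bigr)^{-1}\bm{\Lambda}\bm{\Psi}.
\]
Then I recognize $\bm{\Lambda}\bm{\Psi}\bm{\Psi}^\top = N\bm{\Lambda}(\mathbf{I}_M+\bm{\Delta})$ by the definition of $\bm{\Delta}$, so that
\[
\bm{\Lambda}\bm{\Psi}\bm{\Psi}^\top + \lambda N\mathbf{I}_M \;=\; N\bm{\Lambda}\bigl(\mathbf{I}_M+\bm{\Delta}+\lambda\bm{\Lambda}^{-1}\bigr) \;=\; N\bm{\Lambda}\mathbf{B}^{-1}.
\]
Inverting gives $(\bm{\Lambda}\bm{\Psi}\bm{\Psi}^\top + \lambda N\mathbf{I}_M)^{-1} = \tfrac{1}{N}\mathbf{B}\bm{\Lambda}^{-1}$, so
\[
\bm{\Lambda}\bm{\Psi}\mathbf{R} \;=\; \tfrac{1}{N}\mathbf{B}\bm{\Psi}, \qquad \mathbf{R}\bm{\Psi}^\top\bm{\Lambda} \;=\; \tfrac{1}{N}\bm{\Psi}^\top\mathbf{B},
\]
the second identity following by transposition and the symmetry of $\mathbf{B}$ and $\bm{\Lambda}$.

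\textbf{Step 2: Plug in and apply cyclicity of trace.} Writing $\mathbf{R}\mathbf{M}\mathbf{R} = (\mathbf{R}\bm{\Psi}^\top\bm{\Lambda})(\bm{\Psi}\mathbf{R}\bm{\Psi}^\top\bm{\Lambda})^\top$ suggestively, and substituting the identities from Step 1 (together with $\bm{\Psi}\mathbf{R} = \tfrac{1}{N}\bm{\Lambda}^{-1}\mathbf{B}\bm{\Psi}$, which follows from the first identity),
\[
\Tr[\mathbf{R}\mathbf{M}\mathbf{R}] \;=\; \Tr\bigl[(\mathbf{R}\bm{\Psi}^\top\bm{\Lambda})\,(\bm{\Lambda}\bm{\Psi}\mathbf{R})\bigr] \;=\; \Tr\!\left[\tfrac{1}{N}\bm{\Psi}^\top\mathbf{B} \cdot \tfrac{1}{N}\mathbf{B}\bm{\Psi}\right] \;=\; \tfrac{1}{N^2}\Tr[\bm{\Psi}^\top\mathbf{B}^2\bm{\Psi}].
\]
Cycling inside the trace, $\Tr[\bm{\Psi}^\top\mathbf{B}^2\bm{\Psi}] = \Tr[\mathbf{B}^2\bm{\Psi}\bm{\Psi}^\top] = N\Tr[\mathbf{B}^2(\mathbf{I}_M+\bm{\Delta})]$, yielding $\Tr[\mathbf{R}\mathbf{M}\mathbf{R}] = \tfrac{1}{N}\Tr[\mathbf{B}^2(\mathbf{I}_M+\bm{\Delta})]$. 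Multiplying by $\sigma^2$ gives the claim.

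\textbf{Expected difficulties.} There is no serious obstacle: the whole argument is a careful bookkeeping exercise exploiting the push-through identity, exactly in the spirit of Lemmas \ref{lemma:prog_matrix_1} and \ref{lemma:prog_matrix_2}. The only point that requires a little care is the algebraic juggling between the factors $\bm{\Lambda}$ and $\bm{\Lambda}^{-1}$ (so as not to accidentally require invertibility beyond what is already assumed, i.e.\ positivity of the $\lambda_k$, which holds by Remark \ref{rem:orderedeigenvalued}) and the correct normalization by $N$ — both of which are handled transparently by grouping factors as $\bm{\Lambda}\bm{\Psi}\mathbf{R}$ and $\mathbf{R}\bm{\Psi}^\top\bm{\Lambda}$ in Step 2 so the stray $\bm{\Lambda}^{\pm 1}$ factors cancel.
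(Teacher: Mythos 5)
Your proposal is correct and follows essentially the same route as the paper: both hinge on the identity $\bm{\Lambda}\bm{\Psi}\mathbf{R}=\tfrac{1}{N}\mathbf{B}\bm{\Psi}$ (the paper derives it by left-multiplying by $\bm{\Psi}\bm{\Psi}^\top$ and rearranging, you via the push-through identity — the same algebra packaged differently) and then conclude by the identical trace manipulation $\Tr[\mathbf{R}\mathbf{M}\mathbf{R}]=\tfrac{1}{N^2}\Tr[\bm{\Psi}^\top\mathbf{B}^2\bm{\Psi}]=\tfrac{1}{N}\Tr[\mathbf{B}^2(\mathbf{I}_M+\bm{\Delta})]$ together with Proposition \ref{proposition:test_variance_expression}. (Only a slip of the pen in the ``suggestive'' factorization, which should read $\mathbf{R}\mathbf{M}\mathbf{R}=(\mathbf{R}\bm{\Psi}^\top\bm{\Lambda})(\bm{\Lambda}\bm{\Psi}\mathbf{R})$; the displayed computation that follows is correct.)
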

\begin{proof}
    We argue similarly as in lemma \ref{lemma:prog_matrix_1}.  Since
    \begin{align*}
        \bm{\Psi}\bm{\Psi}^\top\bm{\Lambda}\bm{\Psi}\mathbf{R} 
        &= \bm{\Psi}\mathbf{K}(\mathbf{K}+\lambda N\mathbf{I}_M)^{-1}
    \\
        &= \bm{\Psi}(\mathbf{I}_M-\lambda N\mathbf{R})
    \\
        &= \bm{\Psi}-\lambda N\bm{\Psi}\mathbf{R},
    \end{align*}
    therefore, we deduce that
    \begin{align}
        (\mathbf{I}_M+\bm{\Delta})\bm{\Lambda}\bm{\Psi}\mathbf{R} 
        &= \frac{1}{N}\bm{\Psi} - \lambda \bm{\Psi}\mathbf{R}
    \label{line:lemma:test_variance_simplification:4}\\
        (\mathbf{I}_M+\bm{\Delta})\bm{\Lambda}\bm{\Psi}\mathbf{R} 
        &= \frac{1}{N}\bm{\Psi} - \lambda \bm{\Lambda}^{-1}\bm{\Lambda}\bm{\Psi}\mathbf{R}
    \nonumber\\
        (\mathbf{I}_M+\bm{\Delta} +\lambda \bm{\Lambda}^{-1})\bm{\Lambda}\bm{\Psi}\mathbf{R} 
        &= \frac{1}{N}\bm{\Psi}
    \nonumber\\
        \bm{\Lambda}\bm{\Psi}\mathbf{R} 
        &= \frac{1}{N}(\mathbf{I}_M+\bm{\Delta} +\lambda \bm{\Lambda}^{-1})^{-1}\bm{\Psi}
    \nonumber\\
        \bm{\Lambda}\bm{\Psi}\mathbf{R} 
        &= \frac{1}{N}\mathbf{B}\bm{\Psi}.
    \label{line:lemma:test_variance_simplification:8}
    \end{align}
    By leveraging the identity $\mathbf{M}= \bm{\Psi}^\top \bm{\Lambda}^2 \bm{\Psi}$ and elementary properties of the trace map, the computations in~\eqref{line:lemma:test_variance_simplification:4}-\eqref{line:lemma:test_variance_simplification:8} imply that
    \begin{align}
        \Tr[\mathbf{R} \mathbf{M} \mathbf{R}] &= \Tr[\mathbf{R} \bm{\Psi}^\top \bm{\Lambda}^2 \bm{\Psi} \mathbf{R}]
    \label{line:lemma:test_variance_simplification:9}\\
        &= \Tr\left[\left(\bm{\Lambda} \bm{\Psi} \mathbf{R}\right)^\top \left(\bm{\Lambda} \bm{\Psi} \mathbf{R}\right)\right]
    \label{line:lemma:test_variance_simplification:10}\\
        &= \Tr\left[\left(\frac{1}{N}\mathbf{B}\bm{\Psi}\right)^\top \left(\frac{1}{N}\mathbf{B}\bm{\Psi}\right)\right]
    \label{line:lemma:test_variance_simplification:11}\\
        &=\frac{1}{N} \Tr\left[\frac{1}{N}\bm{\Psi}^\top\mathbf{B}^\top \mathbf{B}\bm{\Psi}\right]
    \nonumber\\
        &=\frac{1}{N} \Tr\left[\mathbf{B}^\top \mathbf{B}\cdot\frac{1}{N}\bm{\Psi}\bm{\Psi}^\top\right]
    \label{line:lemma:test_variance_simplification:13}\\
        &=\frac{1}{N} \Tr\left[\mathbf{B}^\top \mathbf{B}(\mathbf{I}_M+\bm{\Delta})\right]
    \label{line:lemma:test_variance_simplification:14}\\
        &=\frac{1}{N} \Tr\left[\mathbf{B}^2(\mathbf{I}_M+\bm{\Delta})\right]
    \label{line:lemma:test_variance_simplification:15};
    \end{align}
    in more detail: in line (\ref{line:lemma:test_variance_simplification:9}), we use the definition of $\mathbf{M}$; in line (\ref{line:lemma:test_variance_simplification:10}), we use the fact that both $\bm{\Lambda}$ and $\mathbf{R}$ are symmetric; in line (\ref{line:lemma:test_variance_simplification:11}), we use line (\ref{line:lemma:test_variance_simplification:8}); in line (\ref{line:lemma:test_variance_simplification:13}), we use the cyclicity of the trace; in line (\ref{line:lemma:test_variance_simplification:14}), we use the definition of $\bm{\Delta}$; in line (\ref{line:lemma:test_variance_simplification:15}), we use the symmetry of $\mathbf{B}$.  
    We obtain the result upon applying Lemma \ref{proposition:test_variance_expression}.
\end{proof}


\subsubsection{Test Error}
The Bias-Variance trade-off (see Proposition~\ref{proposition:test_error_decomposition}) decomposed the KRR's test error into two terms, the bias and variance.  Since Propositions \ref{prop:test_bias} and \ref{prop:test_variance} give us exact expressions for the bias and variance, respectively, we deduce the following exact expression for the KRR's test error.
\begin{theorem}[Exact Formula for KRR's Test Error] \label{theorem:test_error}
    The test error $\mathcal{R}_{\mathbf{Z},\lambda}$ of KRR equals
    \[ 
        \mathcal{R}_{\mathbf{Z},\lambda} 
    = 
        \underbrace{
        \overbrace{\left\|\mathbf{B}\left(\lambda \bm{\Lambda}^{-1}\tilde{\bm{\gamma}} - \tilde{{\gamma}}_{>M}\bm{E}_M\right) \right\|_2^2}^{\text{fitting error}}
        +
        \overbrace{\tilde{{\gamma}}_{>M}^2}^\text{finite rank error}
        }_{\mbox{bias}}
        + 
        \underbrace{
            \frac{\sigma_\text{noise}^2}{N} \Tr\left[\mathbf{B}^2(\mathbf{I}_M+\bm{\Delta})\right] 
        }_{\mbox{variance}}        
    ,
    \]
    where $\mathbf{B}\eqdef(\mathbf{I}_M+\bm{\Delta}+\lambda\bm{\Lambda}^{-1})^{-1}.$
\end{theorem}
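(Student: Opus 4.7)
The plan is to assemble this theorem directly from the three key results already established: the bias-variance decomposition (Proposition~\ref{proposition:test_error_decomposition}), the exact bias formula (Proposition~\ref{prop:test_bias}), and the exact variance formula (Theorem~\ref{prop:test_variance}). Since the proof of Proposition~\ref{proposition:test_error_decomposition} was deferred to here, the first substantive task is to establish the decomposition itself; after that, the proof is purely a substitution.

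First I would establish the bias-variance decomposition. Starting from the representer form $f_{\mathbf{Z},\lambda}(x) = \mathbf{y}^\top \mathbf{R} \mathbf{K}_x$ from Proposition~\ref{prop:RepresenterTheorem} and the noise model $\mathbf{y} = \tilde{f}(\mathbf{X}) + \bm{\epsilon}$, I can split
\[
    f_{\mathbf{Z},\lambda}(x) - \tilde{f}(x)
    = \bigl( f_{(\mathbf{X}, \tilde{f}(\mathbf{X})),\lambda}(x) - \tilde{f}(x) \bigr) + \bm{\epsilon}^\top \mathbf{R} \mathbf{K}_x.
\]
Squaring and taking $\mathbb{E}_{\bm{\epsilon}}$, the cross term vanishes because $\mathbb{E}[\bm{\epsilon}] = 0$ and $\bm{\epsilon}$ is independent of $\mathbf{X}$, while the pure-noise term is unchanged. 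Integrating over $x \sim \rho$ then yields
\[
    \mathcal{R}_{\mathbf{Z},\lambda}
    = \text{bias} + \mathbb{E}_{x,\bm{\epsilon}}\bigl[ (\mathbf{K}_x^\top \mathbf{R} \bm{\epsilon})^2 \bigr],
\]
which is exactly Proposition~\ref{proposition:test_error_decomposition}.

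Second, I would invoke the two exact-formula results to rewrite each piece. Proposition~\ref{prop:test_bias} gives
\[
    \text{bias} = \tilde{\gamma}_{>M}^2 + \bigl\| \mathbf{B}\bigl( \lambda \bm{\Lambda}^{-1} \tilde{\bm{\gamma}} - \tilde{\gamma}_{>M} \bm{E} \bigr) \bigr\|_2^2,
\]
while Theorem~\ref{prop:test_variance} gives $\mathbb{E}_{x,\bm{\epsilon}}[(\mathbf{K}_x^\top \mathbf{R} \bm{\epsilon})^2] = \frac{\sigma^2}{N} \Tr\bigl[ \mathbf{B}^2 (\mathbf{I}_M + \bm{\Delta}) \bigr]$. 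Substituting both into the decomposition produces the claimed identity.

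No real obstacle remains, since the algebraic heavy lifting (reformulating $\mathbf{P}_{\leq M}$, $\mathbf{P}_{>M}$, and the variance trace in terms of $\mathbf{B}$ and $\bm{\Delta}$) is already accomplished in Lemmas~\ref{lemma:prog_matrix_1}, \ref{lemma:prog_matrix_2}, and \ref{lemma:fitting_error}, and in Theorem~\ref{prop:test_variance}. The only delicate point worth double-checking is the vanishing of the cross term in the decomposition step, which rests on the independence between $\bm{\epsilon}$ and $\mathbf{X}$ assumed in Section~\ref{section:preliminary}; once this is noted, the theorem follows by direct substitution.
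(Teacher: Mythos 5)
Your proposal is correct and follows essentially the same route as the paper: expand $f_{\mathbf{Z},\lambda}(x)=\mathbf{y}^\top\mathbf{R}\mathbf{K}_x$ via the representer theorem, split $\mathbf{y}=\tilde f(\mathbf{X})+\bm{\epsilon}$, use $\mathbb{E}[\bm{\epsilon}]=0$ and independence to kill the cross term (a step the paper leaves implicit), and then substitute the exact bias and variance formulas from Propositions~\ref{prop:test_bias} and~\ref{prop:test_variance}. No gaps.
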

\begin{proof}
    We begin with the bias/variance decomposition:
    \begin{align*}
    R_\mathbf{Z}^\lambda  &\eqdef  \mathbb{E}_{y}\| f^\lambda_\mathbf{Z} - \tilde{f} \|^2_{L^2_{\rho_\mathcal{X}}} \\
    &= \mathbb{E}_{x,y} \left( \mathbf{K}_x^\top \mathbf{R} \mathbf{y} - \tilde{f}(x) \right)^2 \\
    &= \mathbb{E}_{\varepsilon,x} \left( \mathbf{K}_x^\top \mathbf{R} (\tilde{f}(\mathbf{X}) + \bm{\varepsilon}) - \tilde{f}(x) \right)^2 \\
    &= \mathbb{E}_x \left( f_\mathbf{X}^\lambda(x) - \tilde{f}(x) \right)^2 + \mathbb{E}_{x,\varepsilon} \left[ \left(\mathbf{K}_x^\top \mathbf{R} \bm{\varepsilon}\right)^2 \right]  \\
    &= \text{bias} + \text{variance},
    \end{align*}
    then we apply Propositions \ref{prop:test_bias} and \ref{prop:test_variance}.
\end{proof}

For the validation of the Theorem \ref{theorem:test_error}, please see Appendix \ref{appendix:numerical_validation} for details.

The matrix $\mathbf{B}$ plays an important role in the expression since it encodes most information of the KRR. Therefore, the following subsection will discuss the approximation of the matrix $\mathbf{B}$.

\subsection{Matrix Approximation}
Recall that the matrix $\mathbf{B}\eqdef(\mathbf{I}_M+\bm{\Delta}+\lambda\bm{\Lambda}^{-1})^{-1}$ is the inverse of a random matrix. 
The following lemma helps to approximate $\mathbf{B}$.  Informally, 
it says that: given that $\delta\eqdef\|\bm{\Delta}\|_\text{op}<1$. We have
    \[ 
    \mathbf{B} = \sum_{s=0}^\infty(-\Bar{\mathbf{P}}\bm{\Delta})^s\Bar{\mathbf{P}} 
    \]
in operator norm $\|\cdot\|_{op}$ for an $M\times M$ matrix $\bar{P}$ depending only on the $M$ eigenvalues $\{\lambda_k\}_{k=1}^M$ and on the ridge $\lambda>0$.  More precisely we have the following.
\begin{lemma}[$\mathbf{B}$-Expansion]\label{lemma:B_expansion}
Given that $\delta\eqdef\|\bm{\Delta}\|_\text{op}<1$. 
It holds that
    \[ 
        \lim\limits_{n\uparrow \infty}\,
        \Big\|
            \mathbf{B} 
            -
            \sum_{s=0}^n(-\Bar{\mathbf{P}}\bm{\Delta})^s\Bar{\mathbf{P}} 
        \Big\|_{op}
    =
        0
    \]
where $\Bar{\mathbf{P}}\eqdef \diag\left[\frac{\lambda_k}{\lambda_k+\lambda}\right]_k =\bm{\Lambda}(\bm{\Lambda}+\lambda\mathbf{I}_M)^{-1}\in\R^{M\times M}$.  
\end{lemma}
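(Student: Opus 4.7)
The plan is to reduce the claim to a standard Neumann series expansion after factoring out the deterministic diagonal part of $\mathbf{I}_M+\bm{\Delta}+\lambda\bm{\Lambda}^{-1}$.  The expression defining $\mathbf{B}$ has two ingredients: a random piece $\bm{\Delta}$ whose operator norm we control by assumption, and a deterministic diagonal piece $\mathbf{I}_M+\lambda\bm{\Lambda}^{-1}$ which, up to inversion, is exactly the matrix $\bar{\mathbf{P}}$.  Exploiting this observation should turn the lemma into a one-line application of the geometric series for bounded linear operators.

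First I would rewrite
\[
    \mathbf{I}_M+\bm{\Delta}+\lambda\bm{\Lambda}^{-1}
    =\bigl(\mathbf{I}_M+\lambda\bm{\Lambda}^{-1}\bigr)+\bm{\Delta}
    =\bigl(\mathbf{I}_M+\lambda\bm{\Lambda}^{-1}\bigr)\bigl(\mathbf{I}_M+\bigl(\mathbf{I}_M+\lambda\bm{\Lambda}^{-1}\bigr)^{-1}\bm{\Delta}\bigr).
\]
Since $\mathbf{I}_M+\lambda\bm{\Lambda}^{-1}=\bm{\Lambda}^{-1}(\bm{\Lambda}+\lambda\mathbf{I}_M)$ is diagonal and invertible, its inverse is precisely $\bar{\mathbf{P}}=\bm{\Lambda}(\bm{\Lambda}+\lambda\mathbf{I}_M)^{-1}$.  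Inverting the factorisation above (being careful with the order, as $\bm{\Delta}$ need not commute with $\bar{\mathbf{P}}$) therefore gives
\[
    \mathbf{B}=\bigl(\mathbf{I}_M+\bar{\mathbf{P}}\bm{\Delta}\bigr)^{-1}\bar{\mathbf{P}}.
\]

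Second, I would apply the Neumann series $(\mathbf{I}_M+\mathbf{A})^{-1}=\sum_{s=0}^\infty(-\mathbf{A})^s$, valid whenever $\|\mathbf{A}\|_{\mathrm{op}}<1$, to $\mathbf{A}=\bar{\mathbf{P}}\bm{\Delta}$.  The convergence condition is easily checked: since $\bar{\mathbf{P}}$ is diagonal with entries $\lambda_k/(\lambda_k+\lambda)\in(0,1)$ (by Remark \ref{rem:orderedeigenvalued} and $\lambda>0$), we have $\|\bar{\mathbf{P}}\|_{\mathrm{op}}<1$, hence by sub-multiplicativity
\[
    \|\bar{\mathbf{P}}\bm{\Delta}\|_{\mathrm{op}}\le \|\bar{\mathbf{P}}\|_{\mathrm{op}}\,\|\bm{\Delta}\|_{\mathrm{op}}<\delta<1,
\]
which is the standing hypothesis.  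Combining the two displays yields
\[
    \mathbf{B}=\sum_{s=0}^\infty\bigl(-\bar{\mathbf{P}}\bm{\Delta}\bigr)^s\bar{\mathbf{P}},
\]
with convergence in operator norm at the geometric rate $\|\bar{\mathbf{P}}\bm{\Delta}\|_{\mathrm{op}}^s$, which immediately gives the stated limit.

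I expect no substantive obstacle; the only thing to watch is the non-commutativity of $\bar{\mathbf{P}}$ and $\bm{\Delta}$, which forces the correct side (right) on which $\bar{\mathbf{P}}$ is factored out and hence the exact order of the product $(-\bar{\mathbf{P}}\bm{\Delta})^s\bar{\mathbf{P}}$ in the final sum.  Everything else is Neumann-series bookkeeping.
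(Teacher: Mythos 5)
Your proposal is correct and takes essentially the same route as the paper: the paper's proof sets $\mathbf{A}=\mathbf{I}_M+\lambda\bm{\Lambda}^{-1}$ (so $\mathbf{A}^{-1}=\bar{\mathbf{P}}$) and repeatedly applies $(\mathbf{A}+\bm{\Delta})^{-1}=\mathbf{A}^{-1}-\mathbf{A}^{-1}\bm{\Delta}(\mathbf{A}+\bm{\Delta})^{-1}$ to obtain the same series $\sum_{s}(-\bar{\mathbf{P}}\bm{\Delta})^s\bar{\mathbf{P}}$ with an explicit remainder that vanishes because $\|\bar{\mathbf{P}}\bm{\Delta}\|_{\mathrm{op}}\le\|\bar{\mathbf{P}}\|_{\mathrm{op}}\|\bm{\Delta}\|_{\mathrm{op}}<1$. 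Your factorization $\mathbf{B}=(\mathbf{I}_M+\bar{\mathbf{P}}\bm{\Delta})^{-1}\bar{\mathbf{P}}$ followed by the Neumann series is just a more compact packaging of that identical geometric-series argument, with the non-commutativity handled the same way.
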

\begin{proof}
    Set $\mathbf{A}=\mathbf{I}_M+\lambda\bm{\Lambda}^{-1}$ and repeatedly use the formula $(\mathbf{A}+\bm{\Delta})^{-1}=\mathbf{A}^{-1}-\mathbf{A}^{-1}\bm{\Delta}(\mathbf{A}+\bm{\Delta})^{-1}$
    from \cite{peterson2012matrix}, we have
    \begin{align*}
        \mathbf{B}
        &\eqdef(\mathbf{I}_M+\bm{\Delta}+\lambda\bm{\Lambda}^{-1})^{-1}
    \\
        &=(\mathbf{A}+\bm{\Delta})^{-1}
    \\
        &=\mathbf{A}^{-1}-\mathbf{A}^{-1}\bm{\Delta}(\mathbf{A}+\bm{\Delta})^{-1}
    \\
        &=\mathbf{A}^{-1}-\mathbf{A}^{-1}\bm{\Delta}\left(\mathbf{A}^{-1}-\mathbf{A}^{-1}\bm{\Delta}(\mathbf{A}+\bm{\Delta})^{-1}\right)
    \\
        &=\mathbf{A}^{-1}-\mathbf{A}^{-1}\bm{\Delta}\mathbf{A}^{-1}+(\mathbf{A}^{-1}\bm{\Delta})^2(\mathbf{A}+\bm{\Delta})^{-1}
    \\
        &=\sum_{s=0}^n(-\mathbf{A}^{-1}\bm{\Delta})^s\mathbf{A}^{-1}+(-\mathbf{A}^{-1}\bm{\Delta})^{n+1}(\mathbf{A}+\bm{\Delta})^{-1}
    \end{align*}
    Note that $\mathbf{A}^{-1}=(\mathbf{I}_M+\lambda\bm{\Lambda}^{-1})^{-1}=\bm{\Lambda}(\bm{\Lambda}+\lambda\mathbf{I}_M)^{-1}=\Bar{\mathbf{P}}$ with operator norm $\frac{\lambda_1}{\lambda_1+\lambda}<1$, hence we have $(\mathbf{A}^{-1}\bm{\Delta})^{n+1}=(-\Bar{\mathbf{P}}\bm{\Delta})^{n+1}\to0$ in operator norm as $n\to\infty$. Hence
    \begin{align*}
        \mathbf{B}&=\sum_{s=0}^\infty(-\mathbf{A}^{-1}\bm{\Delta})^s\mathbf{A}^{-1}
    \\
        &=\sum_{s=0}^\infty(-\Bar{\mathbf{P}}\bm{\Delta})^s\Bar{\mathbf{P}}
    \end{align*}
    in operator norm.
\end{proof}

Due to the convergence result in lemma \ref{lemma:B_expansion}, it is natural to define:
\begin{definition} \label{definition:B_approximation}
    For any $n\in\mathbb{N}\cup\{\infty\}$, write $\mathbf{B}^{(n)}=\sum_{s=0}^n(-\Bar{\mathbf{P}}\bm{\Delta})^s\Bar{\mathbf{P}}$. For example, We have 
    \begin{align*}
        \mathbf{B}^{(0)}&=\Bar{\mathbf{P}}\\
        \mathbf{B}^{(1)}&=\Bar{\mathbf{P}}-\Bar{\mathbf{P}}\bm{\Delta}\Bar{\mathbf{P}}\\
        \mathbf{B}^{(\infty)}&=\mathbf{B}
    \end{align*}
\end{definition}

Although lemma \ref{lemma:B_expansion} is valid when $\delta<1$, we need a slightly stronger condition that $\delta$ is upper bounded by an arbitrary constant strictly small than 1. For simplicity, we assume this constant to be $\half$ in the following lemma:
\begin{lemma}[$\mathbf{B}$-Approximation] \label{lemma:B_approximation}
    Assume that $\delta\eqdef\opnorm{\bm{\Delta}}<\half$. Let $\mathbf{B}^{(n)}=\sum_{s=0}^n(-\Bar{\mathbf{P}}\bm{\Delta})^s\Bar{\mathbf{P}}$ be the $n$th-order approximation of the matrix $\mathbf{B}$ as in definition \ref{definition:B_approximation}. Then we have 
    $$ \opnorm{\mathbf{B}-\mathbf{B}^{(n)}}<2\delta^{n+1}. $$
\end{lemma}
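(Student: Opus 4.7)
The plan is to leverage Lemma \ref{lemma:B_expansion} directly: since that lemma gives the full Neumann-style expansion $\mathbf{B} = \sum_{s=0}^\infty (-\bar{\mathbf{P}}\bm{\Delta})^s \bar{\mathbf{P}}$ in operator norm whenever $\delta < 1$, the error $\mathbf{B} - \mathbf{B}^{(n)}$ is precisely the tail $\sum_{s=n+1}^\infty (-\bar{\mathbf{P}}\bm{\Delta})^s \bar{\mathbf{P}}$. The whole proof then reduces to bounding this tail geometrically.

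First, I would observe that $\bar{\mathbf{P}} = \mathrm{diag}\bigl[\tfrac{\lambda_k}{\lambda_k+\lambda}\bigr]$ has operator norm $\tfrac{\lambda_1}{\lambda_1+\lambda} < 1$; in particular $\|\bar{\mathbf{P}}\|_{op} \leq 1$. Combining this with submultiplicativity of $\|\cdot\|_{op}$ gives the per-term bound
\[
\bigl\|(-\bar{\mathbf{P}}\bm{\Delta})^s\bar{\mathbf{P}}\bigr\|_{op} \leq \|\bar{\mathbf{P}}\|_{op}^{\,s+1}\,\|\bm{\Delta}\|_{op}^{\,s} \leq \delta^s.
\]

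Next, I would apply the triangle inequality to the tail and sum the resulting geometric series:
\[
\bigl\|\mathbf{B} - \mathbf{B}^{(n)}\bigr\|_{op} \leq \sum_{s=n+1}^\infty \delta^s = \frac{\delta^{n+1}}{1-\delta}.
\]
Finally, the assumption $\delta < \tfrac{1}{2}$ yields $\tfrac{1}{1-\delta} < 2$, producing the claimed bound $\|\mathbf{B} - \mathbf{B}^{(n)}\|_{op} < 2\delta^{n+1}$.

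There is essentially no obstacle here: this is a routine corollary of Lemma \ref{lemma:B_expansion}. The only tiny subtlety is justifying termwise norm estimation of the infinite series, which is guaranteed by the absolute convergence established in the $\mathbf{B}$-expansion lemma (valid since $\delta < 1/2 < 1$). The role of the strengthened hypothesis $\delta < 1/2$, as opposed to merely $\delta < 1$, is purely to make the geometric prefactor $\tfrac{1}{1-\delta}$ a clean constant (here, $2$); any constant $c < 1$ would work with a correspondingly adjusted prefactor.
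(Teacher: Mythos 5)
Your proof is correct and follows essentially the same route as the paper: both identify $\mathbf{B}-\mathbf{B}^{(n)}$ with the tail of the Neumann expansion from Lemma \ref{lemma:B_expansion} and exploit $\opnorm{\Bar{\mathbf{P}}}<1$ together with $\delta<\half$. The only cosmetic difference is where the constant $2$ comes from: you sum the geometric tail and use $\frac{1}{1-\delta}<2$, whereas the paper factors the tail as $(-\Bar{\mathbf{P}}\bm{\Delta})^{n+1}\mathbf{B}$ and separately proves $\opnorm{\mathbf{B}}<2$ by a minimum-singular-value estimate; both yield the same bound.
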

\begin{proof}
    We first bound the operator norm of the matrix $\mathbf{B}$: since the minimum singular value of the matrix $\Bar{\mathbf{P}}^{-1}+\bm{\Delta}$ is at least
    \begin{equation*}
        \frac{\lambda_k+\lambda}{\lambda_k}-\opnorm{\Delta}
        \geq 1+\frac{\lambda}{\lambda_k}-\half > \half,
    \end{equation*}
    and hence 
    \begin{equation*}
        \opnorm{\mathbf{B}}=\opnorm{\left(\Bar{\mathbf{P}}^{-1}+\bm{\Delta}\right)^{-1}}<2.
    \end{equation*}
    Also, we have 
    \begin{align*}
        \mathbf{B}-\mathbf{B}^{(n)}&=\sum_{s=n+1}^\infty(-\Bar{\mathbf{P}}\bm{\Delta})^s\Bar{\mathbf{P}}\\
        &=(-\Bar{\mathbf{P}}\bm{\Delta})^{n+1}\sum_{s=0}^\infty(-\Bar{\mathbf{P}}\bm{\Delta})^s\Bar{\mathbf{P}}\\
        &=(-\Bar{\mathbf{P}}\bm{\Delta})^{n+1}\mathbf{B}.
    \end{align*}
    Hence $\opnorm{\mathbf{B}-\mathbf{B}^{(n)}}\leq \opnorm{\Bar{\mathbf{P}}\bm{\Delta}}^{n+1}\opnorm{\mathbf{B}}< \opnorm{\bm{\Delta}}^{n+1}\cdot2=2\delta^{n+1}$, since we have $\opnorm{\Bar{\mathbf{P}}}=\frac{\lambda_1}{\lambda_1+\lambda}<1$. 
\end{proof}
Note that the upper bound $\half$ of $\delta$ can be replaced by any constant strictly small than 1 to get a similar conclusion.  

\begin{remark} \label{remark:technical_difference}
    Using the concentration result from random matrix theory, for $M<N$, one can show with high probability that the operator norm $\delta$ of the fluctuation matrix $\bm{\Delta}$ is less than 1. 
    ~\footnote{From there, we differentiate the approach from Bach \cite{bach2021learning}: From Propositions \ref{prop:test_bias} and \ref{prop:test_variance}, it is inevitable to approximate the matrix $\mathbf{B}$, and we have $\mathbf{I}_M$ as support of the inverse. Bach instead uses RHKS basis to express the fluctuation matrix and is hence forced to use $\lambda\mathbf{I}_M$ as the support. As a result, he would need to require that the fluctuation is less than $\lambda$ and hence his requirement on $N$ is antiproportional to $\lambda$ in Theorem \ref{theorem:bach}.}
\end{remark}

See subsection \ref{subsection:concentration_results} for details. Then we can use the the above lemmata \ref{lemma:B_expansion} and \ref{lemma:B_approximation} to approximate the test error of KRR:

\begin{proposition}[Bias Approximation] 
\label{proposition:test_bias_approximation}
    Fix a sample $\mathbf{Z}$ of $\rho$ such that $\delta\eqdef\|\bm{\Delta}\|_\text{op}<\half$. Then the $\text{bias}_\text{test}$ term is bounded above and below by
    \[ 
    \left| \text{bias} - \left(\eunorm{\Bar{\mathbf{P}}w}^2 + \tilde{\gamma}_{>M}^2\right)  \right| 
    \leq 
    2 \delta\eunorm{\Bar{\mathbf{P}}w}^2 
    + \eunorm{w}^2\delta^2p(\delta), 
    \]
    where $\bar{\mathbf{P}}\eqdef\bm{\Lambda}(\bm{\Lambda}+\lambda\mathbf{I}_M)^{-1}$, $w=\lambda \bm{\Lambda}^{-1}\tilde{\bm{\gamma}} - \tilde{{\gamma}}_{>M}\bm{E}$, and $p(\delta)\eqdef5+4\delta+4\delta^2$.
    By writing $\bm{E}=(\eta_k)_{k=1}^M$, the bounds simplify to 
    \begin{align*}
        \text{bias}
        &\le 
            \tilde{\gamma}_{>M}^2
        +
            (1+2\delta)\,
            \sum_{k=1}^M\frac{(\lambda\Tilde{\gamma}_k-\Tilde{\gamma}_{>M}\eta_k\lambda_k)^2}{(\lambda_k+\lambda)^2}
        +
            \eunorm{w}^2\delta^2p(\delta)
    ;\\
    \text{bias}
        &\ge 
            \tilde{\gamma}_{>M}^2
        +
            (1-2\delta)\,
            \sum_{k=1}^M\frac{(\lambda\Tilde{\gamma}_k-\Tilde{\gamma}_{>M}\eta_k\lambda_k)^2}{(\lambda_k+\lambda)^2}
        -
            \eunorm{w}^2\delta^2p(\delta)
    .
    \end{align*}
\end{proposition}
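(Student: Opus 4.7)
The plan is to start from the exact expression in Proposition \ref{prop:test_bias}, namely $\text{bias} = \tilde{\gamma}_{>M}^2 + \|\mathbf{B}w\|_2^2$ with $w = \lambda\bm{\Lambda}^{-1}\tilde{\bm{\gamma}} - \tilde{\gamma}_{>M}\bm{E}$, and then show that $\|\mathbf{B}w\|_2^2$ is close to $\|\bar{\mathbf{P}}w\|_2^2$ up to the announced error. The crucial ingredient is the \emph{second-order} Neumann approximation of Lemma \ref{lemma:B_approximation} applied with $n=1$, which I rewrite in the form $\mathbf{B} = \bar{\mathbf{P}} - \bar{\mathbf{P}}\bm{\Delta}\bar{\mathbf{P}} + \mathbf{R}$ with an operator-norm-controlled remainder $\|\mathbf{R}\|_{\text{op}} \leq 2\delta^2$.

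The main algebraic step is to expand
\[
    \|\mathbf{B}w\|_2^2 = \|\bar{\mathbf{P}}w\|_2^2 + 2\langle \bar{\mathbf{P}}w, (\mathbf{B}-\bar{\mathbf{P}})w\rangle + \|(\mathbf{B}-\bar{\mathbf{P}})w\|_2^2
\]
and then, in both cross term and quadratic remainder, substitute $(\mathbf{B}-\bar{\mathbf{P}})w = -\bar{\mathbf{P}}\bm{\Delta}\bar{\mathbf{P}}w + \mathbf{R}w$. The structured cross term is handled by Cauchy--Schwarz together with sub-multiplicativity: $|\langle \bar{\mathbf{P}}w, \bar{\mathbf{P}}\bm{\Delta}\bar{\mathbf{P}}w\rangle| \leq \|\bar{\mathbf{P}}\|_{\text{op}}\|\bm{\Delta}\|_{\text{op}}\|\bar{\mathbf{P}}w\|_2^2 \leq \delta\|\bar{\mathbf{P}}w\|_2^2$, using $\|\bar{\mathbf{P}}\|_{\text{op}} = \lambda_1/(\lambda_1+\lambda) < 1$. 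The unstructured cross contribution gives $|\langle \bar{\mathbf{P}}w, \mathbf{R}w\rangle| \leq 2\delta^2\|w\|_2^2$, and the squared remainder is bounded by $(\|\bar{\mathbf{P}}\bm{\Delta}\bar{\mathbf{P}}\|_{\text{op}} + \|\mathbf{R}\|_{\text{op}})^2\|w\|_2^2 \leq (\delta + 2\delta^2)^2\|w\|_2^2 = \delta^2(1+4\delta+4\delta^2)\|w\|_2^2$. Summing the three contributions yields exactly $2\delta\|\bar{\mathbf{P}}w\|_2^2 + \delta^2(5 + 4\delta + 4\delta^2)\|w\|_2^2$, which matches the stated two-sided bound with $p(\delta) = 5 + 4\delta + 4\delta^2$. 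Both the upper and lower estimates of the bias follow at once, since the bound controls $|\,\|\mathbf{B}w\|_2^2 - \|\bar{\mathbf{P}}w\|_2^2\,|$.

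The coordinate-wise reformulation is then a direct computation. Because $\bar{\mathbf{P}}$ is diagonal with entries $\lambda_k/(\lambda_k+\lambda)$ and the $k$th coordinate of $w$ equals $\lambda\tilde{\gamma}_k/\lambda_k - \tilde{\gamma}_{>M}\eta_k$, the $k$th coordinate of $\bar{\mathbf{P}}w$ simplifies to $(\lambda\tilde{\gamma}_k - \tilde{\gamma}_{>M}\eta_k\lambda_k)/(\lambda_k+\lambda)$, and squaring and summing produces the displayed sum.

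The main obstacle I anticipate is obtaining the \emph{sharp} leading coefficient $2\delta$ in front of $\|\bar{\mathbf{P}}w\|_2^2$, as opposed to the looser $O(\delta)\|w\|_2^2$ bound one would get from only the first-order approximation $\mathbf{B}\approx\bar{\mathbf{P}}$. Pushing the Neumann expansion one order further is what allows the linear-in-$\delta$ contribution to be re-absorbed into a multiple of $\|\bar{\mathbf{P}}w\|_2^2$ rather than the larger $\|w\|_2^2$; since $\bar{\mathbf{P}}$ shrinks the components associated with small eigenvalues, this tightening is exactly what propagates into the sharper ridge-dependent bias bounds of Theorem \ref{theorem:main:1}.
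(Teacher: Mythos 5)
Your argument is correct and is essentially the same as the paper's: both start from the exact expression $\text{bias} = \tilde{\gamma}_{>M}^2 + \|\mathbf{B}w\|_2^2$, invoke the first-order Neumann approximation $\mathbf{B}^{(1)} = \bar{\mathbf{P}} - \bar{\mathbf{P}}\bm{\Delta}\bar{\mathbf{P}}$ with remainder $\|\mathbf{B}-\mathbf{B}^{(1)}\|_{\mathrm{op}} < 2\delta^2$ from Lemma~\ref{lemma:B_approximation}, and crucially factor the linear-in-$\delta$ cross term through $\bar{\mathbf{P}}$ so it attaches to $\|\bar{\mathbf{P}}w\|_2^2$ rather than $\|w\|_2^2$. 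The only difference is cosmetic grouping (you expand $\|\mathbf{B}w\|_2^2$ around $\bar{\mathbf{P}}w$ and then split $\mathbf{B}-\bar{\mathbf{P}}$; the paper expands around $\mathbf{B}^{(1)}w$ and then re-factors $\mathbf{B}^{(1)} = (\mathbf{I}_M - \bar{\mathbf{P}}\bm{\Delta})\bar{\mathbf{P}}$), and both bookkeepings land on the identical $p(\delta) = 5+4\delta+4\delta^2$.
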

\begin{proof}
    Let $w= \lambda \bm{\Lambda}^{-1}\tilde{\bm{\gamma}} - \tilde{{\gamma}}_{>M}\bm{E}$. We apply lemma \ref{lemma:fitting_error} followed by the 1st-order approximation $\mathbf{B}^{(1)}$ of the matrix $\mathbf{B}$ in lemma \ref{lemma:B_approximation}:
    \begin{align*}
        \text{fitting error} &= \|\mathbf{B}w\|_2^2 = \eunorm{\mathbf{B}^{(1)}w+\left(\mathbf{B}-\mathbf{B}^{(1)}\right)w}^2\\
        &= \left\|\mathbf{B}^{(1)}w\right\|_2^2+w^\top\left(\mathbf{B}-\mathbf{B}^{(1)}\right)\mathbf{B}^{(1)}w+w^\top\mathbf{B}^{(1)}\left(\mathbf{B}-\mathbf{B}^{(1)}\right)w+\eunorm{\left(\mathbf{B}-\mathbf{B}^{(1)}\right)w}^2\\
        &\leq \left\|\mathbf{B}^{(1)}w\right\|_2^2 + 2\opnorm{\mathbf{B}^{(1)}}\opnorm{\mathbf{B}-\mathbf{B}^{(1)}}\eunorm{w}^2 + \opnorm{\mathbf{B}-\mathbf{B}^{(1)}}^2\eunorm{w}^2\\
        &\leq \eunorm{\mathbf{B}^{(1)}w}^2 + 2\cdot (1+\delta) \cdot2\delta^2\eunorm{w}^2 + 4\delta^4\eunorm{w}^2 \\
        &\leq \eunorm{\mathbf{B}^{(1)}w}^2 + 4\eunorm{w}^2\delta^2(1+\delta+\delta^2)\\
        &\leq \eunorm{\left(\Bar{\mathbf{P}}-\Bar{\mathbf{P}}\bm{\Delta}\Bar{\mathbf{P}}\right)w}^2 + 4\eunorm{w}^2\delta^2(1+\delta+\delta^2)\\
        &\leq \opnorm{\mathbf{I}_M-\Bar{\mathbf{P}}\bm{\Delta}}^2\eunorm{\Bar{\mathbf{P}}w}^2 + 4\eunorm{w}^2\delta^2(1+\delta+\delta^2)\\
        &\leq \left(1+2\opnorm{\Bar{\mathbf{P}}\bm{\Delta}}+\opnorm{\Bar{\mathbf{P}}\bm{\Delta}}^2\right)\eunorm{\Bar{\mathbf{P}}w}^2 + 4\eunorm{w}^2\delta^2(1+\delta+\delta^2)\\
        &\leq \left(1+2\opnorm{\Bar{\mathbf{P}}\bm{\Delta}}\right)\eunorm{\Bar{\mathbf{P}}w}^2 + 
        \eunorm{w}^2\delta^2(5+4\delta+4\delta^2)\\
        &\leq (1+2\delta)\eunorm{\Bar{\mathbf{P}}w}^2 + \eunorm{w}^2\delta^2(5+4\delta+4\delta^2).
    \end{align*}
    
    Hence we have the upper bound:
    \begin{equation*}
        \text{bias} \leq \tilde{\gamma}_{>M}^2 + (1+2\delta)\eunorm{\Bar{\mathbf{P}}w}^2 + \eunorm{w}^2\delta^2p(\delta). 
    \end{equation*}
    We argue similarly for the lower bound using: $\opnorm{\mathbf{A}}\eunorm{v}^2\geq v^\top\mathbf{A}v\geq-\opnorm{\mathbf{A}}\eunorm{v}^2$ for any $\mathbf{A}\in\R^{M\times M},\ v\in\R^{M\times1}$.
\end{proof}

We argue similarly for variance. 

\begin{proposition}[Variance Approximation] \label{proposition:test_variance_approximation}
    Fix a sampling $\mathbf{Z}$ such that $\delta\eqdef\|\bm{\Delta}\|_\text{op}<\half$. Then we have 
    \begin{align*}
         \left| \text{variance} - \frac{\sigma^2}{N} \sum_{k=1}^M \frac{\lambda_k^2}{(\lambda_k+\lambda)^2} \right|
        \leq \delta\frac{\sigma^2}{N}\sum_{k=1}^M \frac{\lambda_k^2}{(\lambda_k+\lambda)^2}+ M\frac{\sigma^2}{N} (1+\delta)\delta^2p(\delta),
    \end{align*}
    where $p(\delta)\eqdef5+4\delta+4\delta^2$, and $\sigma^2\eqdef\E[\epsilon^2]$ is the noise variance.
\end{proposition}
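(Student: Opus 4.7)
The plan is to follow the template of Proposition~\ref{proposition:test_bias_approximation} applied to the variance formula from Theorem~\ref{prop:test_variance}. Rewriting the target as $(\sigma^2/N)\sum_k \lambda_k^2/(\lambda_k+\lambda)^2 = (\sigma^2/N)\Tr[\bar{\mathbf{P}}^2]$, the task reduces to bounding $|\Tr[\mathbf{B}^2(\mathbf{I}_M+\bm{\Delta})] - \Tr[\bar{\mathbf{P}}^2]|$.

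First I would apply Lemma~\ref{lemma:B_approximation} at $n=1$ to write $\mathbf{B} = \mathbf{B}^{(1)} + \mathbf{E}_1$ with $\mathbf{B}^{(1)} = \bar{\mathbf{P}} - \bar{\mathbf{P}}\bm{\Delta}\bar{\mathbf{P}}$ and $\|\mathbf{E}_1\|_{op}\leq 2\delta^2$. Expanding $\mathbf{B}^2 = (\mathbf{B}^{(1)})^2 + \mathbf{B}^{(1)}\mathbf{E}_1 + \mathbf{E}_1\mathbf{B}^{(1)} + \mathbf{E}_1^2$ and using $\|\mathbf{B}^{(1)}\|_{op}\leq 1+\delta$, the three residual pieces have combined operator norm at most $4\delta^2(1+\delta+\delta^2)$. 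After multiplication by $\|\mathbf{I}_M+\bm{\Delta}\|_{op}\leq 1+\delta$ and application of the coarse bound $|\Tr A|\leq M\|A\|_{op}$, these contributions assemble into the claimed $M(1+\delta)\delta^2 p(\delta)$ term with $p(\delta)=5+4\delta+4\delta^2$, exactly mirroring the cross-term accounting of the bias proof.

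The substantive step is then to analyze $\Tr[(\mathbf{B}^{(1)})^2(\mathbf{I}_M+\bm{\Delta})] - \Tr[\bar{\mathbf{P}}^2]$. Expanding $(\bar{\mathbf{P}} - \bar{\mathbf{P}}\bm{\Delta}\bar{\mathbf{P}})^2(\mathbf{I}_M+\bm{\Delta})$, taking trace, and using cyclicity together with the diagonality of $\bar{\mathbf{P}}$, all linear-in-$\bm{\Delta}$ contributions collapse to
\[
    \Tr[\bar{\mathbf{P}}^2\bm{\Delta}] - 2\Tr[\bar{\mathbf{P}}^3\bm{\Delta}] = \Tr[(\bar{\mathbf{P}}^2 - 2\bar{\mathbf{P}}^3)\bm{\Delta}].
\]
The diagonal matrix $\bar{\mathbf{P}}^2-2\bar{\mathbf{P}}^3$ has $k$-th entry $\lambda_k^2(\lambda-\lambda_k)/(\lambda_k+\lambda)^3$, whose absolute value is pointwise dominated by $\lambda_k^2/(\lambda_k+\lambda)^2 = (\bar{\mathbf{P}}^2)_{kk}$ (since $|\lambda-\lambda_k|/(\lambda_k+\lambda)\leq 1$). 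H\"older's inequality in Schatten norms then yields
\[
    \bigl|\Tr[(\bar{\mathbf{P}}^2-2\bar{\mathbf{P}}^3)\bm{\Delta}]\bigr| \leq \delta\sum_{k=1}^M \frac{\lambda_k^2}{(\lambda_k+\lambda)^2} = \delta\,\Tr[\bar{\mathbf{P}}^2],
\]
which is exactly the first term in the target bound. The remaining quadratic and cubic contributions from $(\mathbf{B}^{(1)})^2(\mathbf{I}+\bm{\Delta}) - \bar{\mathbf{P}}^2$ have the form $\Tr[A\bm{\Delta}B\bm{\Delta}]$ (and analogues) with $A, B$ PSD and $\|A\|_{op},\|B\|_{op}\leq 1$, and are each bounded by $M\delta^2$ or $M\delta^3$ via $|\Tr[A\bm{\Delta}B\bm{\Delta}]|\leq \delta^2\|B\|_{op}\Tr[A]$, absorbed again into the $M\delta^2 p(\delta)$ remainder.

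The main obstacle is precisely this linear-in-$\bm{\Delta}$ cancellation. Bounding $\Tr[\bar{\mathbf{P}}^2\bm{\Delta}]$ and each $\Tr[\bar{\mathbf{P}}^3\bm{\Delta}]$ contribution separately (using $\Tr[\bar{\mathbf{P}}^3]\leq \Tr[\bar{\mathbf{P}}^2]$) would produce a leading coefficient of $3$ rather than $1$ in front of $\delta\Tr[\bar{\mathbf{P}}^2]$. Only by first combining the three linear pieces into the single diagonal $\bar{\mathbf{P}}^2-2\bar{\mathbf{P}}^3$ and then exploiting the sharp entrywise domination by $(\bar{\mathbf{P}}^2)_{kk}$ does one recover the sharp coefficient stated in the proposition; this structural identity is what prevents the bound from degrading to a cruder $\delta \cdot M$ leading error.
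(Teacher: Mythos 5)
Your proposal is correct and follows essentially the same route as the paper: the same split $\mathbf{B}=\mathbf{B}^{(1)}+(\mathbf{B}-\mathbf{B}^{(1)})$ via Lemma \ref{lemma:B_approximation}, the same $\Tr[\mathbf{A}]\le M\opnorm{\mathbf{A}}$ accounting of the residual, and the same key linear-in-$\bm{\Delta}$ cancellation --- your $\Tr[(\bar{\mathbf{P}}^2-2\bar{\mathbf{P}}^3)\bm{\Delta}]=\Tr[\bar{\mathbf{P}}^2(\mathbf{I}_M-2\bar{\mathbf{P}})\bm{\Delta}]$ with trace-H\"older is the paper's $\opnorm{\mathbf{I}_M+\bm{\Delta}(\mathbf{I}_M-2\bar{\mathbf{P}})}\le 1+\delta$ step in a somewhat cleaner form. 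One bookkeeping caveat: bounding the three quadratic traces separately gives $3M\delta^2$ rather than the $M\delta^2$ obtained by first combining them via cyclicity into $\Tr[\bar{\mathbf{P}}^2\bm{\Delta}(\bar{\mathbf{P}}^2-2\bar{\mathbf{P}})\bm{\Delta}]$, so to land exactly on $p(\delta)=5+4\delta+4\delta^2$ you need that grouping (the paper's own triangle-inequality step carries a comparable slack).
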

\begin{proof}
    Note that $\Tr{\mathbf{A}}\leq M\opnorm{\mathbf{A}}$ for any matrix $\mathbf{A}\in\R^{M\times M}$.
    Since $\mathbf{B}^2(\mathbf{I}_M+\bm{\Delta})=(\mathbf{B}^{(1)})^2(\mathbf{I}_M+\bm{\Delta}) + 2\mathbf{B}^{(1)}\left(\mathbf{B}-\mathbf{B}^{(1)}\right)(\mathbf{I}_M+\bm{\Delta})+\left(\mathbf{B}-\mathbf{B}^{(1)}\right)^2(\mathbf{I}_M+\bm{\Delta}) $, we can bound the residue term by $\delta$:
    \begin{align*}
        &\Tr\left[ 2\mathbf{B}^{(1)}\left(\mathbf{B}-\mathbf{B}^{(1)}\right)(\mathbf{I}_M+\bm{\Delta})+\left(\mathbf{B}-\mathbf{B}^{(1)}\right)^2(\mathbf{I}_M+\bm{\Delta}) \right]\\
        &\leq M(1+\delta)\opnorm{\mathbf{B}-\mathbf{B}^{(1)}}(2\opnorm{\mathbf{B}^{(1)}}+\opnorm{\mathbf{B}-\mathbf{B}^{(1)}})\\
        &\leq  M(1+\delta)\cdot 2\delta^2 (2(1+\delta)+2\delta^2)\\
        &\leq 4M\delta^2(1+\delta)(1+\delta+\delta^2),
    \end{align*}
    For the main terms, we have
    \begin{align*}
        \Tr[(\mathbf{B}^{(1)})^2(\mathbf{I}_M+\bm{\Delta})] &\leq \Tr[\bar{\mathbf{P}}^2] \cdot \opnorm{(\mathbf{I}_M-\bm{\Delta}\Bar{\mathbf{P}})^2(\mathbf{I}_M+\bm{\Delta})}\\
        &= \Tr[\bar{\mathbf{P}}^2] \opnorm{\mathbf{I}_M+\bm{\Delta}(\mathbf{I}_M-2\Bar{\mathbf{P}})+(\bm{\Delta}\Bar{\mathbf{P}})^2-2\bm{\Delta}\Bar{\mathbf{P}}\bm{\Delta}+(\bm{\Delta}\Bar{\mathbf{P}})^2\bm{\Delta}}\\
        &\leq \Tr[\bar{\mathbf{P}}^2] \opnorm{\mathbf{I}_M+\bm{\Delta}(\mathbf{I}_M-2\Bar{\mathbf{P}})}+M\opnorm{(\bm{\Delta}\Bar{\mathbf{P}})^2-2\bm{\Delta}\Bar{\mathbf{P}}\bm{\Delta}+(\bm{\Delta}\Bar{\mathbf{P}})^2\bm{\Delta}}\\
        &\leq \Tr[\bar{\mathbf{P}}^2] (1+\delta)
        +M\delta^2(1+\delta).
    \end{align*}
    
     We apply Theorem \ref{prop:test_variance}  to yield a bound on variance: 
     \begin{align*}
        \left| \text{variance} - \frac{\sigma^2}{N} \sum_{k=1}^M \frac{\lambda_k^2}{(\lambda_k+\lambda)^2} \right|
        &\leq \delta\frac{\sigma^2}{N}\sum_{k=1}^M \frac{\lambda_k^2}{(\lambda_k+\lambda)^2}+ M\frac{\sigma^2}{N} (1+\delta)\delta^2p(\delta).
     \end{align*}
\end{proof}

Note that the above propositions \ref{proposition:test_bias_approximation} and \ref{proposition:test_variance_approximation} give absolute (non-probabilistic) bounds on the test error, once $\delta$ is controlled. 
\subsection{Concentration Results} \label{subsection:concentration_results}

In this subsection, we focus on bounding the operator norm $\delta$ of the fluctuation matrix $\bm{\Delta}$.

First, we establish some concentration results.
\begin{lemma}[Theorem 3.59 in \cite{vershynin2010introduction}] \label{lemma:vershynin_subgaussian}
Let $\mathbf{A}$ be an $n\times N$ matrix with independent isotropic sub-Gaussian columns in $\mathbb{R}^n$ which sub-gaussian norm is bounded by a positive constant $G>0$. Then for all $t \geq 0$, with probability at least $1-2\exp(-\third t^2)$, we have 
\begin{align}
    \opnorm{\frac{1}{N} \mathbf{A} \mathbf{A}^\top - \mathbf{I}_n}  \leq \max(a,a^2),
\end{align}
where $a  \eqdef  C \sqrt\frac{n}{N} + \frac{t}{\sqrt{N}}$, for all constant $C\geq 12G^2$ .
\end{lemma}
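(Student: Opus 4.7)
This is a standard concentration result for sub-Gaussian covariance matrices; the plan is to adapt the classical net-plus-Bernstein argument from random matrix theory (essentially the one in Vershynin's book, whose Theorem 3.59 is being quoted here). The quantity to control is symmetric, so the operator norm reduces to a sphere supremum:
\[
    \opnorm{\tfrac{1}{N}\mathbf{A}\mathbf{A}^\top - \mathbf{I}_n}
    = \sup_{x\in S^{n-1}}\,
    \Bigl|\tfrac{1}{N}\sum_{i=1}^N \langle A_i,x\rangle^2 - 1\Bigr|,
\]
where $A_1,\ldots,A_N$ are the (i.i.d., isotropic, sub-Gaussian) columns of $\mathbf{A}$. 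This is the object I would try to bound.

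First, for a fixed unit vector $x$, isotropy gives $\mathbb{E}[\langle A_i,x\rangle^2]=1$, and sub-Gaussianity of $A_i$ with norm bound $G$ implies that $\langle A_i,x\rangle^2 - 1$ is centered and sub-exponential with norm controlled by $G^2$. Bernstein's inequality for sums of independent sub-exponential random variables then yields, for every $u\ge 0$,
\[
    \Pr\!\Bigl[
        \bigl|\tfrac{1}{N}\sum_{i=1}^N \langle A_i,x\rangle^2 - 1\bigr|
        \ge u
    \Bigr]
    \le 2\exp\!\bigl(-cN\min(u^2/G^4,\,u/G^2)\bigr)
\]
for some absolute constant $c>0$. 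This is the pointwise deviation bound; the factor $\max(a,a^2)$ in the statement comes precisely from the two regimes inside the $\min$.

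Second, I would upgrade the pointwise bound to the supremum over $S^{n-1}$ by a standard $\varepsilon$-net argument. Choose, say, a $\tfrac14$-net $\mathcal{N}$ of $S^{n-1}$ of cardinality at most $9^n$ (volumetric bound); then approximation gives
\[
    \opnorm{\tfrac{1}{N}\mathbf{A}\mathbf{A}^\top - \mathbf{I}_n}
    \le 2\sup_{x\in\mathcal{N}}
    \bigl|x^\top(\tfrac{1}{N}\mathbf{A}\mathbf{A}^\top - \mathbf{I}_n)x\bigr|.
\]
Union-bounding the Bernstein estimate over $\mathcal{N}$ costs a factor $9^n$, i.e.\ an additive $n\log 9$ in the exponent. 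Choosing $u = \tfrac12 \max(a,a^2)$ with $a = C\sqrt{n/N}+ t/\sqrt{N}$ for a large enough absolute constant $C\ge 12G^2$ absorbs the $n\log 9$ term, and the residual exponent becomes $-\tfrac13 t^2$, giving the claimed probability $1-2\exp(-t^2/3)$.

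The main technical point is the balancing in the last step: one has to verify that the choice of $C$ is large enough so that the $9^n$ enumeration cost from the net is dominated by the $a^2$-part of $\max(a,a^2)$ at scale $n/N$, while the $t$-part is what delivers the desired tail $e^{-t^2/3}$. The two-regime form $\max(a,a^2)$ in the conclusion is exactly the interface between the Gaussian tail (when $u\lesssim G^2$) and the exponential tail (when $u\gtrsim G^2$) in Bernstein's inequality, and once this correspondence is set up cleanly the rest is bookkeeping.
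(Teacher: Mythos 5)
Your proposal is correct and follows essentially the same route as the paper's proof: a $\tfrac14$-net of $S^{n-1}$ of size at most $9^n$, a pointwise Bernstein bound for the sub-exponential variables $\langle A_i,x\rangle^2-1$, a union bound over the net, and the choice $C\ge 12G^2$ to absorb the $9^n$ enumeration cost, with the factor $2$ from the net approximation. The only caveat — shared with the paper's own write-up — is that the final exponent constant really scales like $G^{-4}$, so the clean $e^{-t^2/3}$ tail requires the same constant bookkeeping the paper performs (using $G\ge 1/\sqrt{2}$) rather than holding for free.
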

\begin{proof}
    Let $a\eqdef C \sqrt\frac{n}{N} + \frac{t}{\sqrt{N}}$ with $C>0$ to be determined, and $\epsilon\eqdef\max\{a,a^2\}$. The first step to show that : 
    \begin{align*}
        \max_{x\in\mathcal{N}} \left| \frac{1}{N}\eunorm{\mathbf{A}^\top x}^2 -1 \right|\leq \epsilon
    \end{align*}
    for some $\quarter$-net $\mathcal{N}$ on the sphere $\mathbb{S}^{n-1}\subset\R^n$. Choose such a net $\mathcal{N}$ with $|\mathcal{N}|<\left(1+\frac{2}{1/4}\right)^n=9^n$.
    Let $\mathbf{A}_i$ be the $i$th column of the matrix $\mathbf{A}$ and let $Z_i\eqdef \mathbf{A}_i^\top x$ be a random variable. By definition of $\mathbf{A}$, $Z_i$ is centered with unit variance with sub-Gaussian norm upper bounded by $G$. Note that $G\geq\frac{1}{\sqrt{2}}\mathbb{E}[Z_i^2]^{1/2}=\frac{1}{\sqrt{2}}$, and the random variable $Z_i^2-1$ is centered and has sub-exponential norm upper bounded by $4G^2$. Hence by an exponential deviation inquality \footnote{This inequality is Corollary 5.17 from \cite{vershynin2010introduction}.}, we have, for any $x\in\mathbb{S}^{n-1}$:
    \begin{align*}
        \mathbb{P}\left\{\left| \frac{1}{N}\eunorm{\mathbf{A}^\top x}^2 -1 \right|\geq \frac{\epsilon}{2}\right\}
        &= \mathbb{P}\left\{ \left|\frac{1}{N}\sum_{i=1}^NZ_i^2-1\right|\geq\frac{\epsilon}{2} \right\}\\
        &\leq 2\exp\left( -\half e^{-1}G^{-4}\min\{\epsilon,\epsilon^2\} \right) \\
        &= \leq 2\exp\left( -\half e^{-1}G^{-4}a^2\right) \\
        &\leq 2\exp\left( -\half e^{-1}G^{-4}(C^2n+t^2)\right). 
    \end{align*}
    Then by union bound, we have
    \begin{align*}
        \mathbb{P}\left\{ \max_{x\in\mathcal{N}}\left| \frac{1}{N}\eunorm{\mathbf{A}^\top x}^2 -1 \right|\geq \frac{\epsilon}{2} \right\} &\leq 9^n\cdot 2\exp\left( -\half e^{-1}G^{-4}(C^2n+t^2) \right)\\
        &\leq  2\exp\left( -\half e^{-1}G^{-4}t^2 \right),
    \end{align*}
    for $C \geq\sqrt{2e\log 9} G^2$. Since $12>\sqrt{2e\log 9}$, for simplicity, we assume $C>12G^2$.
    Moreover, since $G\geq\frac{1}{\sqrt{2}}$, we have $\half e^{-1}G^{-4}\leq \third$, we have 
        \begin{align*}
        \mathbb{P}\left\{ \max_{x\in\mathcal{N}}\left| \frac{1}{N}\eunorm{\mathbf{A}^\top x}^2 -1 \right|\geq \frac{\epsilon}{2} \right\} 
        &\leq  2\exp\left( - \third t^2 \right).
    \end{align*}
    Then by the $\quarter$-net argument, with probability at least $1-2\exp\left( -\third t^2 \right)$, we have
    \begin{align*}
        \opnorm{\frac{1}{N} \mathbf{A}\mathbf{A}^\top  - \mathbf{I}_n}  
        &\leq \frac{4}{2} \max_{x\in\mathcal{N}}\left| \frac{1}{N}\eunorm{\mathbf{A}^\top x}^2 -1 \right|\\
        &\leq \epsilon = \max\{a,a^2\}.
    \end{align*}
\end{proof}

\begin{lemma} \label{lemma:delta_eta}
    Assume Assumption \ref{assumption:sub_gaussian_ness} holds and that $N>\exp(4(12G^2)^2(M+1))$. Then with a probability of at least $1-2/N$, we have
    \begin{align*}
        \max\left\{ \delta, \eunorm{\bm{E}_M} \right\} \leq \sqrt{\frac{\log N}{N}}.
    \end{align*}
\end{lemma}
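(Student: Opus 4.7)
The plan is to apply Lemma \ref{lemma:vershynin_subgaussian} not to $\bm{\Psi}$ directly but to the augmented matrix
\[
    \widetilde{\bm{\Psi}}
    \eqdef
    \begin{pmatrix} \bm{\Psi} \\ \bm{\Psi}_{>M} \end{pmatrix}
    \in \R^{(M+1)\times N},
\]
so that $\delta$ and $\eunorm{\bm{E}_M}$ can be read off simultaneously from a single concentration statement. The $i$-th column of $\widetilde{\bm{\Psi}}$ is $(\psi_1(x_i),\ldots,\psi_M(x_i),\psi_{>M}(x_i))^\top$; its coordinates are sub-Gaussian with norm at most $G$ by Assumption \ref{assumption:sub_gaussian_ness}, and its covariance matrix equals $\mathbf{I}_{M+1}$ by the orthonormality of $\{\psi_1,\ldots,\psi_M,\psi_{>M}\}$ in $L^2_\rho$. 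Thus the columns of $\widetilde{\bm{\Psi}}$ are independent, isotropic, and sub-Gaussian with a norm bounded by a constant multiple of $G$, so Lemma \ref{lemma:vershynin_subgaussian} applies with $n=M+1$.

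Next, observe the block decomposition
\[
    \frac{1}{N}\widetilde{\bm{\Psi}}\widetilde{\bm{\Psi}}^\top - \mathbf{I}_{M+1}
    =
    \begin{pmatrix}
        \bm{\Delta} & \bm{E}_M \\
        \bm{E}_M^\top & \frac{1}{N}\eunorm{\psi_{>M}(\mathbf{X})}^2 - 1
    \end{pmatrix}.
\]
For any symmetric matrix of this form, restricting the quadratic form to test vectors of the shape $(u,0)$ gives $\|\bm{\Delta}\|_{\text{op}} \le \|\cdot\|_{\text{op}}$ of the whole matrix, and testing pairs $(u,0)$ against $(0,1)$ together with Cauchy--Schwarz yields $\eunorm{\bm{E}_M}\le \|\cdot\|_{\text{op}}$ of the whole matrix. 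Hence a single upper bound on $\opnorm{\frac{1}{N}\widetilde{\bm{\Psi}}\widetilde{\bm{\Psi}}^\top - \mathbf{I}_{M+1}}$ controls both quantities at once.

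Finally, apply Lemma \ref{lemma:vershynin_subgaussian} with $C=12G^2$ and a choice of $t$ matching the $\sqrt{\log N}$ target scale (e.g.\ $t$ of order $\sqrt{\log N}$ so that $2\exp(-t^2/3)\le 2/N$). The assumption $N>\exp(4(12G^2)^2(M+1))$ is exactly what is needed to guarantee $C\sqrt{(M+1)/N} \le \tfrac{1}{2}\sqrt{\log N/N}$, so that $a=C\sqrt{(M+1)/N}+t/\sqrt{N}\le \sqrt{\log N /N}$ and moreover $a<1$, which forces $\max(a,a^2)=a$. Combining these gives the stated $\sqrt{\log N/N}$ bound on both $\delta$ and $\eunorm{\bm{E}_M}$ with probability at least $1-2/N$. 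The only non-routine step is verifying that the joint vector $(\psi_1(x),\ldots,\psi_M(x),\psi_{>M}(x))$ inherits a dimension-free sub-Gaussian norm from the coordinatewise assumption; this is the delicate point but follows under the implicit interpretation used in the Vershynin-type statement of Lemma \ref{lemma:vershynin_subgaussian}, and the remaining numerics are a matter of tracking constants.
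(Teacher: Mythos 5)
Your proposal takes essentially the same route as the paper's proof: the paper likewise applies Lemma \ref{lemma:vershynin_subgaussian} with $n=M+1$ to the augmented matrix $\binom{\bm{\Psi}}{\bm{\Psi}_{>M}}$, uses the identical block decomposition and test-vector argument to dominate both $\delta$ and $\eunorm{\bm{E}_M}$ by the operator norm of the full $(M+1)\times(M+1)$ block matrix, and then picks $t$ of order $\sqrt{\log N}$ (the paper takes $t=\tfrac{1}{2}\sqrt{\log N}$) while using the assumption on $N$ to absorb the $C\sqrt{(M+1)/N}$ term. The only looseness you leave open, reconciling the exact choice of $t$ with the stated probability $1-2/N$ and the bound $a\le\sqrt{\log N/N}$, is the same constant-tracking slack present in the paper's own proof, so there is no substantive difference.
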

\begin{proof}
    Set $n=M+1$, $\mathbf{A}=\binom{\bm{\Psi}_{\leq M}}{\psi_{>M}(\mathbf{X})^\top}\in\R^{(M+1)\times N}$. Then
    \begin{align*}
    \frac{1}{N}\mathbf{A}\mathbf{A}^\top-\mathbf{I}_n
    = 
    \begin{pmatrix}
        \frac{1}{N}\bm{\Psi}_{\leq M}\bm{\Psi}_{\leq M}^\top & \bm{E}_M\\
        \bm{E}_M^\top & \eta_{>M}+1
    \end{pmatrix}
    -\mathbf{I}_n
    = 
    \begin{pmatrix}
        \bm{\Delta}_M & \bm{E}_M \\
        \bm{E}_M^\top & \eta_{>M}
    \end{pmatrix}.
    \end{align*}
    where $\eta_{>M}\eqdef\frac{1}{N}\sum_{i=1}^N\psi_{>M}(x_i)^2-1$.
    On one hand, the operator norm of the above matrix bounds $\delta$ and $\eunorm{\bm{E}_M}$ from above:
    \begin{align*}
        \opnorm{
        \begin{pmatrix}
        \bm{\Delta}_M & \bm{E}_M \\
        \bm{E}_M^\top & \eta_{>M}
        \end{pmatrix}
        }
        &= \max_{\eunorm{\mathbf{u}}^2+v^2=1} 
        \eunorm{
        \begin{pmatrix}
        \bm{\Delta}_M & \bm{E}_M \\
        \bm{E}_M^\top & \eta_{>M}
        \end{pmatrix}
        \begin{pmatrix}
        \mathbf{u} \\
        v
        \end{pmatrix}
        }\\
        &= \max_{\eunorm{\mathbf{u}}^2+v^2=1} 
        \eunorm{
        \begin{pmatrix}
            \bm{\Delta}_M\mathbf{u}+v\bm{E}_M\\
            \bm{E}_M^\top\mathbf{u} + \eta_{>M}v
        \end{pmatrix}
        }\\
        &\geq \max_{\eunorm{\mathbf{u}}^2+v^2=1} 
        \eunorm{\bm{\Delta}_M\mathbf{u}+v\bm{E}_M}\\
        &\geq \max_{\eunorm{\mathbf{u}}^2=1,v=0} 
        \eunorm{\bm{\Delta}_M\mathbf{u}+v\bm{E}_M}\\
        &\geq \max_{\eunorm{\mathbf{u}}^2=1} 
        \eunorm{\bm{\Delta}_M\mathbf{u}} = \delta,
    \end{align*}
    and 
    \begin{align*}
        \opnorm{
        \begin{pmatrix}
        \bm{\Delta}_M & \bm{E}_M \\
        \bm{E}_M^\top & \eta_{>M}
        \end{pmatrix}
        }
        \geq\max_{\eunorm{\mathbf{u}}^2+v^2=1} 
        \eunorm{\bm{\Delta}_M\mathbf{u}+v\bm{E}_M}
        \geq \max_{\eunorm{\mathbf{u}}^2=0,|v|=1} \eunorm{\bm{\Delta}_M\mathbf{u}+v\bm{E}_M}
        = \eunorm{\bm{E}_M}.
    \end{align*}    
    On the other hand, set $t=\half\sqrt{\log N},\ C=12G^2$, since $N>\exp(4C^2(M+1))$, we have
    \begin{align*}
        a = C\sqrt{\frac{n}{N}} + \frac{t}{\sqrt{N}}
        = 12G^2\sqrt{\frac{M+1}{N}} + \half\sqrt{\frac{\log N}{N}}
        \leq \sqrt{\frac{\log N}{N}} <1.
    \end{align*}
    By Lemma \ref{lemma:delta_eta}, then with probability of at least $1-2\exp(-\third t^2) =1-2 \exp(-\frac{1}{12})/N>1-2/N$, we have
    \begin{align*}
        \opnorm{
        \begin{pmatrix}
        \bm{\Delta}_M & \bm{E}_M \\
        \bm{E}_M^\top & \eta_{>M}
        \end{pmatrix}
        }
        \leq \max\{a,a^2\} = a \leq \sqrt{\frac{\log N}{N}}.
    \end{align*}
    Combine the both results and we conclude the upper bounds.
\end{proof}
In particular, as $N\rightarrow\infty$, $\delta$ vanishes almost surely. In empirical calculation, if the requirement $N>\exp(4(12G^2)^2(M+1))$ exponential in $M$ is too demanding for a large integer $M$, we can take $t=N^s$ for any positive number $s\in\left(0,\half\right)$ instead of $t=\half{\log N}$. In this way, we decrease the requirement to $N$ polynomial in $M$ in sacrificing the decay from $\bigo{\sqrt{\frac{log N}{N}}}$ to $\bigo{N^{s-1/2}}$. For simplicity purpose, we do not list out the result with this decay in this paper.

\subsection{Refined Test Error Analysis}

We can apply the above concentration results to refine the following bounds on the finite-rank KRR test error. First of all, we realize the decay of target function coefficient comparable to the spectral decay:
\begin{definition}[Comparable Decay]
    Denote $\underline{r}\eqdef\min_k\{|\Tilde{\gamma}_k/\lambda_k|\}$ and $\overline{r}\eqdef\max_k\{|\Tilde{\gamma}_k/\lambda_k|\}$. 
\end{definition}

\subsubsection{Refined Bounds on Bias}
Recall that Proposition \ref{proposition:test_bias_approximation} bounding the bias in terms of $\delta$ and $\eta_k$. For the former one, we can choose:
for $N>\max\{\exp(4(12G^2)^2(M+1)),9\}$, by Lemma \ref{lemma:delta_eta}, with probability of at least $1-2/N$, we have $\delta \leq \sqrt{\frac{\log N}{N}} <\sqrt{\frac{\log 9}{9}}<\half.$
For the latter one, we have to control the vector $w$:
\begin{lemma} \label{lemma:w_bound}
    Let $w=\lambda \bm{\Lambda}^{-1}\tilde{\bm{\gamma}} - \tilde{{\gamma}}_{>M}\bm{E}$. We have 
    \begin{align*}
        \eunorm{w}^2 &\leq \left(\lambda\overline{r}\sqrt{M}+|\Tilde{\gamma}_{>M}|\eunorm{\bm{E}}\right)^2;\\
        \frac{\lambda^2\lambda_M}{(\lambda_M+\lambda)^2}\|\Tilde{f}_{\leq M}\|_\mathcal{H}^2 - \half|\tilde{\gamma}_{>M}| \|\tilde{f}_{\leq M}\|_{L_\rho^2} \eunorm{\bm{E}}
        &\leq 
        \eunorm{\bar{\mathbf{P}}w}^2
        \leq \lambda \|\Tilde{f}_{\leq M}\|_\mathcal{H}^2 + \half|\tilde{\gamma}_{>M}| \|\tilde{f}_{\leq M}\|_{L_\rho^2} \eunorm{\bm{E}} + \tilde{\gamma}_{>M}^2\eunorm{\bm{E}}^2.
    \end{align*}
\end{lemma}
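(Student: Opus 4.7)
The lemma bounds two quantities, $\|w\|_2$ and $\|\bar{\mathbf{P}} w\|_2^2$, where $w = \lambda \bm{\Lambda}^{-1} \tilde{\bm{\gamma}} - \tilde{\gamma}_{>M}\bm{E}$ and $\bar{\mathbf{P}} = \bm{\Lambda}(\bm{\Lambda} + \lambda \mathbf{I}_M)^{-1}$ is diagonal. The first bound on $\|w\|_2$ follows immediately from the triangle inequality $\|w\|_2 \leq \lambda \|\bm{\Lambda}^{-1}\tilde{\bm{\gamma}}\|_2 + |\tilde{\gamma}_{>M}| \|\bm{E}\|_2$ together with the elementary estimate $\|\bm{\Lambda}^{-1}\tilde{\bm{\gamma}}\|_2^2 = \sum_{k=1}^M \tilde{\gamma}_k^2/\lambda_k^2 \leq M \overline{r}^{\,2}$ coming directly from the definition of $\overline{r}$.

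For the bounds on $\|\bar{\mathbf{P}} w\|_2^2$, the diagonality of $\bar{\mathbf{P}}$ lets me compute the coordinates directly: the $k$th component of $\bar{\mathbf{P}} w$ equals $(\lambda \tilde{\gamma}_k - \tilde{\gamma}_{>M} \lambda_k \eta_k)/(\lambda_k + \lambda)$, since $\bar{\mathbf{P}} \lambda \bm{\Lambda}^{-1} = \lambda (\bm{\Lambda}+\lambda\mathbf{I}_M)^{-1}$. Writing $a_k \eqdef \lambda \tilde{\gamma}_k/(\lambda_k + \lambda)$ and $b_k \eqdef \lambda_k \eta_k/(\lambda_k + \lambda)$, expanding $\sum_k(a_k - \tilde{\gamma}_{>M} b_k)^2$ gives three contributions, and the whole argument reduces to controlling each of $\sum_k a_k^2$, $\sum_k b_k^2$, and the cross-term $\sum_k a_k b_k$.

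The upper bound will fall out from three single-variable inequalities, each independent of $k$: the estimate $\lambda^2/(\lambda_k+\lambda)^2 \leq \lambda/\lambda_k$ (equivalent to $\lambda \lambda_k \leq (\lambda_k+\lambda)^2$) converts $\sum a_k^2$ to $\lambda \|\tilde{f}_{\leq M}\|_\mathcal{H}^2$; the trivial bound $\lambda_k^2/(\lambda_k+\lambda)^2 \leq 1$ handles $\sum b_k^2 \leq \|\bm{E}\|_2^2$; and the AM--GM inequality $\lambda \lambda_k/(\lambda_k+\lambda)^2 \leq \tfrac{1}{4}$ combined with Cauchy--Schwarz $|\sum_k \tilde{\gamma}_k \eta_k| \leq \|\tilde{f}_{\leq M}\|_{L^2_\rho} \|\bm{E}\|_2$ handles the cross-term, producing exactly the factor $\tfrac{1}{2}$ in the stated upper bound. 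The lower bound proceeds via the same expansion: drop the non-negative $b_k^2$ contribution, apply $(a - \tilde{\gamma}_{>M}b)^2 \geq a^2 - 2|a||\tilde{\gamma}_{>M}||b|$ and reuse the same cross-term bound.

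The main obstacle will be matching $\sum_k a_k^2$ against the stated lower bound of $\frac{\lambda^2 \lambda_M}{(\lambda_M+\lambda)^2}\|\tilde{f}_{\leq M}\|_\mathcal{H}^2$. Writing $\sum_k a_k^2 = \sum_k g(\lambda_k) \cdot \tilde{\gamma}_k^2/\lambda_k$ with $g(x) = \lambda^2 x/(x+\lambda)^2$, this reduces to showing $g(\lambda_k) \geq g(\lambda_M)$ for every $k$. Since $g$ is unimodal with maximum at $x = \lambda$, monotonicity on the spectral window $[\lambda_M, \lambda_1]$ depends on how $\lambda$ sits relative to the spectrum, so the cleanest resolution is to split into the regimes where $g$ is monotone and verify the required inequality in each, which is the only step that requires genuine care rather than routine algebra.
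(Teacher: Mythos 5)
Your handling of $\eunorm{w}^2$ and of the upper bound on $\eunorm{\bar{\mathbf{P}}w}^2$ coincides with the paper's proof: the same coordinatewise expansion into the three sums, the same estimate $\lambda\lambda_k\le(\lambda_k+\lambda)^2$ for the diagonal term, $\lambda_k^2/(\lambda_k+\lambda)^2\le 1$ for the $\eunorm{\bm{E}}^2$ term, and the same AM--GM plus Cauchy--Schwarz argument producing the factor $\half$ on the cross term. The genuine gap is exactly the step you deferred. You need $g(\lambda_k)\ge g(\lambda_M)$ for every $k$, with $g(x)=\lambda^2x/(x+\lambda)^2$, and you propose to obtain it by splitting into the regimes where $g$ is monotone; but no case split can succeed, because the inequality is false in one of the regimes. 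Clearing denominators, $g(\lambda_k)\ge g(\lambda_M)$ is equivalent to $(\lambda_k-\lambda_M)\bigl(\lambda^2-\lambda_k\lambda_M\bigr)\ge 0$, so for $\lambda_k>\lambda_M$ it holds only when $\lambda\ge\sqrt{\lambda_k\lambda_M}$. In the standard small-ridge regime (e.g.\ $\lambda=\sigma^2/N<\lambda_M$) it reverses for every $k$ with $\lambda_k>\lambda_M$, and the target inequality itself can fail: take $M=2$, $\lambda_1=1$, $\lambda_2=0.02$, $\lambda=0.1$, $\tilde{\bm{\gamma}}=(1,0)^\top$, $\tilde{\gamma}_{>M}=0$; then $\eunorm{\bar{\mathbf{P}}w}^2=\lambda^2/(\lambda_1+\lambda)^2\approx 0.0083$, while $\frac{\lambda^2\lambda_M}{(\lambda_M+\lambda)^2}\|\tilde{f}_{\leq M}\|_\mathcal{H}^2\approx 0.0139$, so even the stated lower bound (whose cross term vanishes here) is violated.

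So the difficulty you flagged is real rather than a routine piece of algebra, and it cannot be closed as proposed. For context, the paper's own proof of this step simply asserts the termwise inequality $\frac{\lambda_k}{(\lambda_k+\lambda)^2}\ge\frac{\lambda_M}{(\lambda_M+\lambda)^2}$ without the case analysis, so it runs into the same problem; a correct version needs either an additional hypothesis such as $\lambda^2\ge\lambda_1\lambda_M$, or a weaker prefactor, e.g.\ replacing $\frac{\lambda_M}{(\lambda_M+\lambda)^2}$ by $\min\left\{\frac{\lambda_1}{(\lambda_1+\lambda)^2},\frac{\lambda_M}{(\lambda_M+\lambda)^2}\right\}$ (the correct worst case over $[\lambda_M,\lambda_1]$ by unimodality of $g$), or the cruder but valid bound $I\ge\frac{\lambda^2}{(\lambda_1+\lambda)^2}\|\tilde{f}_{\leq M}\|_{L_\rho^2}^2$. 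Everything else in your plan is correct and matches the paper; the lower bound on $\sum_k a_k^2$ is the one place where your proposal, as written, does not yield a proof.
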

\begin{proof}
    Since $\lambda^2\underline{r}^2M\leq \eunorm{\lambda\bm{\Lambda}^{-1}\tilde{\bm{\gamma}}}^2 \leq \lambda^2\overline{r}^2M$ and $\eunorm{\tilde{{\gamma}}_{>M}\bm{E}}^2=\tilde{{\gamma}}_{>M}^2\eunorm{\bm{E}}^2$, we have 
    \begin{equation*}
        \eunorm{w}^2 
        \leq \left(\lambda\overline{r}\sqrt{M}+|\Tilde{\gamma}_{>M}|\eunorm{E}\right)^2.
    \end{equation*}
    Similarly, we can bound $\eunorm{\bar{\mathbf{P}}w}$. Observe that:
    \begin{align*}
        \eunorm{\bar{\mathbf{P}}w}^2
        &= \underbrace{\lambda^2\sum_{k=1}^M\frac{\Tilde{\gamma}_k^2}{(\lambda_k+\lambda)^2}}_{I} \underbrace{-2\lambda\Tilde{\gamma}_{>M}\sum_{k=1}^M\frac{\Tilde{\gamma}_k\lambda_k\eta_k}{(\lambda_k+\lambda)^2}}_{II}
        +\underbrace{\tilde{\gamma}_{>M}^2 \sum_{k=1}^M \frac{\lambda_k^2\eta_k^2}{(\lambda_k+\lambda)^2}}_{III}. 
    \end{align*}
    Since $1\geq\frac{\lambda}{\lambda_k+\lambda}\geq\frac{\lambda}{\lambda_M+\lambda}$, we have the upper bound:
    \begin{equation} \label{line:lemma:w_bound:1}
     I = \lambda^2\sum_{k=1}^M\frac{\Tilde{\gamma}_k^2}{(\lambda_k+\lambda)^2}
        \leq \lambda \sum_{k=1}^M \frac{\lambda}{\lambda_k+\lambda}\frac{\Tilde{\gamma}_k^2}{\lambda_k+\lambda}
        \leq \lambda \sum_{k=1}^M\frac{\Tilde{\gamma}^2}{\lambda_k}
        = \lambda \|\Tilde{f}_{\leq M}\|_\mathcal{H}^2.
    \end{equation}
    where $\tilde{f}_{\leq M}\eqdef\sum_{k=1}^M\Tilde{\gamma}_k\psi_k=\Tilde{f}-\Tilde{\gamma}_{>M}\psi_{>M}$.
    For the lower bound, we have:
    \begin{equation}
        I = \lambda^2\sum_{k=1}^M\frac{\Tilde{\gamma}_k^2}{(\lambda_k+\lambda)^2}
        \geq \lambda^2\sum_{k=1}^M\frac{\lambda_k}{(\lambda_k+\lambda)^2}\frac{\Tilde{\gamma}_k^2}{\lambda_k}
        \geq  \lambda^2\frac{\lambda_M}{(\lambda_M+\lambda)^2}\|\Tilde{f}_{\leq M}\|_\mathcal{H}^2
    \end{equation}
    Similarly, since $4\lambda\lambda_k\leq (\lambda_k+\lambda)^2$,
    \begin{align*}
        |II| &= 2\lambda|\Tilde{\gamma}_{>M}|\sum_{k=1}^M\frac{|\Tilde{\gamma}_k|\lambda_k|\eta_k|}{(\lambda_k+\lambda)^2} 
        \leq \half|\tilde{\gamma}_{>M}|\sum_{k+1}^M|\tilde{\gamma}_k\eta_k| 
        \leq \half|\tilde{\gamma}_{>M}| \sqrt{\sum_{k+1}^M\tilde{\gamma}_k^2  \sum_{k=1}^M \eta_k^2} 
        \leq \half|\tilde{\gamma}_{>M}| \|\tilde{f}_{\leq M}\|_{L_\rho^2} \eunorm{\bm{E}}.
    \end{align*}
    And
    \begin{align*}
        III &= \tilde{\gamma}_{>M}^2 \sum_{k=1}^M \frac{\lambda_k^2\eta_k^2}{(\lambda_k+\lambda)^2} 
        \leq \tilde{\gamma}_{>M}^2 \sum_{k=1}^M \eta_k^2
        = \tilde{\gamma}_{>M}^2 \eunorm{\bm{E}}^2.
    \end{align*}
\end{proof}

Combining the above result, we state the following theorem:
\begin{theorem} \label{theorem:test_bias_approximation_refine}
    For $N>\max\left\{\exp(4(12G^2)^2(M+1)),9\right\}$ and for any constant $C_1>8 \left(\lambda\overline{r}\sqrt{M}+\half|\Tilde{\gamma}_{>M}|\right)^2 + \frac{5}{2}\|\Tilde{f}\|_{L_\rho^2}^2$ (independent to $N$), with a probability of at least $1-2/N$, we have the upper and lower bounds of bias:
    \begin{align*}
        \text{bias}
        &\le 
            \tilde{\gamma}_{>M}^2
        +
            \lambda\|\tilde{f}_{\leq M}\|_\mathcal{H}^2 + \left(\quarter\|\Tilde{f}\|_{L_\rho^2}^2+2\lambda\|\tilde{f}_{\leq M}\|_\mathcal{H}^2\right)\sqrt{\frac{\log N}{N}}
        +
            C_1\frac{\log N}{N};\\
        \text{bias}
        &\ge 
            \tilde{\gamma}_{>M}^2
        +
            \frac{\lambda^2\lambda_M}{(\lambda_M+\lambda)^2}\|\tilde{f}_{\leq M}\|_\mathcal{H}^2 - \left(\quarter\|\Tilde{f}\|_{L_\rho^2}^2+\frac{2\lambda^2}{\lambda_1+\lambda}\|\tilde{f}_{\leq M}\|_\mathcal{H}^2\right)\sqrt{\frac{\log N}{N}} 
        -
             C_1\frac{\log N}{N}.
    \end{align*}
    For $\lambda\to0$,  we have a simpler bound: with a probability of at least $1-2/N$, we have
    \begin{align}
        \begin{split} \label{line:theorem:test_bias_approximation_refine:a}
            \lim_{\lambda\to0}\text{bias} &\leq \left(1+\frac{\log N}{N}\right)\tilde{\gamma}_{>M}^2 + 6\tilde{\gamma}_{>M}^2\left(\frac{\log N}{N}\right)^\frac{3}{2};\\
            \lim_{\lambda\to0}\text{bias} &\geq \left(1-\frac{\log N}{N}\right)\tilde{\gamma}_{>M}^2 - 6\tilde{\gamma}_{>M}^2\left(\frac{\log N}{N}\right)^\frac{3}{2}.
        \end{split}
    \end{align}
    For $\Tilde{\gamma}_{>M}^2=0$, that is $\Tilde{f}\in\mathcal{H}$,  we have a simpler upper bound on bias: with a probability of at least $1-2/N$, we have
    \begin{align}
        \begin{split}\label{line:theorem:test_bias_approximation_refine:b}
            \text{bias} &\leq 
            \lambda\|\tilde{f}\|_\mathcal{H}^2 \left(1+2\sqrt{\frac{\log N}{N}}\right)
        +
            C_1\frac{\log N}{N};\\
            \text{bias} &\geq 
            \frac{\lambda^2\lambda_M}{(\lambda_M+\lambda)^2}\|\tilde{f}\|_\mathcal{H}^2 \left(1-2\sqrt{\frac{\log N}{N}}\right)
        -
            C_1\frac{\log N}{N}.
        \end{split}
    \end{align} 
\end{theorem}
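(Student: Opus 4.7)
The plan is to combine three ingredients already established in the excerpt: the deterministic sandwich bound on the bias from Proposition \ref{proposition:test_bias_approximation}, the high-probability control of the fluctuation $\delta$ and the error vector $\bm{E}$ from Lemma \ref{lemma:delta_eta}, and the decomposition of $\|\bar{\mathbf{P}}w\|^2$ and $\|w\|^2$ from Lemma \ref{lemma:w_bound}. First I would fix a sample for which the event of Lemma \ref{lemma:delta_eta} holds, so that both $\delta$ and $\|\bm{E}\|_2$ are at most $\sqrt{\log N / N}$; the assumption $N > \max\{\exp(4(12G^2)^2(M+1)),9\}$ then guarantees $\delta < 1/2$, legitimising the expansion that underlies Proposition \ref{proposition:test_bias_approximation}.

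Next I would feed the bounds from Lemma \ref{lemma:w_bound} into the sandwich of Proposition \ref{proposition:test_bias_approximation}. For the upper bound, write
\begin{equation*}
\text{bias} \le \tilde{\gamma}_{>M}^2 + (1+2\delta)\,\|\bar{\mathbf{P}}w\|_2^2 + \|w\|_2^2\,\delta^2 p(\delta),
\end{equation*}
then substitute the upper bound $\|\bar{\mathbf{P}}w\|_2^2 \le \lambda\|\tilde{f}_{\le M}\|_\mathcal{H}^2 + \tfrac{1}{2}|\tilde{\gamma}_{>M}|\|\tilde{f}_{\le M}\|_{L_\rho^2}\|\bm{E}\|_2 + \tilde{\gamma}_{>M}^2\|\bm{E}\|_2^2$, use $\|\tilde{f}\|_{L_\rho^2}^2 = \|\tilde{f}_{\le M}\|_{L_\rho^2}^2 + \tilde{\gamma}_{>M}^2$ together with $ab \le \tfrac{1}{2}(a^2+b^2)$ on the mixed term, and bound $\delta,\|\bm{E}\|_2 \le \sqrt{\log N / N}$. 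The leading $\lambda\|\tilde{f}_{\le M}\|_\mathcal{H}^2$ picks up a $(1+2\sqrt{\log N/N})$ factor, the mixed term yields $\tfrac{1}{4}\|\tilde{f}\|_{L_\rho^2}^2\sqrt{\log N/N}$, and everything else (the $\|w\|_2^2\delta^2 p(\delta)$ residue plus $\tilde{\gamma}_{>M}^2\|\bm{E}\|_2^2$) is $O(\log N/N)$ and can be absorbed into the universal constant $C_1$. The lower bound is entirely symmetric, using instead $\|\bar{\mathbf{P}}w\|_2^2 \ge \tfrac{\lambda^2\lambda_M}{(\lambda_M+\lambda)^2}\|\tilde{f}_{\le M}\|_\mathcal{H}^2 - \tfrac{1}{2}|\tilde{\gamma}_{>M}|\|\tilde{f}_{\le M}\|_{L_\rho^2}\|\bm{E}\|_2$ together with the factor $(1-2\delta)$; note that $(1-2\delta)\tfrac{\lambda^2\lambda_M}{(\lambda_M+\lambda)^2} \ge \tfrac{\lambda^2\lambda_M}{(\lambda_M+\lambda)^2} - 2\sqrt{\log N/N}\tfrac{\lambda^2}{\lambda_1+\lambda}\|\tilde{f}_{\le M}\|_\mathcal{H}^2$ after using $\lambda_M/(\lambda_M+\lambda)\le 1$ and $\|\tilde{f}_{\le M}\|_\mathcal{H}^2 \ge \tilde{\gamma}_k^2/\lambda_k$.

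The two special cases then follow by direct specialisation. For line \eqref{line:theorem:test_bias_approximation_refine:b} with $\tilde{\gamma}_{>M}=0$, the vector $w$ reduces to $\lambda\bm{\Lambda}^{-1}\tilde{\bm{\gamma}}$ and $\bm{E}$ drops out entirely; the mixed and $\tilde{\gamma}_{>M}^2\|\bm{E}\|_2^2$ terms vanish, leaving only the $\lambda\|\tilde{f}\|_\mathcal{H}^2$ contribution and the $\|w\|_2^2\delta^2 p(\delta)$ residue, which again is $O(\log N/N)$ via Lemma \ref{lemma:w_bound}. For line \eqref{line:theorem:test_bias_approximation_refine:a}, I would take $\lambda\to 0$ inside the deterministic sandwich \emph{before} applying the high-probability bound: the term $\lambda\bm{\Lambda}^{-1}\tilde{\bm{\gamma}}$ vanishes so $w \to -\tilde{\gamma}_{>M}\bm{E}$ and $\bar{\mathbf{P}}\to \mathbf{I}_M$, whence $\|\bar{\mathbf{P}}w\|_2^2 = \tilde{\gamma}_{>M}^2\|\bm{E}\|_2^2 \le \tilde{\gamma}_{>M}^2 \log N/N$; inserting this together with $\delta,\|\bm{E}\|_2 \le \sqrt{\log N/N}$ into $\tilde{\gamma}_{>M}^2 + (1\pm 2\delta)\|\bar{\mathbf{P}}w\|_2^2 \pm \|w\|_2^2\delta^2 p(\delta)$ and bounding $p(\delta) \le 6$ (valid since $\delta < 1/2$) yields the stated $(\log N/N)^{3/2}$ correction after collecting terms.

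The main bookkeeping obstacle is keeping track of which residues sit at the $\sqrt{\log N / N}$ scale (those that multiply the leading bias contributions $\lambda\|\tilde{f}_{\le M}\|_\mathcal{H}^2$ and $\|\tilde{f}\|_{L_\rho^2}^2$) versus which sit at the $\log N / N$ scale (and can be safely swept into $C_1$), and making sure the explicit lower threshold on $C_1$ genuinely dominates every such $O(\log N/N)$ contribution uniformly in $\lambda$. Choosing $C_1 > 8(\lambda\overline{r}\sqrt{M} + \tfrac{1}{2}|\tilde{\gamma}_{>M}|)^2 + \tfrac{5}{2}\|\tilde{f}\|_{L_\rho^2}^2$ handles precisely this, since $\delta^2 p(\delta) \le 8\delta^2$ and $\|w\|_2^2 \le (\lambda\overline{r}\sqrt{M} + |\tilde{\gamma}_{>M}|\|\bm{E}\|_2)^2$, with $\|\bm{E}\|_2 \le 1$ on the good event.
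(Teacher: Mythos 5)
Your proposal is correct and follows essentially the same route as the paper's own proof: it combines Proposition \ref{proposition:test_bias_approximation} with Lemma \ref{lemma:w_bound}, applies the concentration bound of Lemma \ref{lemma:delta_eta} on the good event $\delta,\eunorm{\bm{E}}\le\sqrt{\log N/N}$, absorbs the $\log N/N$ residues into $C_1$ via the same $\eunorm{w}^2$ estimate, and handles the $\lambda\to0$ and $\tilde{\gamma}_{>M}=0$ cases by the same specialisation (taking the limit inside the deterministic sandwich, with $w\to-\tilde{\gamma}_{>M}\bm{E}$ and $\bar{\mathbf{P}}\to\mathbf{I}_M$). The only quibble is your claim $p(\delta)\le 6$: in fact $p(\delta)=5+4\delta+4\delta^2\le 8$ for $\delta<\half$, but the quantity actually needed for the $(\log N/N)^{3/2}$ term is $2+\delta p(\delta)\le 6$, which does hold, so the stated conclusion is unaffected.
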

\begin{proof}
    By Proposition \ref{proposition:test_bias_approximation} and Lemma \ref{lemma:w_bound}, 
    \begin{align}
        \text{fitting error} &\leq (1+2\delta) \eunorm{\Bar{\mathbf{P}}w}^2 + \eunorm{w}^2\delta^2p(\delta)
        \label{line:theorem:test_bias_approximation_refine:1}\\
        &\leq (1+2\delta) (\lambda \|\Tilde{f}_{\leq M}\|_\mathcal{H}^2 + \half|\tilde{\gamma}_{>M}| \|\tilde{f}_{\leq M}\|_{L_\rho^2} \eunorm{\bm{E}} + \tilde{\gamma}_{>M}^2\eunorm{\bm{E}}^2) + \eunorm{w}^2\delta^2p(\delta)
        \label{line:theorem:test_bias_approximation_refine:2}\\
        &\leq (1+2\delta) (\lambda \|\Tilde{f}_{\leq M}\|_\mathcal{H}^2 + \quarter\|\tilde{f}\|_{L_\rho^2}^2 \eunorm{\bm{E}} + \tilde{\gamma}_{>M}^2\eunorm{\bm{E}}^2) + \eunorm{w}^2\delta^2p(\delta).
        \label{line:theorem:test_bias_approximation_refine:3}
    \end{align}
    where in line (\ref{line:theorem:test_bias_approximation_refine:1}), we use Proposition \ref{proposition:test_bias_approximation}; in line (\ref{line:theorem:test_bias_approximation_refine:2}), we use Lemma \ref{lemma:w_bound}; in line (\ref{line:theorem:test_bias_approximation_refine:2}), we use the fact that
    $2ab\leq a^2+b^2$ where $a=|\tilde{\gamma}_{>M}|, b=\|\Tilde{f}_{\leq M}\|_{L_\rho^2}$.
    
    Now we apply the concentration result in Lemma \ref{lemma:delta_eta}: with a probability of at least $1-2/N$:
    \begin{align*}
        \text{fitting error} &\leq \left(1+2\sqrt{\frac{\log N}{N}}\right)\lambda\|\tilde{f}_{\leq M}\|_\mathcal{H}^2 + \quarter\|\Tilde{f}\|_{L_\rho^2}^2\sqrt{\frac{\log N}{N}} \\
        &\quad+ \frac{\log N}{N} \left(\eunorm{w}^2p(\delta)+(1+2\delta)\tilde{\gamma}_{>M}^2+\half\|\Tilde{f}\|_{L_\rho^2}^2\right) \\
        &\leq \lambda\|\tilde{f}_{\leq M}\|_\mathcal{H}^2 + \left(\quarter\|\Tilde{f}\|_{L_\rho^2}^2+2\lambda\|\tilde{f}_{\leq M}\|_\mathcal{H}^2\right)\sqrt{\frac{\log N}{N}} + C_1\frac{\log N}{N},
    \end{align*}
    where we choose $C_1>0$ to be such that:
    \begin{align*}
        \eunorm{w}^2p(\delta)+(1+2\delta)\tilde{\gamma}_{>M}^2+\half\|\Tilde{f}\|_{L_\rho^2}^2
        &\leq \eunorm{w}^2p(\delta)+\left(1+2\delta+\half\right)\|\Tilde{f}\|_{L_\rho^2}^2\\
        &\leq \eunorm{w}^2p\left(\half\right)+\left(1+2\cdot\half+\half\right)\|\Tilde{f}\|_{L_\rho^2}^2\\
        &\leq 8 \left(\lambda\overline{r}\sqrt{M}+|\Tilde{\gamma}_{>M}|\eunorm{E}\right)^2 + \frac{5}{2}\|\Tilde{f}\|_{L_\rho^2}^2\\
        &\leq 8 \left(\lambda\overline{r}\sqrt{M}+\half|\Tilde{\gamma}_{>M}|\right)^2 + \frac{5}{2} \|\Tilde{f}\|_{L_\rho^2}^2<C_1.
    \end{align*}
    Hence we have an upper bound for the bias. We argue similarly for the lower bound:
        \begin{align*}
        \text{fitting error} &\geq \left(1-2\sqrt{\frac{\log N}{N}}\right)\frac{\lambda^2\lambda_M}{(\lambda_M+\lambda)^2}\|\tilde{f}_{\leq M}\|_\mathcal{H}^2 - \quarter\|\Tilde{f}\|_{L_\rho^2}^2\sqrt{\frac{\log N}{N}} \\
        &\quad- \frac{\log N}{N} \left(\eunorm{w}^2p(\delta)+(1+2\delta)\tilde{\gamma}_{>M}^2+|\tilde{\gamma}_{>M}|\|\Tilde{f}_{\leq M}\|_{L_\rho^2}\right) \\
        &\geq \frac{\lambda^2\lambda_M}{(\lambda_M+\lambda)^2}\|\tilde{f}_{\leq M}\|_\mathcal{H}^2 - \left(\quarter\|\Tilde{f}\|_{L_\rho^2}^2+\frac{2\lambda^2}{\lambda_1+\lambda}\|\tilde{f}_{\leq M}\|_\mathcal{H}^2\right)\sqrt{\frac{\log N}{N}} - C_1\frac{\log N}{N}.
    \end{align*}

    For $\lambda\to0$, note that $w\to-\tilde{\gamma}_{>M}\bm{E}$. This yields
    \begin{align*}
        \lim_{\lambda\to0}\text{fitting error} &\leq \lim_{\lambda\to0} \left\{ (1+2\delta) \eunorm{\Bar{\mathbf{P}}w}^2 + \eunorm{w}^2\delta^2p(\delta)\right\}\\
        &= (1+2\delta) \eunorm{-\tilde{\gamma}_{>M}\bm{E}}^2 + \eunorm{-\tilde{\gamma}_{>M}\bm{E}}^2\delta^2p(\delta)\\
        &= \tilde{\gamma}_{>M}^2\eunorm{\bm{E}}^2(1+\delta(2+\delta p(\delta))).
    \end{align*}
    Hence, by plugging in $\delta<\half$, with probability of at least $1-2/N$,
    \begin{align*}
        \lim_{\lambda\to0}\text{fitting error} &\leq \tilde{\gamma}_{>M}^2\frac{\log N}{N} \left(1+6\sqrt{\frac{\log N}{N}}\right)\\
        \lim_{\lambda\to0}\text{bias} &\leq \left(1+\frac{\log N}{N}\right)\tilde{\gamma}_{>M}^2 + 6\tilde{\gamma}_{>M}^2\left(\frac{\log N}{N}\right)^\frac{3}{2}.
    \end{align*}
    For lower bound, it follows similarly:
    \begin{align*}
        \lim_{\lambda\to0}\text{bias} &\geq \left(1-\frac{\log N}{N}\right)\tilde{\gamma}_{>M}^2 - 6\tilde{\gamma}_{>M}^2\left(\frac{\log N}{N}\right)^\frac{3}{2},
    \end{align*}
    and we obtain line (\ref{line:theorem:test_bias_approximation_refine:a}).
    For the case where $\Tilde{\gamma}_{>M}=0$, recalculate and simplify line (\ref{line:theorem:test_bias_approximation_refine:2}) to obtain line (\ref{line:theorem:test_bias_approximation_refine:b}).
\end{proof}

\subsubsection{Refined Bounds on Variance} 
Similarly, we can refine Theorem \ref{proposition:test_variance_approximation} to get a bound on the variance:
\begin{theorem} \label{theorem:test_variance_approximation_refine}
    For $N>\max\left\{(12G)^4(M+1)^2,9\right\}$, and set $C_2=12$ (independent to $N$), with a probability of at least $1-2/N$, we have the upper and lower bounds of variance:
    \begin{align*}
        \text{variance} &\leq \sigma^2\frac{M}{N}\left(1+\sqrt{\frac{\log N}{N}} + C_2\frac{\log N}{N}\right);\\
         \text{variance} &\geq \frac{\lambda_M^2}{(\lambda_M+\lambda)^2} \sigma^2\frac{M}{N} \left(1-\sqrt{\frac{\log N}{N}}\right)- C_2\sigma^2\frac{M}{N} \frac{\log N}{N}.
    \end{align*}
\end{theorem}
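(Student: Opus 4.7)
The plan is to combine the deterministic two-sided approximation for the variance from Proposition \ref{proposition:test_variance_approximation} with the high-probability control of the fluctuation operator norm $\delta \eqdef \|\bm{\Delta}\|_{\text{op}}$ provided by Lemma \ref{lemma:delta_eta}. The structure parallels the argument for the bias in Theorem \ref{theorem:test_bias_approximation_refine}, but is simpler because the variance expression depends on the sampling randomness only through $\delta$ and the deterministic diagonal trace $\sum_{k=1}^M \lambda_k^2/(\lambda_k+\lambda)^2$, with no contribution from the error vector $\bm{E}$.

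First, I would invoke Proposition \ref{proposition:test_variance_approximation}, which on the event $\{\delta < 1/2\}$ yields the two-sided bound
\[
\left| \text{variance} - \frac{\sigma^2}{N} \sum_{k=1}^M \frac{\lambda_k^2}{(\lambda_k+\lambda)^2} \right| \le \delta \frac{\sigma^2}{N} \sum_{k=1}^M \frac{\lambda_k^2}{(\lambda_k+\lambda)^2} + M \frac{\sigma^2}{N}(1+\delta)\delta^2 p(\delta).
\]
For the upper bound I would then use $\sum_{k=1}^M \lambda_k^2/(\lambda_k+\lambda)^2 \le M$ (each summand is bounded by $1$), and for the lower bound I would use $\sum_{k=1}^M \lambda_k^2/(\lambda_k+\lambda)^2 \ge M\lambda_M^2/(\lambda_M+\lambda)^2$, which holds because the ordering $\lambda_1 \ge \dots \ge \lambda_M$ from Remark \ref{rem:orderedeigenvalued} makes $\lambda_M^2/(\lambda_M+\lambda)^2$ the smallest term in the sum. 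This converts the approximation statement into the leading shapes $\sigma^2 M/N \cdot (1\pm\sqrt{\log N /N})$ and $\sigma^2 M/N \cdot \lambda_M^2/(\lambda_M+\lambda)^2 \cdot (1 - \sqrt{\log N/N})$ appearing in the claim, after substituting the concentration bound $\delta \le \sqrt{\log N / N}$ from Lemma \ref{lemma:delta_eta}, which holds simultaneously with the event $\{\delta < 1/2\}$ (using $N \ge 9$) with probability at least $1 - 2/N$.

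What remains is to verify that the residual term $M\frac{\sigma^2}{N}(1+\delta)\delta^2 p(\delta)$ collapses into a single $C_2 \sigma^2 \frac{M}{N}\frac{\log N}{N}$ with $C_2 = 12$. On the good event $\delta \le 1/2$ we have $p(\delta) = 5 + 4\delta + 4\delta^2 \le 8$ and $(1+\delta) \le 3/2$, so $(1+\delta)\delta^2 p(\delta) \le 12 \delta^2 \le 12 \log N /N$, giving exactly $C_2 = 12$.

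The only genuine subtlety is reconciling the sample-size condition. Lemma \ref{lemma:delta_eta} as stated requires $N > \exp(4(12G^2)^2(M+1))$ for the particular decay $\sqrt{\log N/N}$, while the theorem hypothesizes only the polynomial condition $N > (12G)^4(M+1)^2$; I would address this by revisiting the concentration step with a refined choice of $t$ in Lemma \ref{lemma:vershynin_subgaussian} (as indicated in the remark following Lemma \ref{lemma:delta_eta}), ensuring that $a = 12G^2\sqrt{(M+1)/N} + t/\sqrt{N}$ stays below $\sqrt{\log N/N}$ under the weaker polynomial condition. Once this probabilistic input is in hand, assembling the final bounds on a single event of probability at least $1 - 2/N$ is immediate from the deterministic inequalities above.
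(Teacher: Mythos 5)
Your argument is exactly the paper's: combine Proposition \ref{proposition:test_variance_approximation} with the concentration bound $\delta \le \sqrt{\log N/N}$ from Lemma \ref{lemma:delta_eta}, bound the effective-dimension sum above by $M$ and below by $M\lambda_M^2/(\lambda_M+\lambda)^2$, and absorb the residual via $(1+\delta)\delta^2 p(\delta) \le (3/2)\cdot 8\cdot\delta^2 = 12\delta^2$, yielding $C_2 = 12$.

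One remark on the sample-size condition. You correctly noticed that the stated polynomial requirement $N > (12G)^4(M+1)^2$ does not match the exponential requirement $N > \exp(4(12G^2)^2(M+1))$ that Lemma \ref{lemma:delta_eta} imposes to secure $\delta \le \sqrt{\log N/N}$. However, your proposed remedy---retuning $t$ so that $a = 12G^2\sqrt{(M+1)/N} + t/\sqrt{N}$ stays below $\sqrt{\log N/N}$ under the polynomial condition---cannot succeed: the deterministic term $12G^2\sqrt{(M+1)/N}$ alone needs $\log N \ge 4(12G^2)^2(M+1)$ to sit below $\tfrac{1}{2}\sqrt{\log N/N}$, independently of $t$. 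The remark following Lemma \ref{lemma:delta_eta} allows a polynomial-in-$M$ requirement only at the cost of degrading the rate to $N^{s-1/2}$, which would change the theorem's conclusion. The paper's own proof of this theorem does not address the mismatch (it simply invokes Lemma \ref{lemma:delta_eta} and says ``argue analogously''), so the $(12G)^4(M+1)^2$ condition appears to be an unresolved inconsistency with the condition used in Theorem \ref{theorem:test_bias_approximation_refine}.
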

\begin{proof}
    We argue analogously as in Theorem \ref{theorem:test_bias_approximation_refine}: by Proposition \ref{proposition:test_variance_approximation} and Lemma \ref{lemma:delta_eta}, we have
    \begin{align*}
          \text{variance}
        &\leq (1+\delta)\frac{\sigma^2}{N}\sum_{k=1}^M \frac{\lambda_k^2}{(\lambda_k+\lambda)^2}+ M\frac{\sigma^2}{N} (1+\delta)\delta^2p(\delta)\\
        &\leq (1+\delta)\sigma^2\frac{M}{N}+ \sigma^2\frac{M}{N} (1+\delta)\delta^2p(\delta)\\
        &\leq \left(1+\sqrt{\frac{\log N}{N}}\right)\sigma^2\frac{M}{N}+ \sigma^2\frac{M}{N} \frac{\log N}{N}\left(1+\half\right)p\left(\half\right)\\
        &\leq \left(1+\sqrt{\frac{\log N}{N}}\right)\sigma^2\frac{M}{N}+ 12\sigma^2\frac{M}{N} \frac{\log N}{N}
    \end{align*}
    Hence we can choose $C_2=12$.
    For the lower bound, since $\frac{\lambda_k^2}{(\lambda_k+\lambda)^2}>\frac{\lambda_M^2}{(\lambda_M+\lambda)^2}$, we have 
    \begin{align*}
        \text{variance}
        &\geq (1-\delta)\frac{\sigma^2}{N}\sum_{k=1}^M \frac{\lambda_k^2}{(\lambda_k+\lambda)^2}- M\frac{\sigma^2}{N}(1+\delta)\delta^2p(\delta)\\
        &\geq\frac{\lambda_M^2}{(\lambda_M+\lambda)^2} \sigma^2\frac{M}{N} \left(1-\sqrt{\frac{\log N}{N}}\right)- 12\sigma^2\frac{M}{N} \frac{\log N}{N}.
    \end{align*}
\end{proof}

Note that in both Theorems \ref{theorem:test_bias_approximation_refine} and \ref{theorem:test_variance_approximation_refine}, the constants $C_1,C_2>0$ is not optimized.

\section{Numerical Validation} \label{appendix:numerical_validation}
In this section, we illustrate our result for KRR with two different finite rank kernels. 

\subsection{Truncated NTK}

First, we need to define a finite-rank kernel $K:\mathcal{X}\times\mathcal{X}\to\R$.
We set $\mathcal{X}=\mathbb{S}^1\subset\mathbb{R}^2$. By reparametrization, we write $\mathbb{S}^1\cong[0,2\pi]/_{0\sim2\pi}$. We assume the data are drawn uniformly on the circle, that is $\rho_\mathcal{X}=\unif[\mathbb{S}^1]$. We can use the Fourier functions $\cos(k\cdot),\sin(k\cdot)$ as the orthogonal eigenfunctions of the kernel. We define the NTK
$$ K^{(\infty)}(\theta,\theta')\eqdef\frac{\cos(\theta-\theta')\left(\pi-|\theta-\theta'|\right)}{2\pi}$$ 
for all $\theta,\theta'\in[0,2\pi]$. 3) We choose a rank-$M$ truncation 
$K(\theta,\theta')=\sum_{k=1}^M\lambda_k\psi_k(\theta)\psi_k(\theta')$
for all $\theta,\theta'\in[0,2\pi]$. For the first few eigenvalues of the kernel, please see Table \ref{tab:NTK_eigenvalues} for example.
\begin{table}[!htbp]%
    \centering
    \resizebox{\columnwidth}{!}{
    \begin{tabular}{@{}lccccccccc@{}}
        \toprule
        $k$ & 1 & 2 & 3 & 4 & 5 & 6 & 7 & $\infty$\\
        \midrule
        $\lambda_k$ & $\frac{1}{\pi^2}$ & $\frac{1}{8}$ & $\frac{1}{8}$ & $\frac{5}{9\pi^2}$ & $\frac{5}{9\pi^2}$ & $\frac{17}{225\pi^2}$ & $\frac{17}{225\pi^2}$ & -\\
        \midrule
        $\psi_k(\theta)$ & 1 & $\sqrt{2}\cos(\theta)$ & $\sqrt{2}\sin(\theta)$ &$\sqrt{2}\cos(2\theta)$ & $\sqrt{2}\sin(2\theta)$ & $\sqrt{2}\cos(4\theta)$ & $\sqrt{2}\sin(4\theta)$ & - \\
        \midrule
        $\sum_{k'=0}^k\lambda_{k'}$ & 0.1013 & 0.2263 & 0.3513 & 0.4076 & 0.4639 & 0.4716 & 0.4792 & 0.5 \\
    \bottomrule
    \end{tabular}
    }
    \caption{The first few eigenvalues of the NTK}
    \label{tab:NTK_eigenvalues}
\end{table}
Before proceeding to test error computation, we present a training example, Figure \ref{figure:NTK_tNTK}, to give readers an intuition on the truncated NTK (tNTK).
\begin{figure}[ht!]
    \centering
    \includegraphics[width=0.8\linewidth]{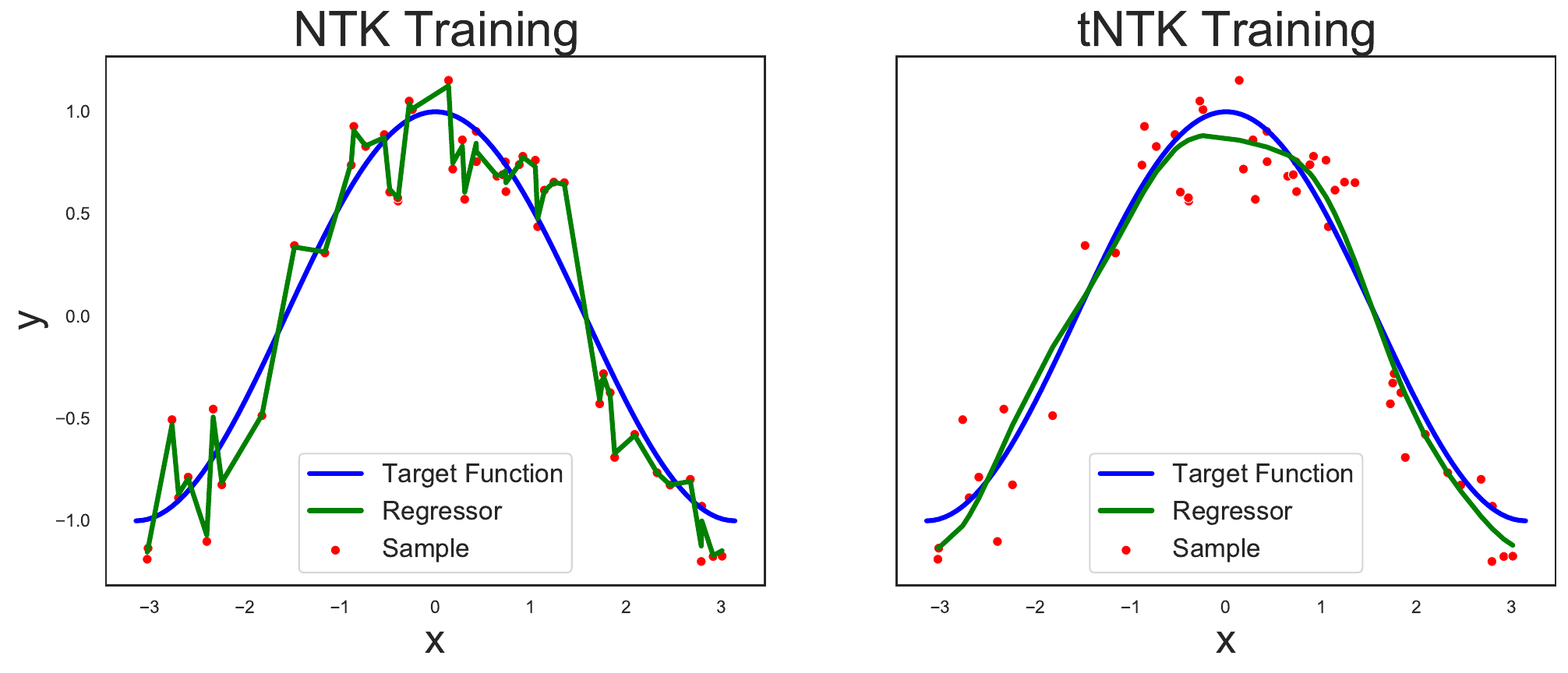}
    \caption{
    (left): NTK training; (right): tNTK training where $N=50, M=7$. $\sigma^2=0.05, \lambda=\sigma^2/N$. 
    }
    \label{figure:NTK_tNTK}
\end{figure}

\subsection{Test Error Computations}

In the following tNTK training, we set the hyperparameters as follows: 

\paragraph{Target function} We choose a simple target function $\Tilde{f}(x)=\cos x =\frac{1}{\sqrt{2}}\psi_2(x)$. Throughout the experiment, we set the noise variance $\sigma^2=0.05$.

\paragraph{Ridge} We choose $\lambda=\frac{\sigma^2}{N}$. In Figure \ref{figure:NTK_training} (left), we set $N=50, \lambda = 0.05/50$ for tNTK training; (right) we set set $\lambda = 0.05/50$ for varying $N$ from 10 to 200.

\paragraph{Error bars} In Figure \ref{figure:LK_training} (right), for each value of $N$, we run over 10 iterations of random samples and compute the test error. The error bars are shown as the difference between the upper and the lower quartiles.

\begin{figure}[ht!]
    \centering
    \includegraphics[width=1\linewidth]{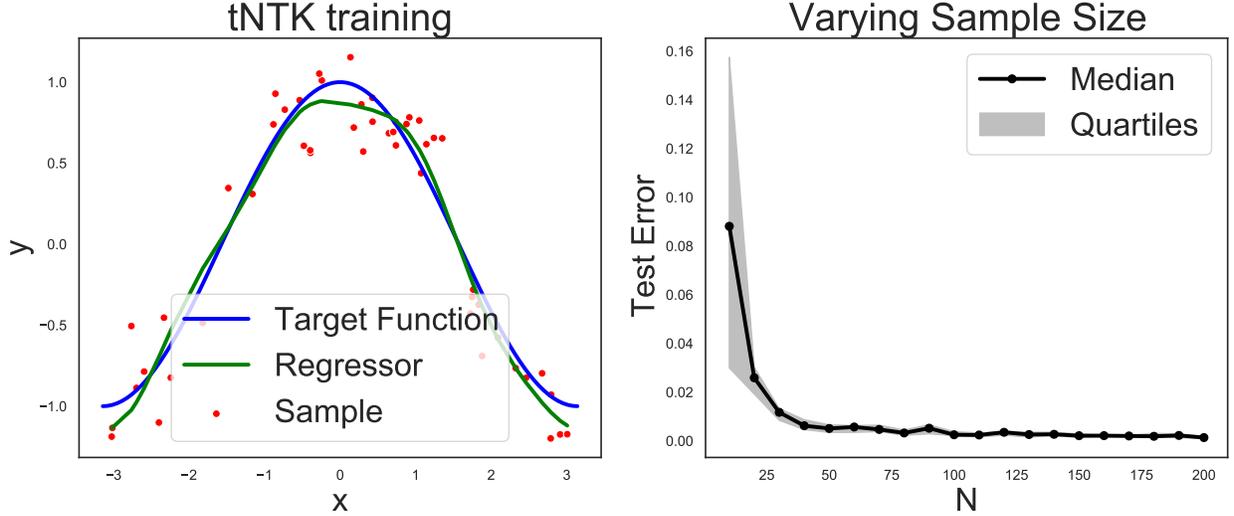}
    \caption{
    (left): tNTK training; (right): the decay of test error as $N$ varies. 
    }
    \label{figure:NTK_training}
\end{figure}

\paragraph{Lower bound} See the subsection below.

\subsection{Bound Comparison}

We continue with the experiment on the tNTK this time with varying $N$ and compare our upper bound with \cite{bach2021learning}. 

\paragraph{Upper bounds} In Figure \ref{figure:NTK_bound}, the expression of Bach's and our upper bounds are directly computed:
\begin{align*}
    \text{Bach's upper bound}&= 4\lambda\|\Tilde{f}\|_\mathcal{H}^2+ \frac{8\sigma^2R^2}{\lambda N} (1+2\log N) \\
     \text{Our upper bound without residue}&=\lambda\|\Tilde{f}\|_\mathcal{H}^2\left(1+2\sqrt{\frac{\log N}{N}}\right),
\end{align*}
where the constants $\|\Tilde{f}\|_\mathcal{H}^2$ and $R^2$ can be computed directed from the choice of kernel and target function. For simplicity reason, we drop the residue term $C_1\frac{\log N}{N}$ since it is overshadowed by the other terms and the constant $C_1$ is not optimized. 

\begin{figure}[ht!]
    \centering
    \includegraphics[width=0.95\linewidth]{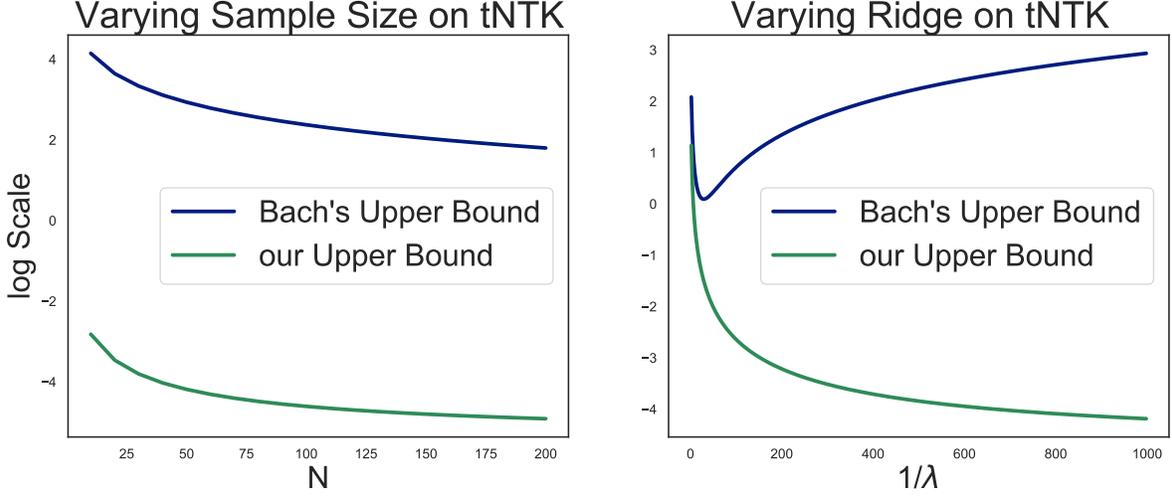}
    \caption{
    Test error bound improvement on tNTK. Same as Figure \ref{figure:bound}.
    }
    \label{figure:NTK_bound}
\end{figure}

\subsection{Legendre Kernel}

To illustrate the bounds with another finite-rank, we choose a simple legendre kernel (LK):
\begin{equation*}
    K(x,z) = \sum_{k=0}^M\lambda_kP_k(x)P_k(z)
\end{equation*}
where $P_k$ is the Legendre polynomial of degree $k$, and $\lambda_k>0$ are the eigenvalues. 

\paragraph{Eigenvalues} To better compare the Legendre kernel $K$ with the NTK, we choose $\lambda_k = C \cdot(k+1)^{-2}$ of quadratic decay such that the spectral sums are the same: $\sum_{k=0}^\infty\lambda_k = 0.5$. Hence we choose $C=0.5/\sum_{k=1}^\infty k^{-2}=\frac{3}{\pi^2}$.

\paragraph{Target function} We choose a simple target function $\Tilde{f}(x)=x^2=\third P_0(x)+\twothird P_2(x)$. Throughout the experiment, we set the noise variance $\sigma^2=0.05$.

\subsection{Test Error Computation}

\paragraph{Ridge} As before, our bound suggests that, to balance the bias and the variance with a fixed $N$, we can choose $\lambda=\frac{\sigma^2}{N}$. In Figure \ref{figure:LK_training} (left), we set $N=50, \lambda = 0.05/50$ for KRR training; (right) we set set $\lambda = 0.05/50$ for varying $N$ from 10 to 200.

\paragraph{Error bars} In Figure \ref{figure:LK_training} (right), for each value of $N$, we run over 10 iterations of random samples and compute the test error. The error bars are shown as the different between the upper and the lower quartiles. The median is taken as average. 
\begin{figure}[ht!]
    \centering
    \includegraphics[width=1\linewidth]{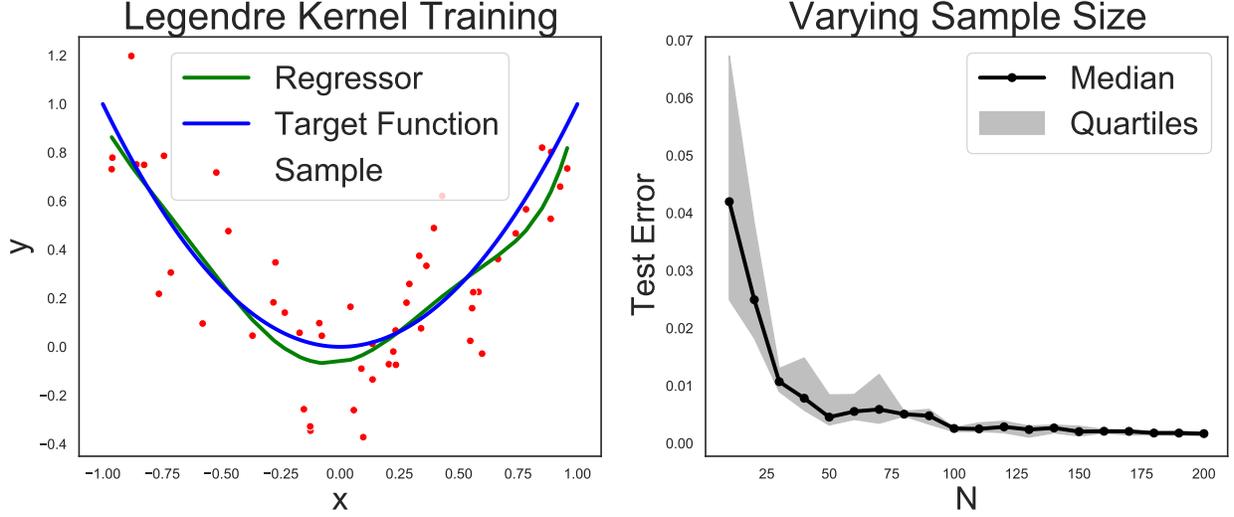}
    \caption{
    (left): LK training; (right): the decay of test error as $N$ varies. Same as Figure \ref{figure:training}.
    }
    \label{figure:LK_training}
\end{figure}

\paragraph{Upper bounds} In Figure \ref{figure:LK_bound}, the expression of Bach's and our upper bounds are directly computed:
\begin{align*}
    \text{Bach's upper bound}&= 4\lambda\|\Tilde{f}\|_\mathcal{H}^2+ \frac{8\sigma^2R^2}{\lambda N} (1+2\log N) \\
     \text{Our upper bound without residue}&=\lambda\|\Tilde{f}\|_\mathcal{H}^2\left(1+2\sqrt{\frac{\log N}{N}}\right),
\end{align*}
where the constants $\|\Tilde{f}\|_\mathcal{H}^2$ and $R^2$ can be computed directed from the choice of kernel and target function.

\begin{figure}[ht!]
    \centering
    \includegraphics[width=1\linewidth]{pictures/LK_bound_log.pdf}
    \caption{
    Test error bound improvement on LK. Same as Figure \ref{figure:bound}.
    }
    \label{figure:LK_bound}
\end{figure}

\paragraph{Lower Bound}
Last but not least, we need to show our lower bound is valid. To see this clearly, we 
need to write the bound in exact sums instead of in HKRS norm square $\|\Tilde{f}\|_\mathcal{H}^2$: namely, we compute $I$
\begin{equation} \label{equation:I_term}
    \frac{\lambda^2\lambda_M}{(\lambda_M+\lambda)^2} \|\Tilde{f}\|_\mathcal{H}^2 \leq I = \lambda^2\sum_{k=1}^M\frac{\Tilde{\gamma}_k^2}{(\lambda_k+\lambda)^2}
    \leq  \lambda \|\Tilde{f}\|_\mathcal{H}^2,
\end{equation}
instead of using the inequality (\ref{equation:I_term}) in Lemma \ref{lemma:w_bound}; and
\begin{equation} \label{equation:trace_term}
    M\frac{\lambda_M^2}{(\lambda_M+\lambda)^2} \leq \sum_{k=1}^M\frac{\lambda_k^2}{(\lambda_k+\lambda)^2} \leq M,
\end{equation}
instead of using the inequality (\ref{equation:I_term}) in Theorem \ref{theorem:test_variance_approximation_refine}.
Then we can compute our bounds as:
\begin{align*}
     \text{Our upper bound without residue}&=\lambda^2I\left(1+2\sqrt{\frac{\log N}{N}}\right)+\frac{\sigma^2}{N}\sum_{k=1}^M\frac{\lambda_k^2}{(\lambda_k+\lambda)^2}\left(1+\sqrt{\frac{\log N}{N}}\right),\\
     \text{Our lower bound without residue}&=\lambda^2I\left(1-2\sqrt{\frac{\log N}{N}}\right)+\frac{\sigma^2}{N}\sum_{k=1}^M\frac{\lambda_k^2}{(\lambda_k+\lambda)^2}\left(1-\sqrt{\frac{\log N}{N}}\right),\\
\end{align*}
and we drop the residue terms $C_1 \frac{\log N}{N}$ and $C_2 \frac{\sigma^2}{N}M\frac{\log N}{N}$ by the same reason as before. From Figure \ref{figure:upper_lower_bound0}, we can see that our bounds precisely describe the decay of the test error. Our bounds are not `bounding' the test errors in smaller instances due to the absence of the residue terms, which increases the interval of confidence of our approximation. But for larger instances, say $N>100$, all upper and lower bounds, and the averaged test error converge to the same limit. 

\begin{figure}[ht!]
    \centering
    \includegraphics[width=1\linewidth]{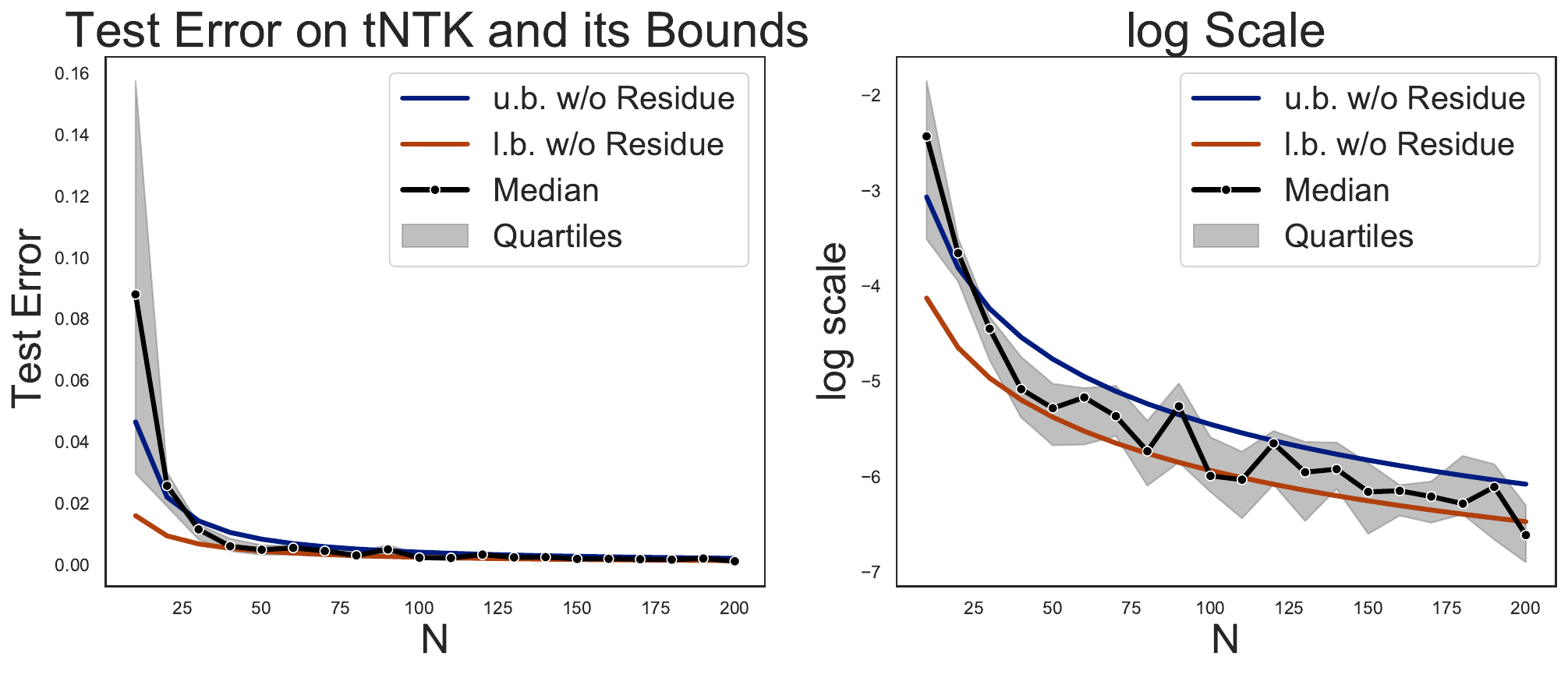}
    \includegraphics[width=1\linewidth]{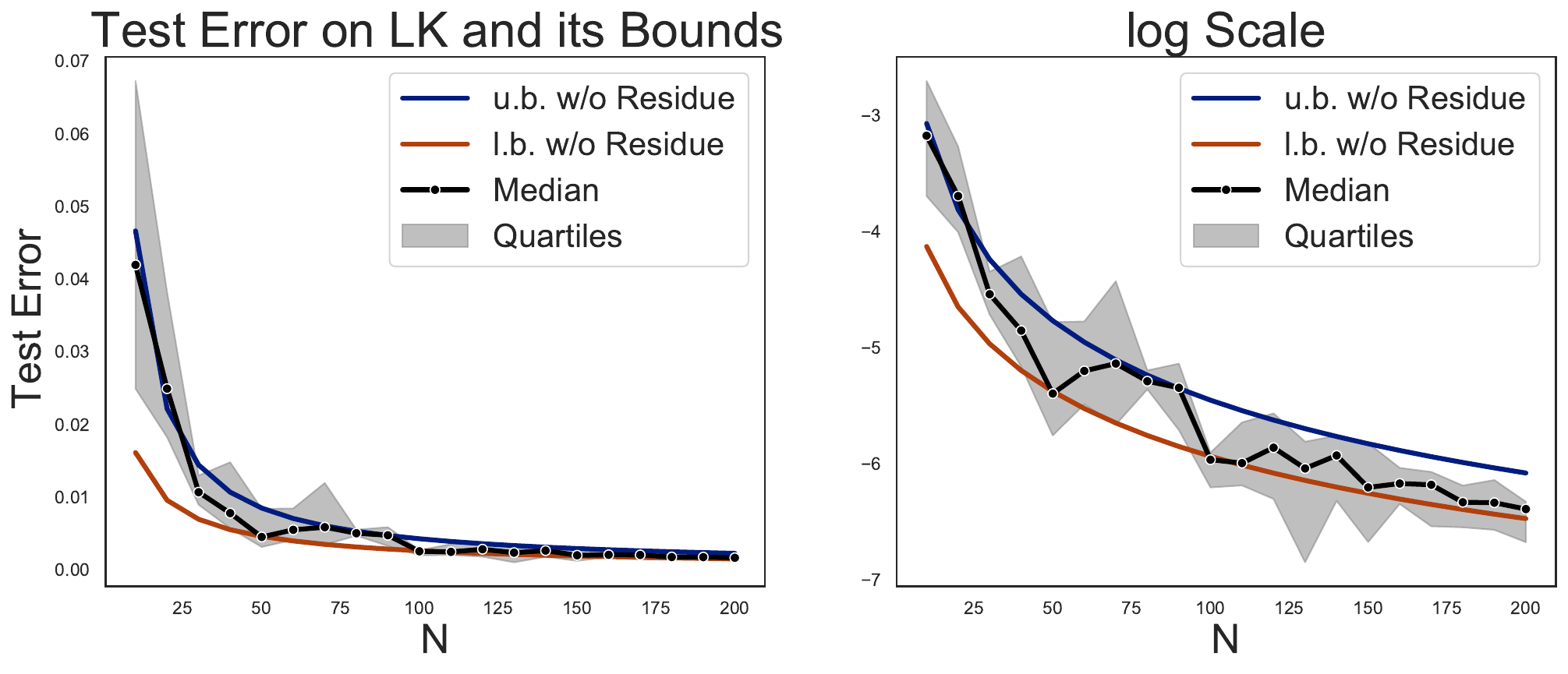}
    \caption{
    Our bounds comparing to the averaged test error with varying $N$, over 10 iterations. Same as Figure \ref{figure:upper_lower_bound0}.
    }
    \label{figure:upper_lower_bound}
\end{figure}

\end{document}